\ifpdf \usepackage[pdftex]{graphicx} \pdfcompresslevel=9
\else \usepackage[dvips]{graphicx} \fi
\newtheorem{proposition}{Proposition}[section]
\definecolor{brickred}{rgb}{0.8, 0.25, 0.33}
\definecolor{coverletter}{rgb}{0.,0.,0.} 
\newcommand{\nh}{n_{\mathrm{h}}}
\newcommand{\nw}{n_{\mathrm{w}}}
\newcommand{\np}{n_{\mathrm{p}}}
\newcommand{\nc}{n_{\mathrm{c}}}
\newcommand \R {\mathbb{R}}
\newcommand{\mean}{\operatorname{mean}} 
\newcommand{\std}{\operatorname{std}} 
\title{Scaling Painting Style Transfer}
\author[B. Galerne, L. Raad, J. Lezama \& J.-M. Morel]
{\parbox{\textwidth}{\centering Bruno Galerne$^{1,2}$,
Lara Raad$^{3}$,
José Lezama$^{3}$\thanks{José Lezama is now at Google Research. Contributed to this work while at Universidad de la República.},
and Jean-Michel Morel$^{4}$
        }
        \\
{\parbox{\textwidth}{\centering $^1$Institut Denis Poisson, Université d'Orléans, Université de Tours, CNRS \quad $^2$Institut Universitaire de France (IUF) \\
$^3$Instituto de Ingeniería El\'ectrica, Facultad de Ingeniería, Universidad de la Rep\'ublica
\quad
$^4$City University of Hong Kong
       }
}
}
\begin{document}

\teaser{
\newlength{\resultwidth}
\setlength{\resultwidth}{0.235\linewidth}
\begin{tikzpicture}[spy using outlines={rectangle, magnification=8,height=\resultwidth,width=\resultwidth, every spy on node/.append style={thick}}, every node/.style={inner sep=0,outer sep=0}]%
    \node[anchor=south west] (style) at (0,0){\includegraphics[width=0.6595982142857143
\resultwidth]{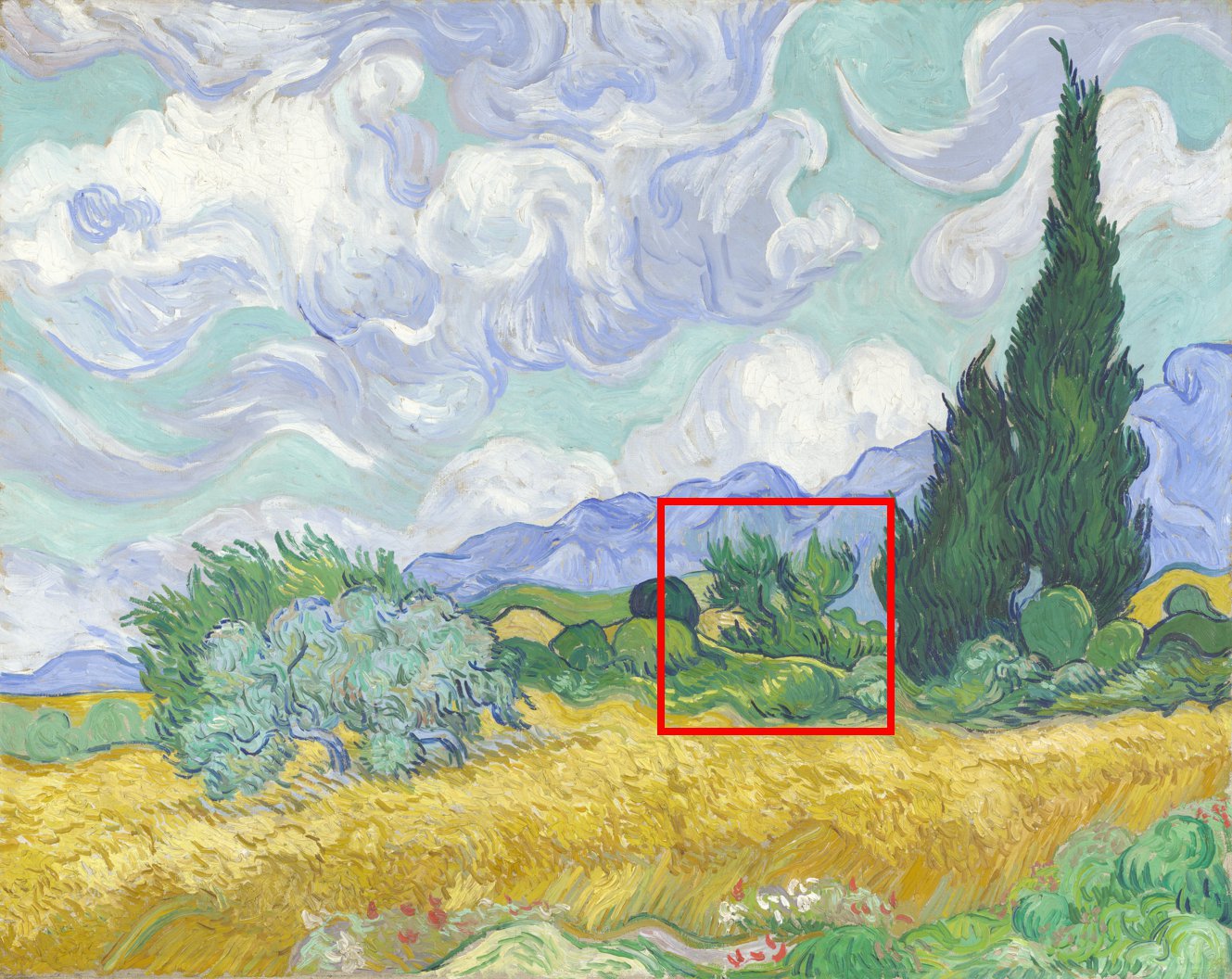}};
    \node[anchor=south west] (content) at
    (\resultwidth+0.02\linewidth,0){\includegraphics[width=\resultwidth]{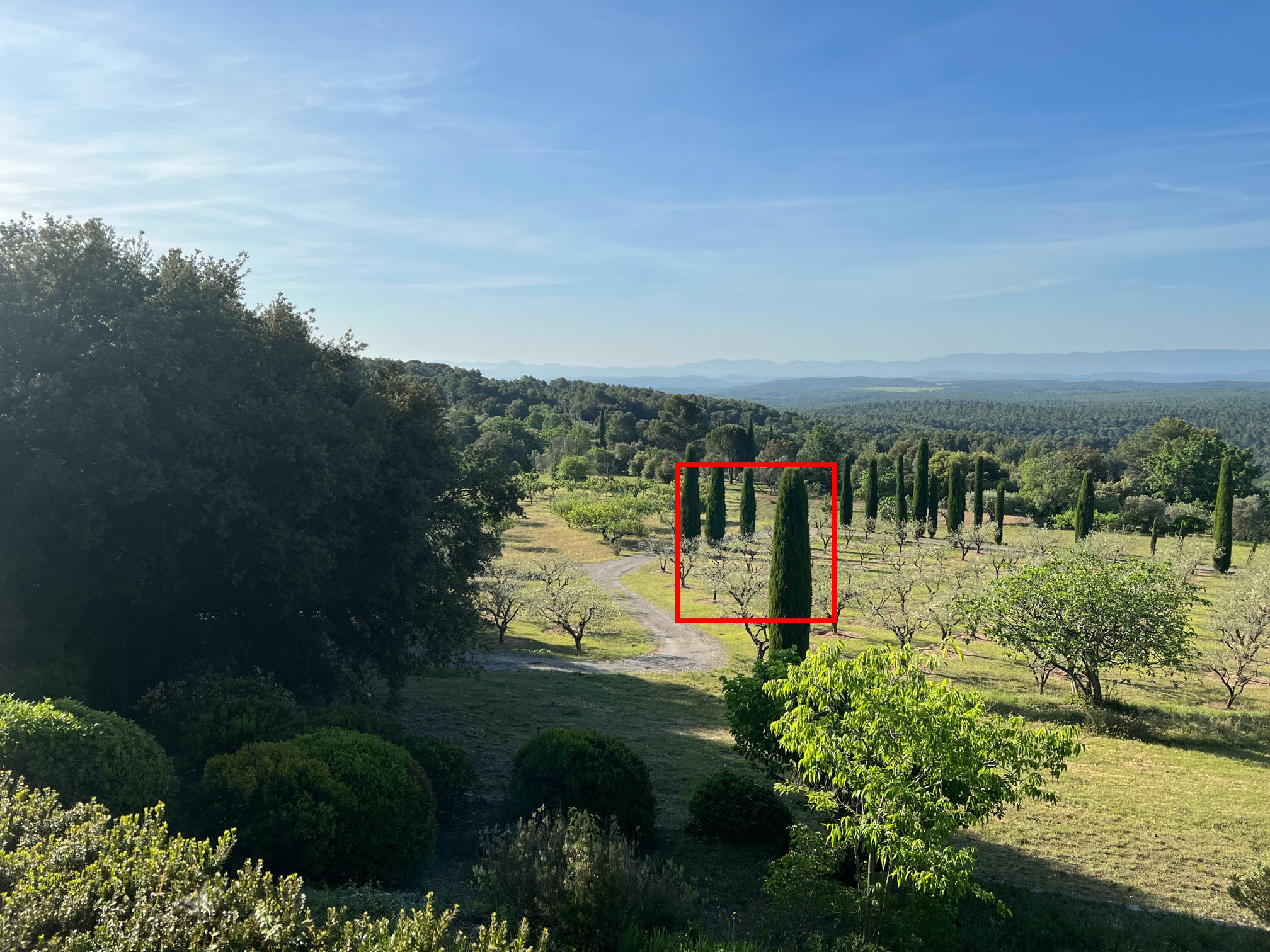}};;
    \node[anchor=south west]  (scale 1) at
    (2*\resultwidth+2*0.02\linewidth,0){\includegraphics[width=\resultwidth]{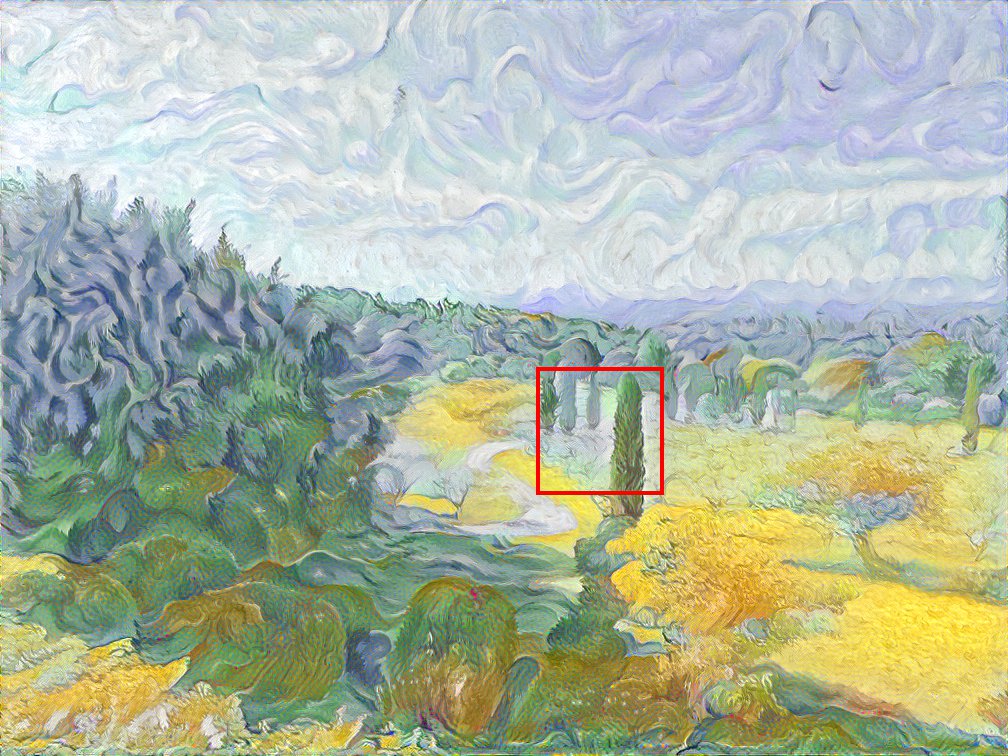}};
    \node[anchor=south west] (scale 2) at
    (3*\resultwidth+3*0.02\linewidth,0){\includegraphics[width=\resultwidth]{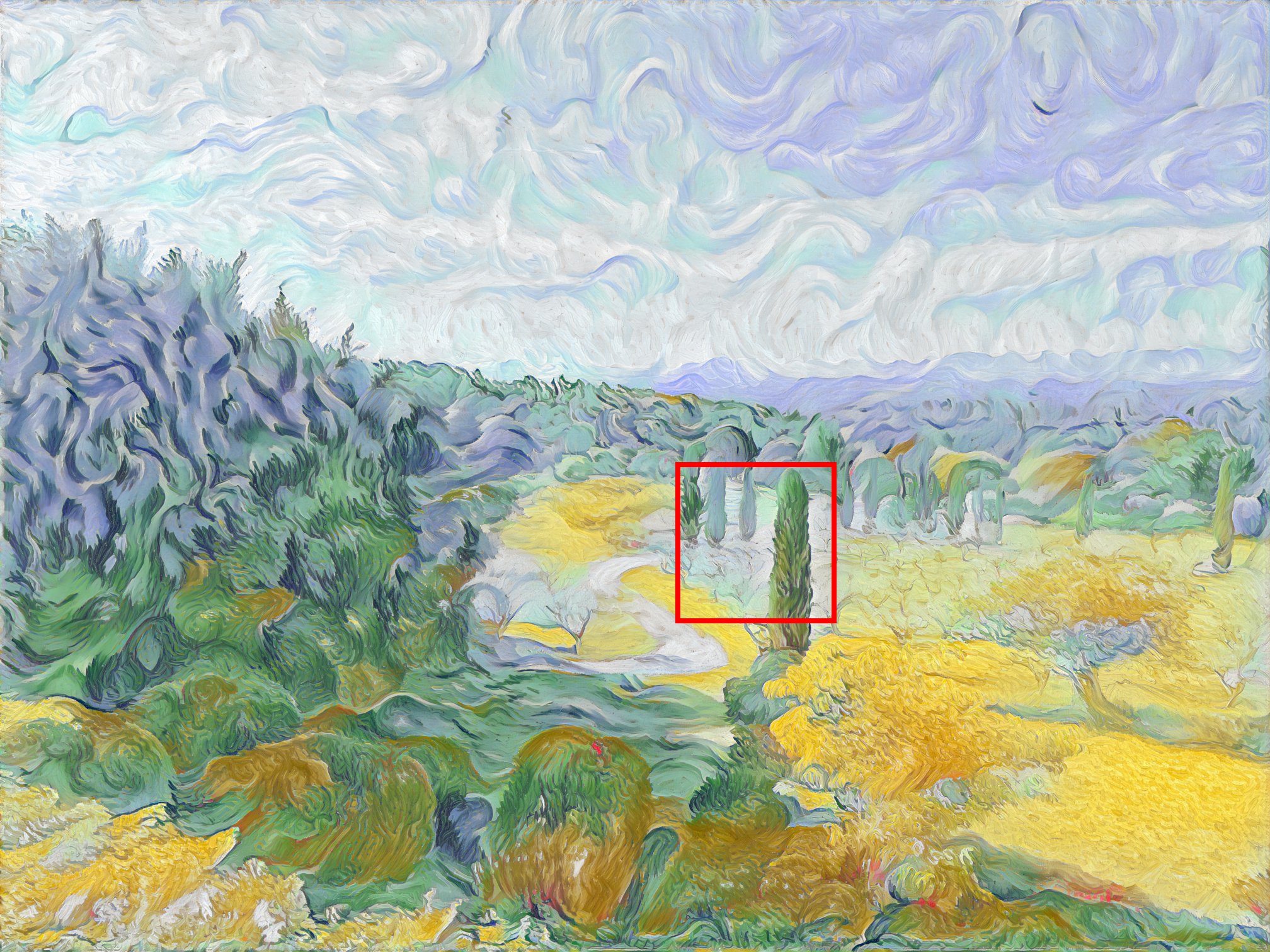}};
    \node[anchor = north west] at
    (0,-0.02\linewidth) {\includegraphics[width=\resultwidth]{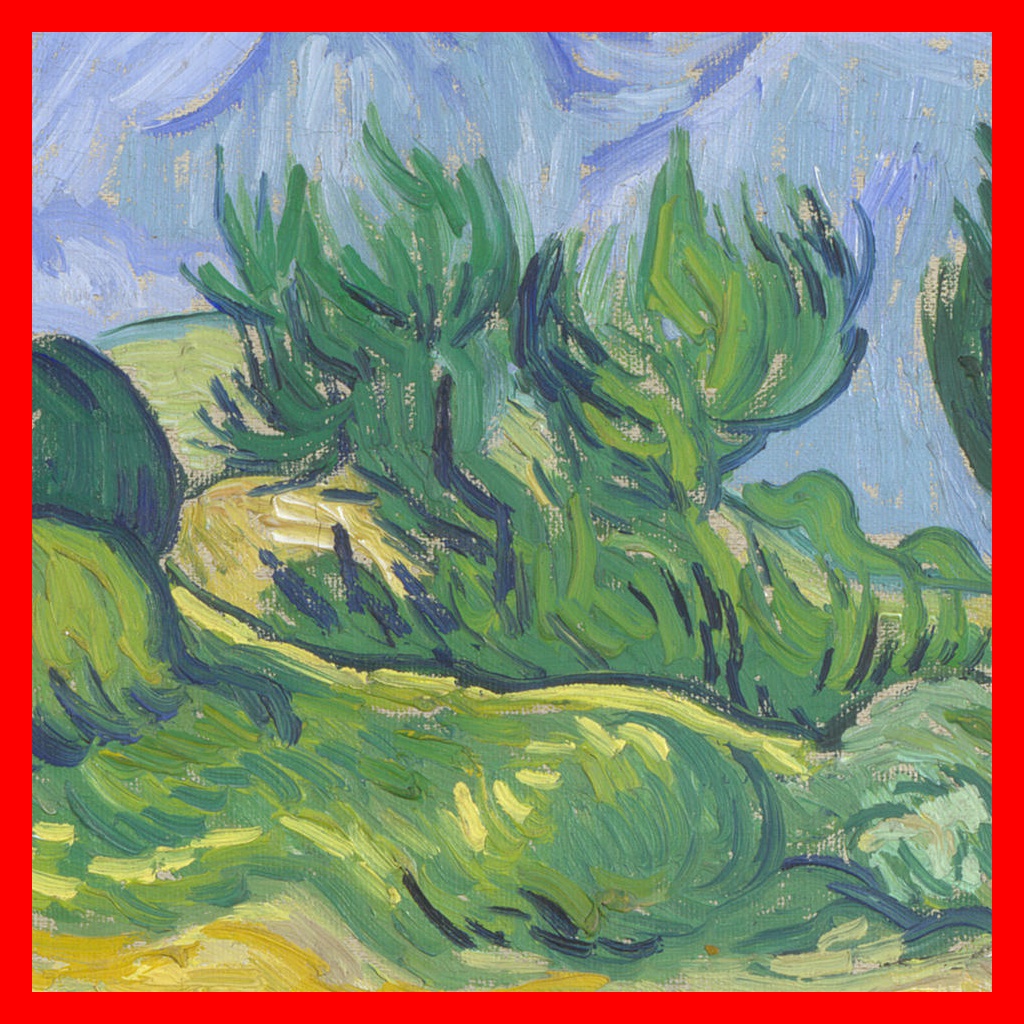}};
    \node[anchor = north west] at
    (\resultwidth+0.02\linewidth,-0.02\linewidth){\includegraphics[width=\resultwidth]{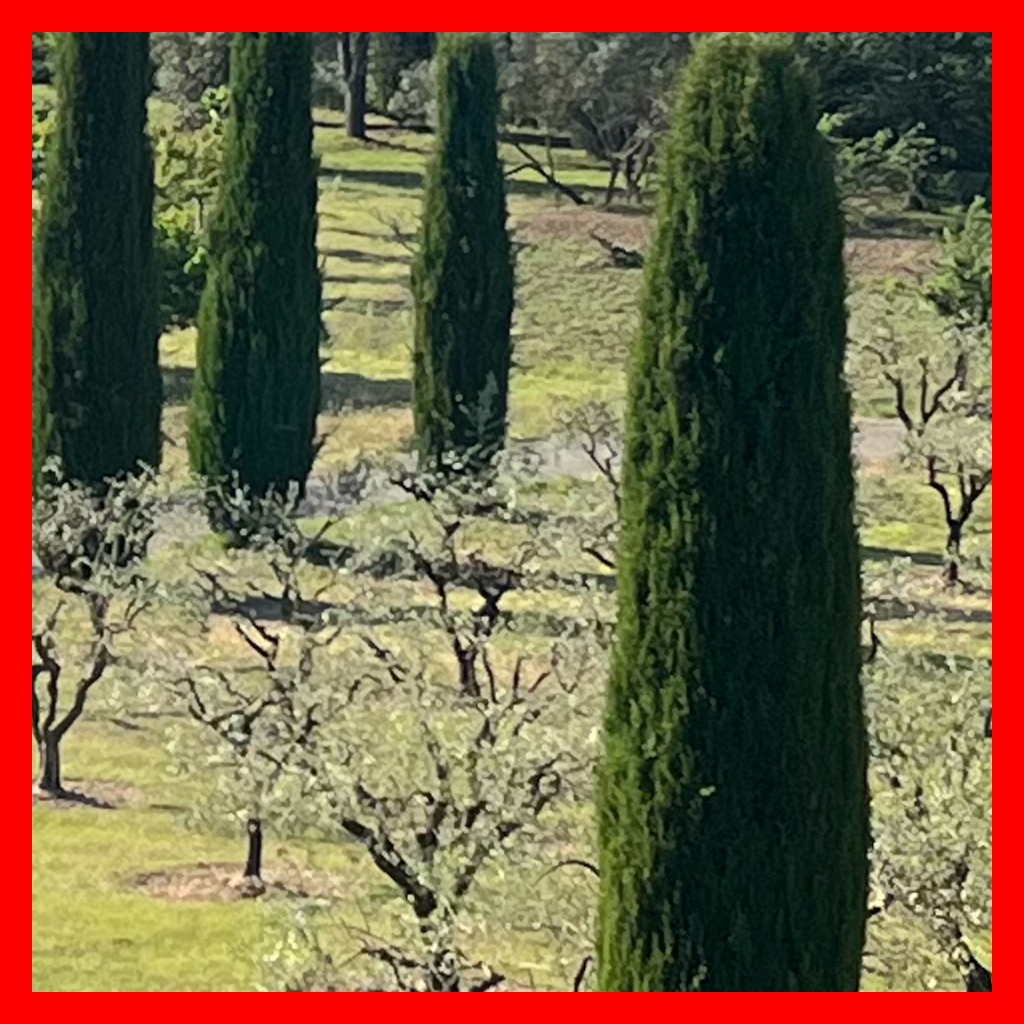}};
    \node[anchor = north west] at
    (2*\resultwidth+2*0.02\linewidth,-0.02\linewidth){\includegraphics[width=\resultwidth]{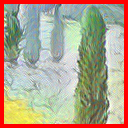}};
    \node[anchor = north west] at
    (3*\resultwidth+3*0.02\linewidth,-0.02\linewidth){\includegraphics[width=\resultwidth]{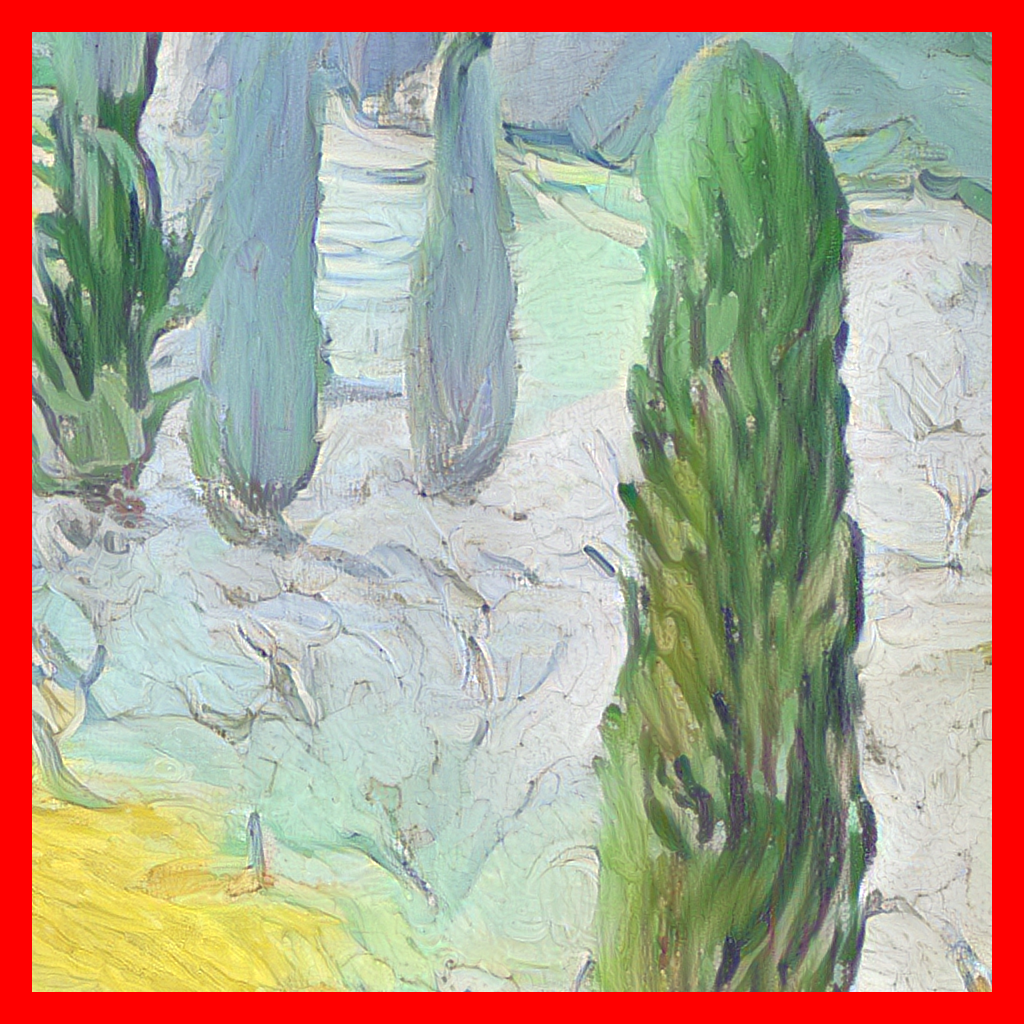}};
\end{tikzpicture}\caption{Ultra-high resolution multiscale style transfer. Top row from left to right: style (4226$\times$5319), content (6048$\times$8064), intermediary transfer at scale 1 (756$\times$1008) and final transfer at scale 4 (6048$\times$8064) (intermediary results at scale 2 (1512$\times$2016) and 3 (3024$\times$4032) are not shown).
Bottom row: zoomed in detail of size 1024$\times$1024 for the three UHR images and 128$\times$128 for the low resolution transfer at scale 1.
Our method produces a style transfer of unmatched quality for such high resolution.
It effectively conveys a pictorial aspect to the output images thanks to fine painting details such as brushstrokes, painting cracks, and canvas texture.
\label{fig:multiscale_style_transfer}}
}

\maketitle
\begin{abstract}
   Neural style transfer (NST) is a deep learning technique that produces an unprecedentedly rich style transfer from a style image to a content image. It is particularly impressive when it comes to transferring style from a painting to an image.
NST was originally achieved by solving an optimization problem to match the global statistics of the style image while preserving the local geometric features of the content image.
The two main drawbacks of this original approach is that it is computationally expensive and that the resolution of the output images is limited by high GPU memory requirements.
Many solutions have been proposed to both accelerate NST and produce images with larger size.
However, our investigation shows that these accelerated methods all compromise the quality of the produced images in the context of painting style transfer.
Indeed, transferring the style of a painting is a complex task involving features at different scales, from the color palette and compositional style to the fine brushstrokes and texture of the canvas.
This paper provides a solution to solve the original global optimization for ultra-high resolution (UHR) images, enabling multiscale NST at unprecedented image sizes.
This is achieved by spatially localizing the computation of each forward and backward passes through the VGG network.
Extensive qualitative and quantitative comparisons, as well as a \textcolor{coverletter}{perceptual study}, show that our method produces style transfer of unmatched quality for such high-resolution painting styles.
By a careful comparison, we show that state-of-the-art fast methods are still  prone to artifacts, thus suggesting that fast painting style transfer remains an open problem.
\begin{CCSXML}
<ccs2012>
   <concept>
       <concept_id>10010147.10010371.10010382</concept_id>
       <concept_desc>Computing methodologies~Image manipulation</concept_desc>
       <concept_significance>500</concept_significance>
       </concept>
 </ccs2012>
\end{CCSXML}

\ccsdesc[500]{Computing methodologies~Image manipulation}

\printccsdesc
\end{abstract}


\section{Introduction}
\label{sec:intro}

\begin{figure*}[t]
\newlength{\resultwidthfigone}
\setlength{\resultwidthfigone}{0.66\linewidth}
\newlength{\stylewidthfigone}
\setlength{\stylewidthfigone}{0.98\linewidth}
\addtolength{\stylewidthfigone}{-\resultwidthfigone}
\newlength{\squarewidthfigone}
\setlength{\squarewidthfigone}{0.32\linewidth}

\begin{tikzpicture}[spy using outlines={rectangle, magnification=5,height=0.32\linewidth,width=0.32\linewidth}, every node/.style={inner sep=0,outer sep=0}]%
\node[anchor=north west](content) at
(0,0)
{\includegraphics[width=\stylewidthfigone]{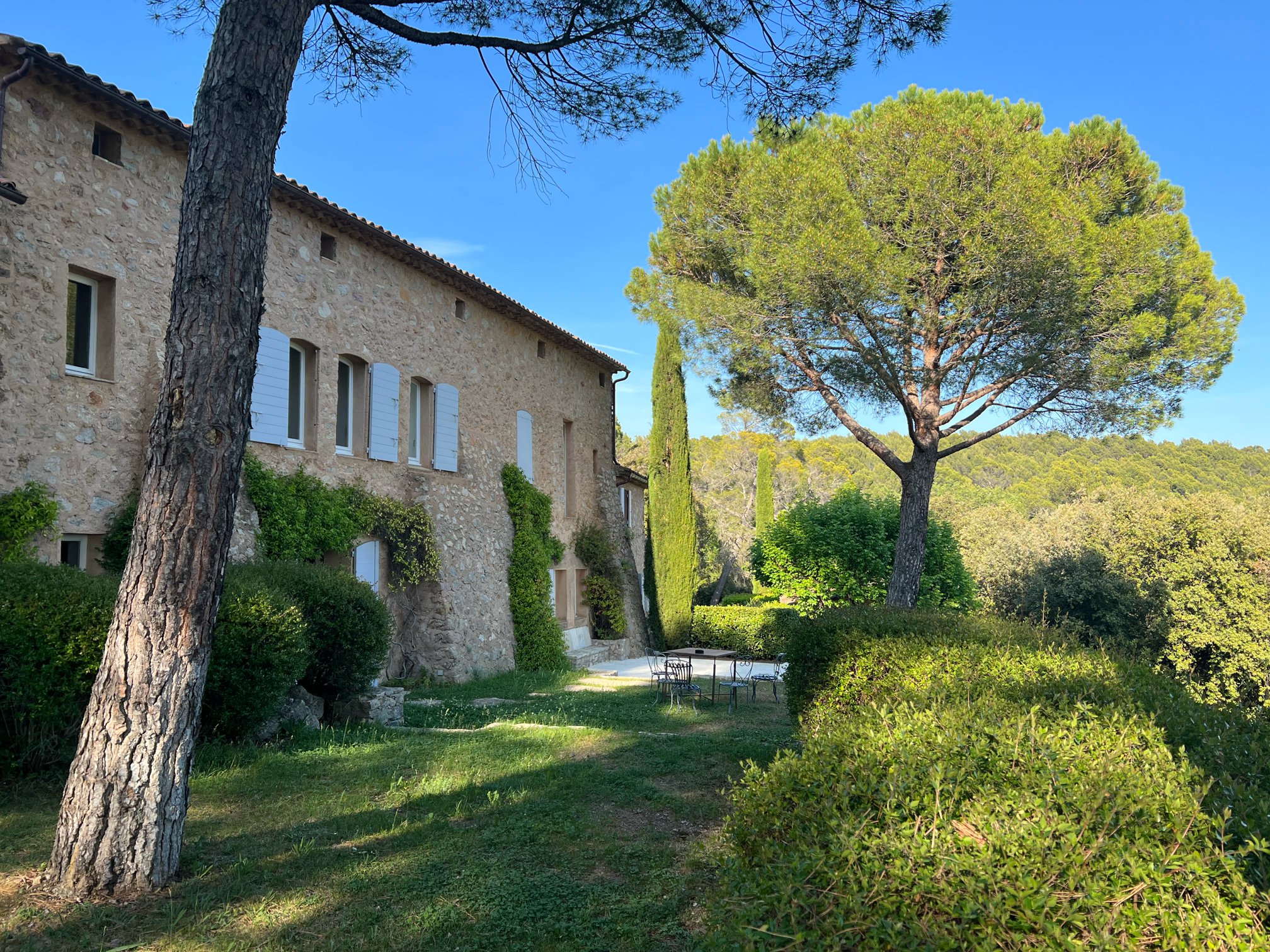}};
\node[anchor=south west](style) at
(0,-0.75*\resultwidthfigone)
{\includegraphics[width= 0.9813988095238095\stylewidthfigone]{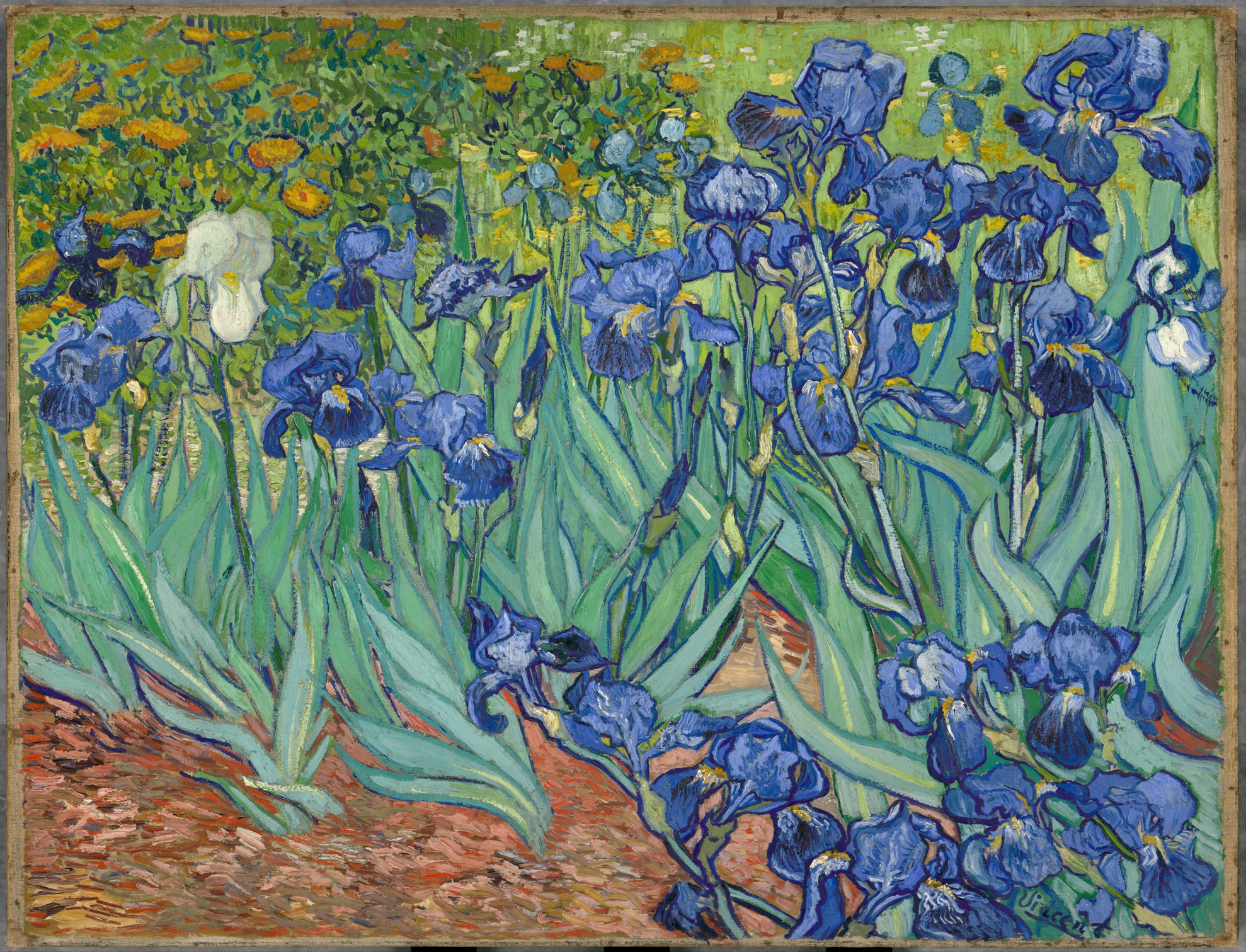}};
\node[anchor = north east] (result) at (\linewidth,0){\includegraphics[width=\resultwidthfigone]{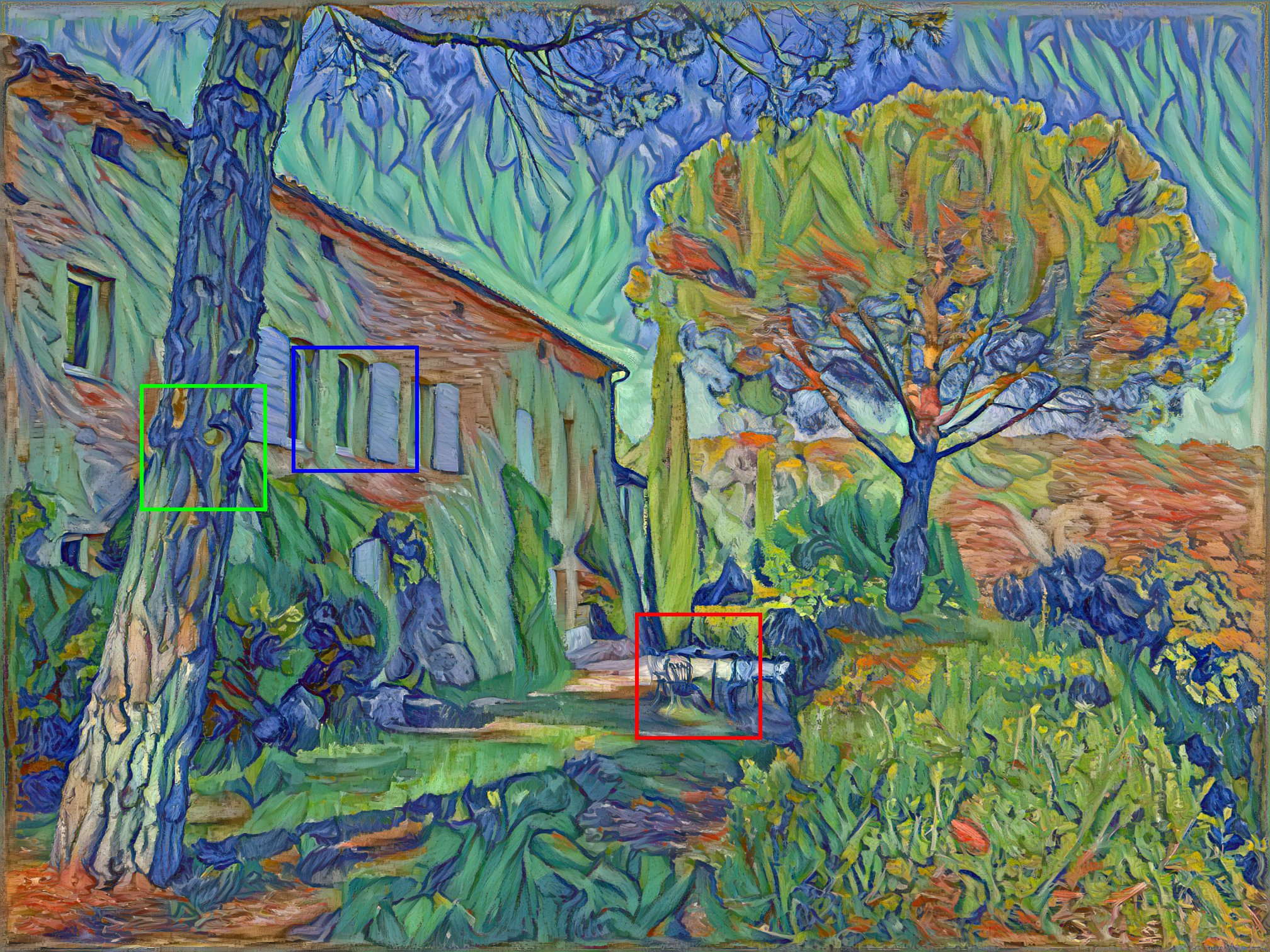}};
\node[anchor = north west] at
(0,-0.75\resultwidthfigone-0.02\linewidth) {\includegraphics[width=\squarewidthfigone]{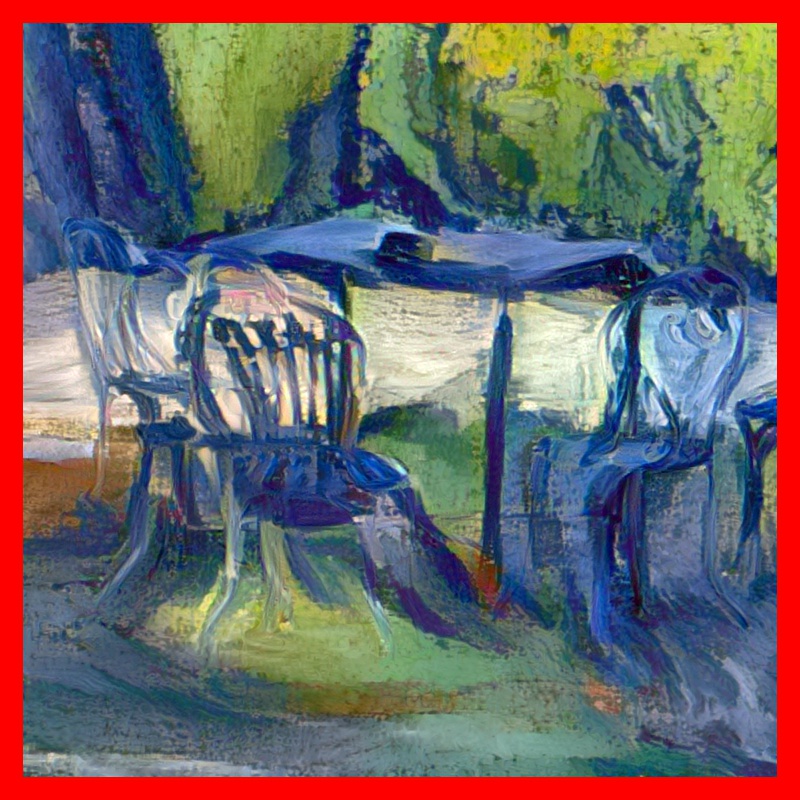}};
\node[anchor = north west] at (0.34\linewidth,-0.75\resultwidthfigone-0.02\linewidth){\includegraphics[width=\squarewidthfigone]{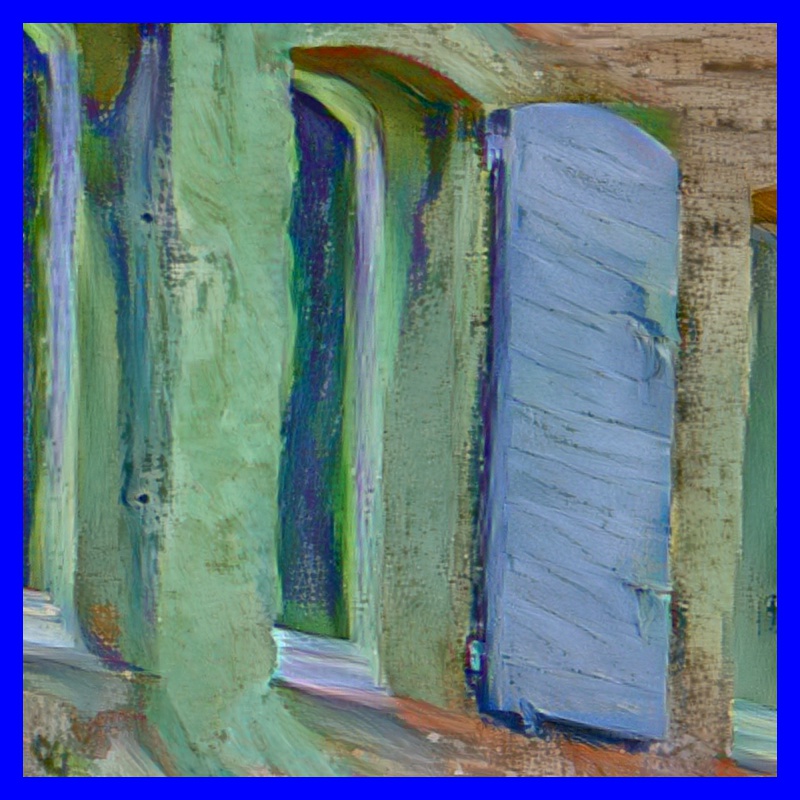}};
\node[anchor = north west] at (0.68\linewidth,-0.75\resultwidthfigone-0.02\linewidth){\includegraphics[width=\squarewidthfigone]{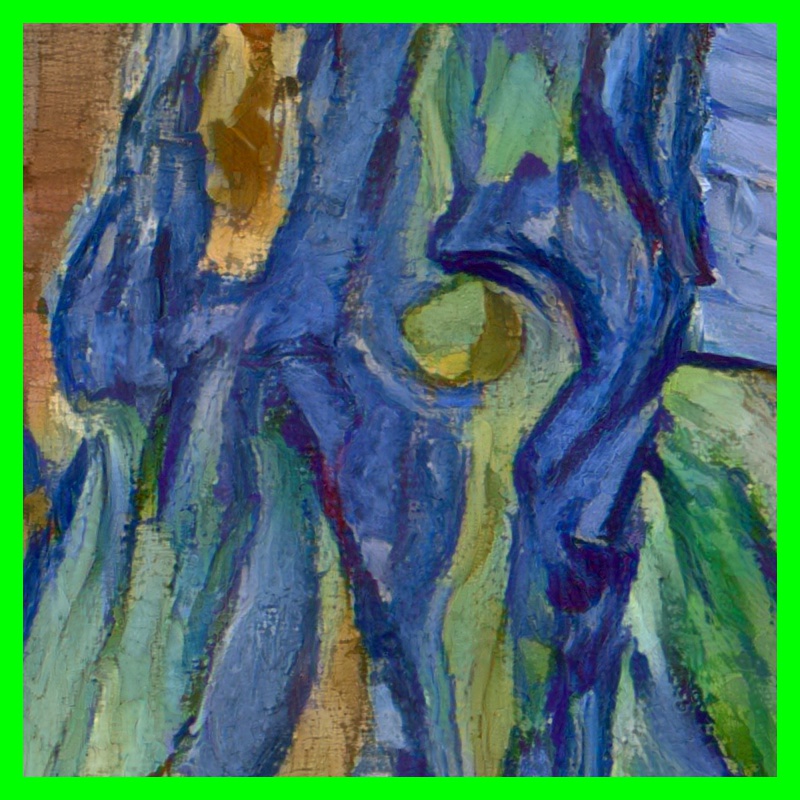}};
\end{tikzpicture}
\caption{UHR style transfer. Top row, content image (top-left, 6048$\times$8064), style image (bottom left, 6048$\times$7914),
result (right, 6048$\times$8064) (the three UHR images are downscaled $\times$4 for visualization).
Bottom row: three zoomed in details of the result image ($800^2$, true resolution). Observe how very fine details such as the chairs look as if painted.}
\label{fig:teaser}
\end{figure*}

Style transfer is an image editing strategy transferring an image style to a content image. Given style and content, the goal is to extract the style characteristics of the style and merge them to the geometric features of the content.
While this problem has a long history in computer vision and computer graphics (e.g.,~\cite{Hertzmann_etal_image_analogies_SIGGRAPH2001,Aubry_etal_fast_local_laplacian_filters_TOG2014}), it has seen a remarkable development since the seminal works of Gatys \emph{et al.} \cite{Gatys_et_al_texture_synthesis_using_CNN_2015,Gatys_et_al_image_style_transfer_cnn_cvpr2016} that introduced \emph{neural style transfer} (NST) \cite{Jing_etal_Neural_style_transfer_a_review_TVCG2020}.
These works demonstrate that the Gram matrices of the activation functions of a pre-trained VGG19 network~\cite{Simonyan_Zisserman_VGG_ICLR15} faithfully encode the perceptual style and textures of an input image.
NST is performed by optimizing a functional aiming at a compromise between fidelity to VGG19 features of the content image while reproducing the Gram matrices statistics of the style image.
Other global statistics have been proven effective for style transfer and texture synthesis~\cite{Lu_Zhu_Wu_Deepframe_AAAI2016,
Sendik_deep_correlations_texture_synthesis_SIGGRAPH2017,
Luan_etal_deep_photo_style_transfer_cvpr2017,
Vacher_etal_texture_interpolation_probing_visual_perception_NEURIPS2020,
Risser_etal_stable_and_controllable_neural_texture_synthesis_and_style_transfer_Arxiv2017,
Heitz_slices_Wassestein_loss_neural_texture_synthesis_CVPR2021,
DeBortoli_et_al_maximum_entropy_methods_texture_synthesis_SIMODS2021,
 gonthier2022high} and it has been shown that a coarse-to-fine multiscale approach allows one to reproduce different levels of style detail for images of moderate to high-resolution (HR)~\cite{Gatys_etal_Controlling_perceptual_factors_in_neural_style_transfer_CVPR2017, snelgrove2017high, gonthier2022high}.
The two major drawbacks of such optimization-based NST are the computation time and the limited resolution of images because of large GPU memory requirements.
The former limitation is more critical for the present work, since conveying the visual aspects of a painting requires multiple scales of visual detail.

Regarding computation time,
several methods have been proposed to generate new stylized images by
training feed-forward networks \cite{ulyanov2016texturenets,johnson2016Perceptual,li2016precomputed} or by training VGG encoder-decoder networks \cite{chen2016fast, Huang_arbitrary_style_transfer_real_time_ICCV2017,li2017universal,li2019learning,chiu2020iterative}.
These models tend to provide images with relatively low 
style transfer loss and can therefore be considered as approximate solutions to~\cite{Gatys_et_al_image_style_transfer_cnn_cvpr2016}. 
Despite a remarkable acceleration, these methods suffer from GPU memory limitations due to the large size of the models used for content and style characterization and are therefore limited in terms of resolution (generally limited to $1024^2$ pixels (px)).

This resolution limitation has received less attention but was recently tackled \cite{an2020real,Wang_2020_CVPR,Chen_Wang_Xie_Lu_Luo_towards_ultra_resolution_neural_style_transfer_thumbnail_instance_normalization_AAAI2022,Wang_etal_MicroAST_AAAI2023}.
{Nevertheless, although} generating UHR images (larger than 4k images), the approximate results are not able to correctly represent the style resolution. Indeed, {for some methods} to satisfy the GPU's memory limitations,  the transfer is performed locally on small patches of the content image with a zoomed out style image ($1024^2$ px)~\cite{Chen_Wang_Xie_Lu_Luo_towards_ultra_resolution_neural_style_transfer_thumbnail_instance_normalization_AAAI2022}. In other methods, the multiscale nature of the networks is not fully exploited~\cite{Wang_2020_CVPR}.

At the opposite of these machine learning based-approaches, we propose to solve the original NST optimization problem~\cite{Gatys_et_al_image_style_transfer_cnn_cvpr2016} for UHR images by introducing an exact localized algorithm.
As illustrated in Figure~\ref{fig:multiscale_style_transfer}, our UHR multiscale method manages to transfer the different levels of detail contained in the style image from the
color palette and compositional style to the fine brushstrokes and canvas texture.
The resulting UHR images look like authentic painting as can be seen in the UHR example of Figure~\ref{fig:teaser}. 

Comparative experiments show that the results of competing methods suffer from brushstroke styles that do not match those of the UHR style image, and that very fine textures are not  transferred well and are subject to local artifacts. To straighten this  visual comparison, we also introduce a qualitative and quantitative \textit{identity test} that highlights how well a given texture is being emulated.
A \textcolor{coverletter}{perceptual study} completes these experiments and confirm the superiority of our approach regarding painting style reproduction.

The main contributions of this work are summarized as follows:
\begin{itemize}
\setlength\itemsep{0em}
  \item We introduce a two-step algorithm to compute the style transfer loss gradient for UHR images that do not fit in GPU memory using localized neural feature calculation.
  \item We show that using this algorithm in a multi-scale procedure leads to a UHR style transfer for images up to 20k$^2$ px with details conveying a natural painting aspect at \textit{every scale}.
  \item Comparative experiments show that the visual quality of our UHR style transfer is by far richer and more faithful than state-of-the-art fast but approximate solutions, revealing that, in our opinion, fast UHR painting style transfer is still an open problem.
\end{itemize}

In particular, the superiority of our approach is confirmed by a blind \textcolor{coverletter}{perceptual study}.
This work provides a new reference method for high-quality  style transfer with unequaled multi-resolution depth.
It also naturally extends the state of the art for UHR texture synthesis.
The main drawback of our approach is that it remains computationally heavy, taking several minutes to produce an image.
Nevertheless, it us up to the users to define their speed vs. quality trade-off, and we believe that our algorithm can be viewed as a new gold standard for practitioners wishing to achieve the highest style transfer image quality.
Our public implementation (available at \url{https://github.com/bgalerne/scaling_painting_style_transfer}) will also allow future research on fast but approximate models to be compared with our method.

\section{Related work}
\label{sec:related-work}

\subsection{Style transfer by optimization}
As recalled in the introduction, the seminal work of Gatys \emph{et al.} formulated style transfer as an optimization problem minimizing the distances between Gram matrices of VGG features \cite{Gatys_et_al_image_style_transfer_cnn_cvpr2016}.
Other global statistics have been proven effective for style transfer and texture synthesis such as  deep correlations \cite{Sendik_deep_correlations_texture_synthesis_SIGGRAPH2017, gonthier2022high},
Bures metric~\cite{Vacher_etal_texture_interpolation_probing_visual_perception_NEURIPS2020},
spatial mean of features~\cite{Lu_Zhu_Wu_Deepframe_AAAI2016, DeBortoli_et_al_maximum_entropy_methods_texture_synthesis_SIMODS2021},
feature histograms~\cite{Risser_etal_stable_and_controllable_neural_texture_synthesis_and_style_transfer_Arxiv2017},
or even the full feature distributions~\cite{Heitz_slices_Wassestein_loss_neural_texture_synthesis_CVPR2021}. Specific cost function corrections have also been proposed for photorealistic style transfer~\cite{Luan_etal_deep_photo_style_transfer_cvpr2017}.
When dealing with HR images, a coarse-to-fine multiscale strategy has been proven efficient to capture the different levels of details present in style images~\cite{Gatys_etal_Controlling_perceptual_factors_in_neural_style_transfer_CVPR2017, snelgrove2017high, gonthier2022high}.
Style transfer by optimization has also been extended for video style transfer~\cite{ruder2016artistic} and style transfer for neural fields~\cite{Zhang_etal_arf_artistic_radiance_fields_ECCV2022}.

The original optimization approach \cite{Gatys_et_al_image_style_transfer_cnn_cvpr2016} was considered unfitted for UHR style transfer due to high memory requirements (limited to 1k$ ^2$ px images \cite{Texler_etal_arbitrary_style_transfer_using_neurally_guided_patch_synthesis_CG2020}).
This paper presents an algorithm that solves this very problem for UHR images.


\subsection{Universal style transfer (UST)}

Ulyanov \emph{et al.} \cite{ulyanov2016texturenets,Ulyanov_etal_improved_texture_networks_CVPR2017} and
Johnson \emph{et al.} \cite{johnson2016Perceptual} 
showed that feed-forward networks could be trained to approximately solve style transfer. Although these models produce a very fast style transfer, they require learning a new model for each style type, making them slower than the original optimization approach when training time is included.

Style limitation was addressed by training a VGG autoencoder that attempts to reverse VGG feature computations after normalizing them at the autoencoder bottleneck. Chen \emph{et al.} \cite{chen2016fast} introduce the encoder-decoder framework 
with a style swap layer replacing content features with the closest style features on overlapping patches. 
Huang \emph{et al.} \cite{Huang_arbitrary_style_transfer_real_time_ICCV2017} propose to use an Adaptive Instance Normalization (AdaIN) that adjusts the mean and variance of the content image features to match those of the style image.
Li \emph{et al.} \cite{li2017universal} match the covariance matrices of the content image features to those of the style image by applying whitening and coloring transforms. 
These operations are performed layer by layer and involve specific reconstruction decoders at each step.
Sheng \emph{et al.} \cite{sheng2018avatar} use one encoder-decoder block combining the transformations of two previous work~\cite{li2017universal,chen2016fast}.
Park and Lee \cite{park2019arbitrary} introduce an attention-based transformation module to integrate the local style patterns according to the spatial distribution of the content image.
Li \emph{et al.} \cite{li2019learning} train 
a symmetric encoder-decoder image reconstruction module and a transformation learning module. 
Chiu and Gurari~\cite{chiu2020iterative} extend the UST approach of Li \emph{et al.}~\cite{li2017universal} by embedding a new transformation that iteratively updates features in a cascade of four autoencoder modules.
Despite the many improvements in fast UST strategies, we remark that: (a) they rely on matching VGG statistics as introduced in~\cite{Gatys_et_al_image_style_transfer_cnn_cvpr2016}
(b) they are limited in resolution due to GPU memory required for large size models.

\subsection{UST for high-resolution images}
Some methods attempt to reduce the size of the network in order to perform high resolution style transfer. ArtNet \cite{an2020real} is a channel-wise pruned version of GoogLeNet~\cite{Szegedy_2015_CVPR}. Wang \emph{et al.} \cite{Wang_2020_CVPR} propose a collaborative distillation approach in order to compress the model by transferring the knowledge of a large network (VGG19) to a smaller one, hence reducing the number of convolutional filters used for UST~\cite{li2017universal, Huang_arbitrary_style_transfer_real_time_ICCV2017}. 
Chen \emph{et al.} \cite{Chen_Wang_Xie_Lu_Luo_towards_ultra_resolution_neural_style_transfer_thumbnail_instance_normalization_AAAI2022} proposed an UHR style transfer framework 
where the content image is divided into patches and a patch-wise style transfer is performed from a zoomed out version of the style image of size $1024^2$ px. Wang \emph{et al.} \cite{Wang_etal_MicroAST_AAAI2023} recently proposed to avoid using pre-trained convolutional deep neural networks for inference and instead train three very lightweight models, a content encoder, a style encoder, and a decoder, resulting in a  ultra-high resolution UST with very low inference time.
However, as will be shown below, the UHR style transfer results generally suffer from visual artifacts and do not faithfully convey the complexity of the style painting at all scales.

Texler \emph{et al.} \cite{Texler_etal_arbitrary_style_transfer_using_neurally_guided_patch_synthesis_CG2020} present a hybrid approach that combines neural networks and patch-based synthesis.
They first perform NST between the low-resolution versions of the content and the style images, then refine the style details using patch-based transfer at a medium resolution followed by an upscaling. By design, this approach only consider a low-resolution version of the content image and suffers from a loss of details in comparison to our method (see supp. mat.).
\textcolor{coverletter}{
In addition to style transfer, other works have addressed HR image synthesis using generative adversarial networks such as
 HR texture synthesis by tiling features in the latent space of a generative adversarial network~\cite{fruhstuck2019tilegan} as well as HR image generation using a bi-level approach~\cite{lin2021infinitygan} based on StyleGAN2~\cite{karras2020analyzing}.}


\section{Global optimization for neural style transfer}

\subsection{Single scale style transfer}
Let us recall the algorithm of Gatys \emph{et al.}~\cite{Gatys_et_al_image_style_transfer_cnn_cvpr2016}.
It solely relies on optimizing some VGG19 second-order statistics for changing the image style while maintaining some VGG19 features to preserve the content image's geometric features.
Style is encoded through Gram matrices of several VGG19 layers, namely the set
$\mathcal{L}_\mathrm{s} = \{\mathtt{ReLU\_k\_1},~k\in\{1,2,3,4,5\}\}$
while the content is encoded with a single feature layer
$L_\mathrm{c} = \mathtt{ReLU\_4\_2}$.

Given a content image $u$ and a style image $v$,
one optimizes the loss function
\begin{equation}
E_{\mathrm{transfer}}(x;(u,v))
=
E_{\mathrm{content}}(x;u)
+ E_{\mathrm{style}}(x;v)
\label{eq:gatys_loss_texture_transfer}
\end{equation}
where
$E_{\mathrm{content}}(x;u) = \lambda_\mathrm{c} \left\| V^{L_\mathrm{c}}(x) -  V^{L_\mathrm{c}}(u)\right\|^2$, with $\lambda_\mathrm{c}>0$, and
\begin{equation}
E_{\mathrm{style}}(x;v) = \sum_{L \in \mathcal{L}_\mathrm{s}} E_{\mathrm{style}}^L(x;v)
\label{eq:style_loss}.
\end{equation}
The style loss for a layer $L \in \mathcal{L}_\mathrm{s}$ is the Gram loss
\begin{equation}
E_{\mathrm{style}}^L(x;v) = w_L\left\| G^L(x) - G^L(v) \right\|^2_\mathrm{F}, \quad w_L>0,
\label{eq:gatys_gram_loss}
\end{equation}
where $\|\cdot\|_\mathrm{F}$ is the Frobenius norm, and, for an image $w$ and a layer index $L$, $G^L(w)$ denotes the Gram matrix of the VGG19 features at layer $L$:
if $V^L(w)$ is the feature response of $w$ at layer $L$ that has spatial size $\nh^L\times \nw^L$ and $\nc^L$ channels, one first reshapes $V^L(w)$ as a matrix of size $\np^L\times \nc^L$ with $\np^L=\nh^L \nw^L$ the number of feature pixels, its associated Gram matrix is the $\nc^L\times \nc^L$ matrix
\begin{equation}
G^L(w) = \frac{1}{\np^L} V^L(w)^\top V^L(w) = \frac{1}{\np^L}\sum_{k=0}^{\np^L} V^L(w)_k (V^L(w)_k)^\top,
\end{equation}
where $V^L(w)_k \in \mathbb{R}^{\nc^L}$ is the column vector corresponding to the $k$-th line of $V^L(w)$.
$E^L_{\mathrm{style}}(x;v)$ is a fourth-degree polynomial and non convex  with respect to (wrt) the VGG features $V^L(x)$.
Gatys \emph{et al.} \cite{Gatys_et_al_texture_synthesis_using_CNN_2015} propose to use the L-BFGS algorithm~\cite{Nocedal_updating_Quasi-Newton_matrices_with_limited_storage_1980} to minimize this loss, after initializing $x$ with the content image $u$.
L-BFGS is an iterative quasi-Newton procedure that approximates the inverse of the Hessian using a fixed size history of the gradient vectors computed during the last iterations.

\subsection{Gram loss correction}
\label{subsec:gram_loss_correction}

Previous works~\cite{Sendik_deep_correlations_texture_synthesis_SIGGRAPH2017,  Risser_etal_stable_and_controllable_neural_texture_synthesis_and_style_transfer_Arxiv2017, Heitz_slices_Wassestein_loss_neural_texture_synthesis_CVPR2021} have shown that optimizing the Gram loss alone may introduce some loss of contrast artifacts.
A proposed explanation is that Gram matrices encompass information regarding both the mean values and correlation of features.
While is has been shown that reproducing the full histogram of the features~\cite{Risser_etal_stable_and_controllable_neural_texture_synthesis_and_style_transfer_Arxiv2017, Heitz_slices_Wassestein_loss_neural_texture_synthesis_CVPR2021} permits to avoid this artefact, we found that simply correcting for the mean and standard deviation (std) of each feature produced visually satisfying results and is computationally simpler.

Given some (reshaped) features $V\in \R^{\np\times \nc}$, define $\mean(V)$ and $\std(V)\in \R^{\nc}$
as the spatial mean and standard deviation vectors of each feature channel.
Throughout the paper, the Gram loss $w_L\left\| G^L(u) - G^L(v) \right\|^2_\mathrm{F}$ of Eq.~\eqref{eq:gatys_gram_loss} is replaced by the following augmented style loss
\begin{equation}
\begin{aligned}
\tilde{E}_{\mathrm{style}}^L(x;v) =
& w_L\left\| G^L(u) - G^L(v) \right\|^2_\mathrm{F}\\
& + w'_L \| \mean(V^L(x)) - \mean(V^L(v)) \|^2 \\
&  + w''_L \| \std(V^L(x)) - \std(V^L(v)) \|^2
\end{aligned}
\label{eq:augmented_gatys_loss}
\end{equation}
for a better reproduction of the feature distribution.
Note that using the ``mean plus std loss'' alone was proposed in \cite{Li_etal_demystifying_neural_style_transfer_IJCAI2017} as an alternate loss for NST (see also \cite{Huang_arbitrary_style_transfer_real_time_ICCV2017}).
The values of all the weights $\lambda_\mathrm{c}$, $w_L$, $w_L'$, $w_L''$, $L \in \mathcal{L}_\mathrm{s}$,  have been fixed for all images (see the provided source code for the exact values).

Limiting our style loss $\tilde{E}_{\mathrm{style}}^L(x;v)$ to second-order statistics is capital for our localized algorithm described in Section~\ref{sec:localized_neural_transfer}.
Indeed, using more involved techniques such as slice Wasserstein distance minimization \cite{Heitz_slices_Wassestein_loss_neural_texture_synthesis_CVPR2021} is not feasible for UHR images due to prohibitive memory requirement.
The visual improvement when replacing ${E}_{\mathrm{style}}^L$ by $\tilde{E}_{\mathrm{style}}^L$ is illustrated in the supp. mat.

\subsection{Limited resolution}
Unfortunately, applying this Gatys \emph{et al.} algorithm off-the-shelf with UHR images is not possible in practice for images of size larger than 4000 px, even with a high-end GPU.
The main limitation comes from the fact that differentiating the loss $E_{\mathrm{transfer}}(x;(u,v))$ wrt $x$ requires fitting into memory $x$ and all its intermediate VGG19 features. While this requires a moderate 2.61 GB for a $1024^2$ px image, it requires $10.2$ GB for a $2048^2$ while scaling up to $4096^2$ is not feasible with a 40 GB GPU.
In the next section we describe a practical solution to overcome this limitation.

\section{Localized style transfer loss gradient}
\label{sec:localized_neural_transfer}

As mentioned in the introduction, our main contribution is to emulate the computation of
\begin{equation}
\nabla_x E_{\mathrm{transfer}}(x;(u,v))
\end{equation}
even for images larger than $4000^2$ px for which evaluation and automatic differentiation of the loss is not feasible due to large memory requirements.

We first discuss how one can compute neural features in a localized way and straightforwardly compute the style transfer loss using a spatial partition of the image.
Then, we demonstrate that this approach allows for the exact computation of the loss gradient using a two-pass procedure.

\subsection{Localized computation of neural features}

First suppose one wants to compute the feature maps $V^L(x)$, $L \in \mathcal{L}_\mathrm{s}\cup\{L_\mathrm{c}\}$, of an UHR image $x$.
The natural idea developed here is to compute the feature maps piece by piece, by partitioning the input image $x$ into small images of size $512^2$, that we will call blocks.
This approach will work up to boundary issues.
Indeed, to compute exactly the feature maps of $x$ one needs the complete receptive field centered at the pixel of interest.
Hence, each block of the partition must be extracted with a margin area, except on the sides that are actual borders for the image $x$.
In all our experiments we use a margin of width $256$ px in the image domain.

This localized way to compute features allows one to compute global feature statistics such as Gram matrices and means and stds vectors. Indeed, these statistics are all spatial averages that can be aggregated block by block by adding sequentially the contribution of each block.
Hence, this easy to implement procedure allows one to compute the value of the loss $E_{\mathrm{transfer}}(x;(u,v))$ (see Equation~\eqref{eq:gatys_loss_texture_transfer}).
However, in contrast with standard practice,
it is \emph{not} possible to automatically differentiate this loss wrt $x$,
because the computation graph linking back to $x$ has been lost.

\subsection{Localized gradient given global statistics}

\begin{figure*}
\includegraphics[width=\linewidth]{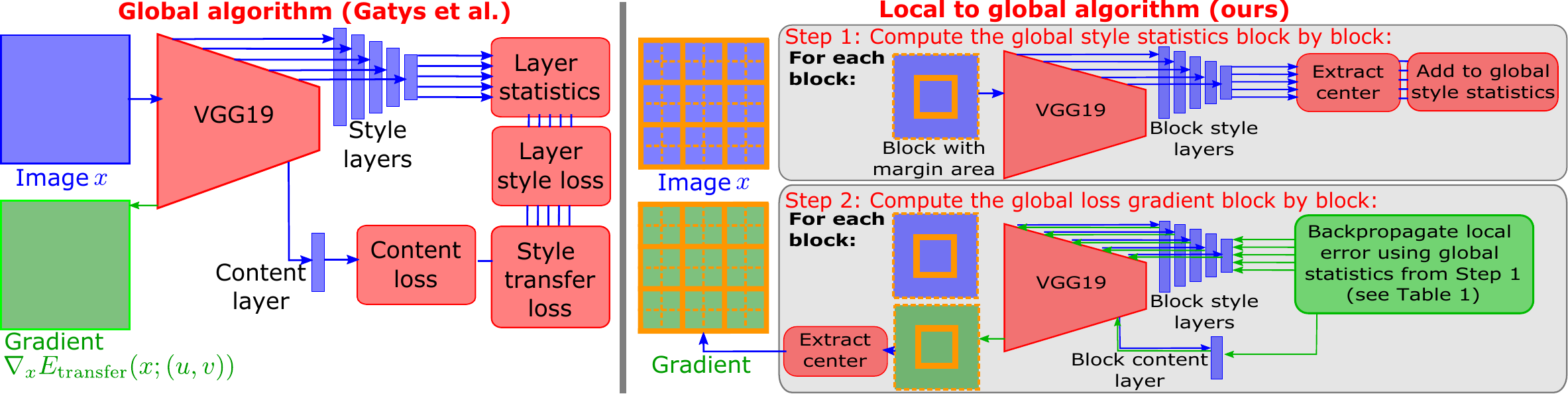}
\caption{Algorithm overview: Our localized algorithm (right part) allows to compute the global style transfer loss and its gradient wrt $x$ for images that are too large for the original algorithm of \cite{Gatys_et_al_image_style_transfer_cnn_cvpr2016} (left part). See Algorithm~\ref{alg:localized_style_transfer_loss_gradient} for the fully detailed procedure.}
\label{fig:algorithm_overview}
\end{figure*}


\begin{table*}
\begin{tabular}{@{}p{0.24\linewidth}p{0.32\linewidth}p{0.38\linewidth}@{}}
\toprule
Global statistics & Feature loss  Expression & Gradient wrt the feature \\
\midrule
Raw features $V$
&
MSE: $E(V) = \|V- V_\mathrm{ref}\|^2$
&
$\nabla_V E(V) = 2(V-V_\mathrm{ref})$
\\[0.45em]

Gram matrix: \textcolor{coverletter}{$G=\frac{1}{\np}V^\top V$}
&
Gram loss:
$E(V) = \left\| G - G_\mathrm{ref} \right\|^2_\mathrm{F}$
&
$\nabla_V E(V) = \frac{4}{\np} V (G - G_\mathrm{ref})$
\\[0.45em]

Feature mean: $\mean(V)$
&
Mean loss:
$E(V) = \| \mean(V)  - \mu_\mathrm{ref}\|^2$
&
$(\nabla_V E(V))_k = \frac{2}{\np} (\mean(V)  - \mu_\mathrm{ref})$
\\[0.45em]

Feature std: $\std(V)$
&
Std loss:
$E(V) = \| \std(V)  - \sigma_\mathrm{ref}\|^2$
&
$\nabla_V E(V)_{k,j} = \frac{2}{\np} (V_{k,j}-(\mean(V))_j) \frac{(\std(V))_j-\sigma_{\mathrm{ref},j}}{(\std(V))_j}$
\\
\bottomrule
\end{tabular}%
\caption{Expression of the feature loss gradient wrt a generic feature $V$ having $\np$ pixels and $\nc$ channels (matrix size $\np\times \nc$).}%
\label{table:feature_losses_gradients}%
\end{table*}

A close inspection of the different style losses wrt the neural features shows that they all have the same form: For each style layer $L\in\mathcal{L}_\mathrm{s}$,
the gradient of the layer style loss $\tilde{E}_{\mathrm{style}}^L(x;v)$ wrt the layer feature $V^L(x)_k \in \mathbb{R}^{\nc^L}$ at some pixel location $k$ only depends on the local value $V^L(x)_k$ and on some difference between the global statistics (Gram matrix, spatial mean, std) of $V^L(x)$ and the corresponding ones from the style layer $V^L(v)$.

\begin{proposition}[Locality of style loss gradient]
Given the layer global statistics values,
the gradient of the layer style loss $\tilde{E}_{\mathrm{style}}^L(x;v)$ wrt the layer feature $V^L(x)\in \mathbb{R}^{\nc^L}$ is local: The gradient value at location $k$ only depends on the feature $V^L(x)_k$ at the same location $k$.
\end{proposition}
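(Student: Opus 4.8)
The plan is to verify the claim by writing the style loss as a sum of scalar functions that each depend on the global statistics (treated as fixed parameters) together with a single feature column, and then differentiating term by term. First I would fix a style layer $L \in \mathcal{L}_\mathrm{s}$ and abbreviate $V = V^L(x)$, $\np = \np^L$, $\nc = \nc^L$, so that $V \in \R^{\np \times \nc}$ has rows $V_k \in \R^{\nc}$, $k = 1, \dots, \np$. By Eq.~\eqref{eq:augmented_gatys_loss}, $\tilde E_{\mathrm{style}}^L(x;v)$ is the weighted sum of three pieces: the Gram term $w_L \| G^L(x) - G^L(v) \|_\mathrm{F}^2$, the mean term $w_L' \| \mean(V) - \mean(V^L(v)) \|^2$, and the std term $w_L'' \| \std(V) - \std(V^L(v)) \|^2$. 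The quantities $G^L(v)$, $\mean(V^L(v))$, $\std(V^L(v))$ are constants (they depend only on the style image), and the statement assumes the layer global statistics of $x$, namely $G^L(x)$, $\mean(V)$, $\std(V)$, are also given. So I would treat these as frozen and compute $\partial \tilde E_{\mathrm{style}}^L / \partial V_{k,j}$.

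The key step is then the explicit differentiation of each of the three terms, which is exactly the content of Table~\ref{table:feature_losses_gradients}. For the Gram term, since $G^L(x) = \frac{1}{\np} V^\top V = \frac{1}{\np} \sum_{\ell} V_\ell V_\ell^\top$, only the summand indexed by $\ell = k$ involves $V_k$; differentiating $\| G - G_\mathrm{ref} \|_\mathrm{F}^2$ and substituting the frozen value $G = G^L(x)$ yields $\frac{4}{\np} V_k (G^L(x) - G^L(v))$, which depends on the index $k$ only through $V_k$. Similarly $\mean(V)_j = \frac{1}{\np} \sum_\ell V_{\ell,j}$ contributes $\frac{2}{\np}(\mean(V)_j - \mean(V^L(v))_j)$ at every location, a quantity independent of $k$ once the statistic is frozen; and the std term, using $\std(V)_j^2 = \frac{1}{\np}\sum_\ell (V_{\ell,j} - \mean(V)_j)^2$, contributes $\frac{2}{\np}(V_{k,j} - \mean(V)_j) \frac{\std(V)_j - \std(V^L(v))_j}{\std(V)_j}$, whose dependence on $k$ is again only through $V_{k,j}$ (and through the frozen scalars $\mean(V)_j$, $\std(V)_j$). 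Summing the three weighted contributions, the gradient entry $(\nabla_{V} \tilde E_{\mathrm{style}}^L)_{k,j}$ is a fixed function — determined by the global statistics and the layer weights — evaluated at $V_k$. Hence the gradient at location $k$ depends on $V = V^L(x)$ only through the single column $V_k$, which is precisely the locality claim.

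I expect the only delicate point to be bookkeeping the freezing of the global statistics: one must be careful that the proposition is a statement about the gradient \emph{as a function of the feature map with the statistics held fixed}, not about the full gradient of $\tilde E_{\mathrm{style}}^L$ where $G^L(x)$, $\mean(V)$, $\std(V)$ themselves depend on all of $V$. In other words, the claim is that each of the gradient formulas in Table~\ref{table:feature_losses_gradients} is what one obtains by substituting the (already computed) global statistics and then reading off the partial derivative, and that formula happens to be local in $k$. This is exactly the structure exploited by the two-pass localized algorithm: the first pass aggregates the global statistics block by block, and the second pass uses the above local formulas to assemble the feature-space gradient block by block. The remaining routine calculations are the three differentiations themselves, which are standard and already tabulated.
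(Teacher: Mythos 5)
Your proposal is correct and follows essentially the same route as the paper's own proof: decompose $\tilde{E}_{\mathrm{style}}^L$ into its Gram, mean, and std terms and observe that, with the global statistics held fixed, each gradient formula in Table~\ref{table:feature_losses_gradients} depends on location $k$ only through $V_k$. You simply make explicit the differentiations and the ``freezing'' of the statistics that the paper delegates to the table, which is a welcome but not substantively different elaboration.
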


\begin{proof}
Recall from Equation~\eqref{eq:augmented_gatys_loss} that
$\tilde{E}_{\mathrm{style}}^L(x;v)$ is a linear combination of the Gram, mean, and std losses.
As shown in Table~\ref{table:feature_losses_gradients}, given the global statistics, each of these losses satisfies the local property.
\end{proof}

\begin{figure*}[t]
\newlength{\gradbbheight}
\setlength{\gradbbheight}{0.123\linewidth}
\begin{tabular}{@{}cccccc@{}}
Style & Content & Global gradient & Localized gradient &  Blocks without margin & \\
\includegraphics[height=\gradbbheight]{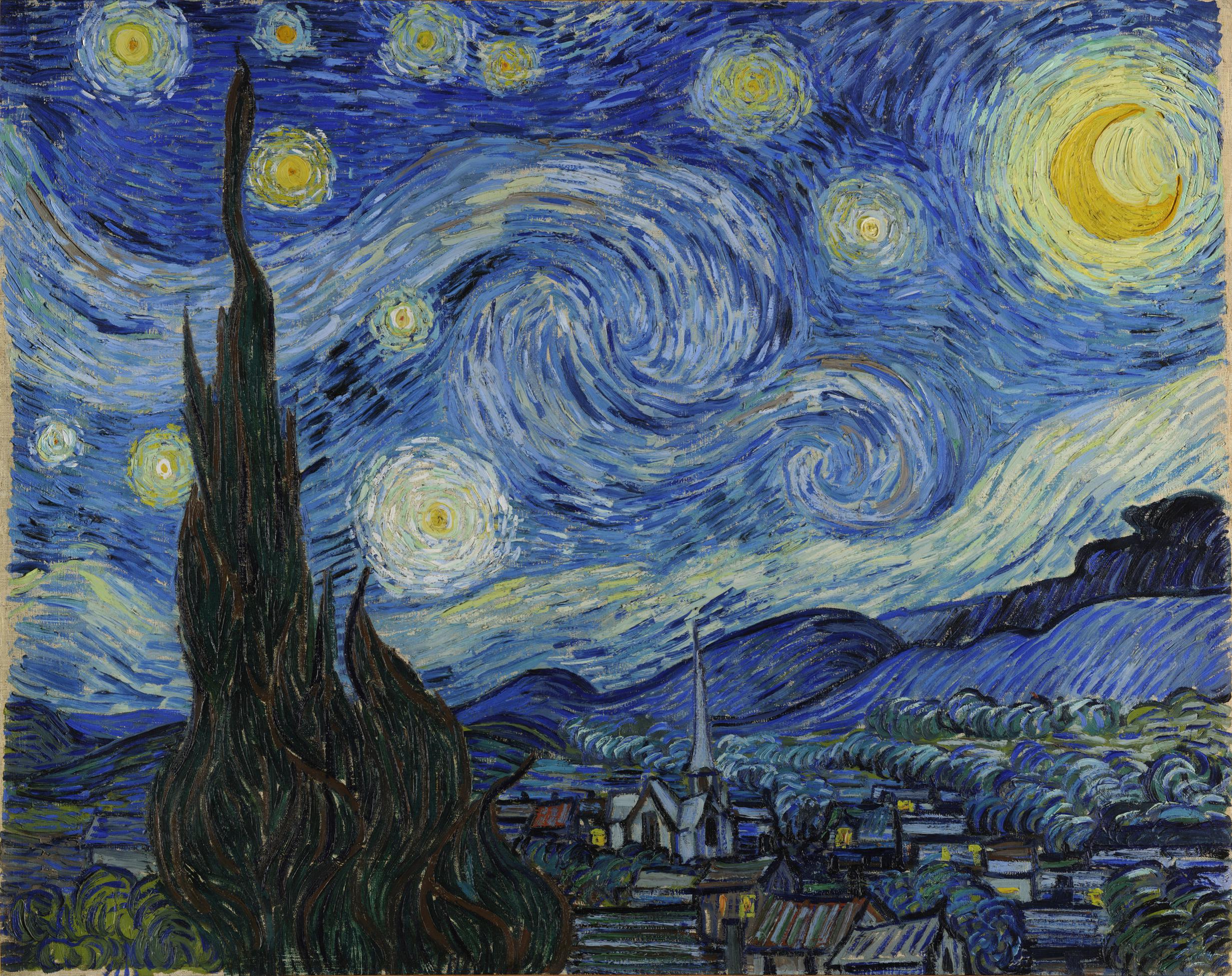}
&
\includegraphics[height=\gradbbheight]{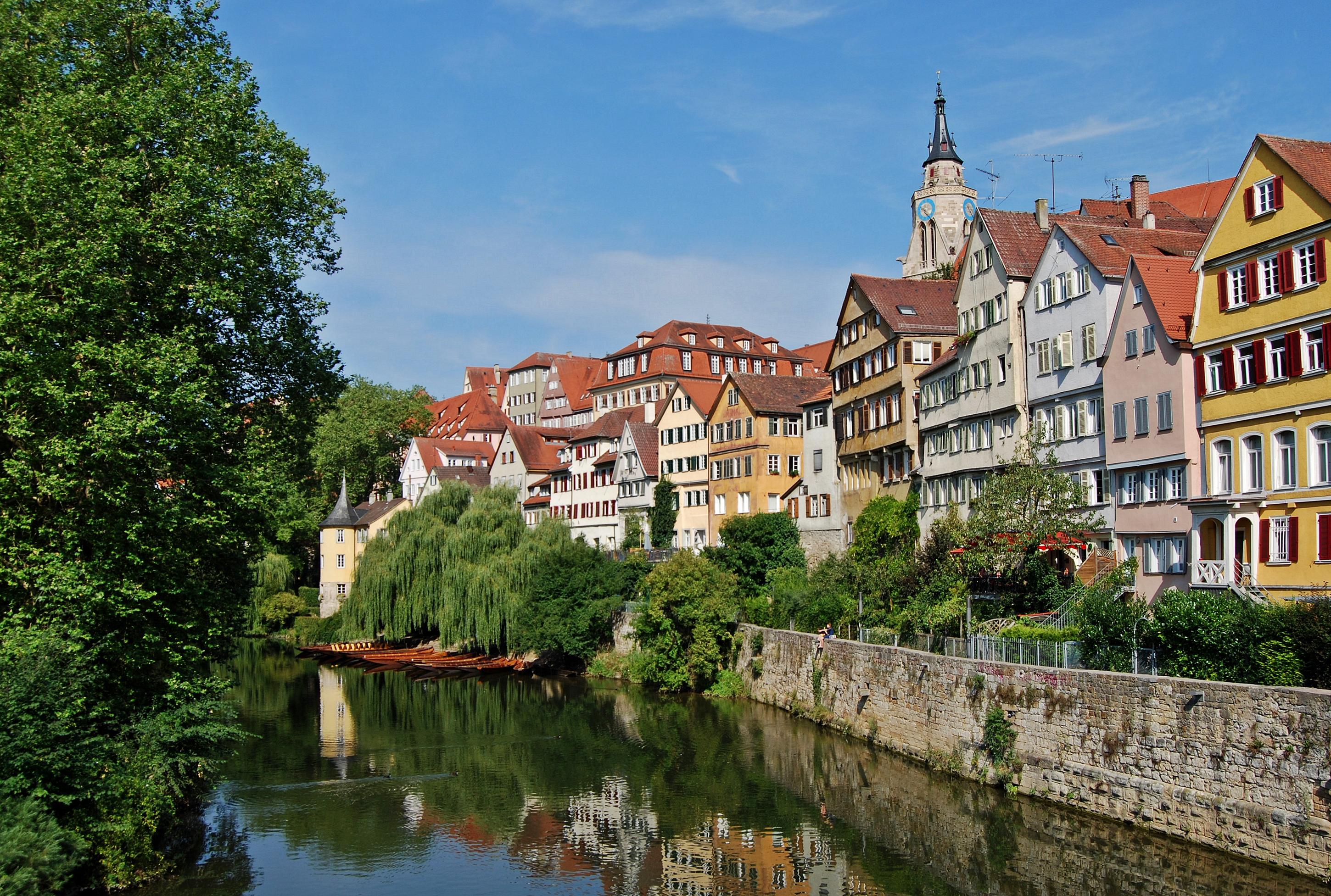}
&
\includegraphics[height=\gradbbheight]{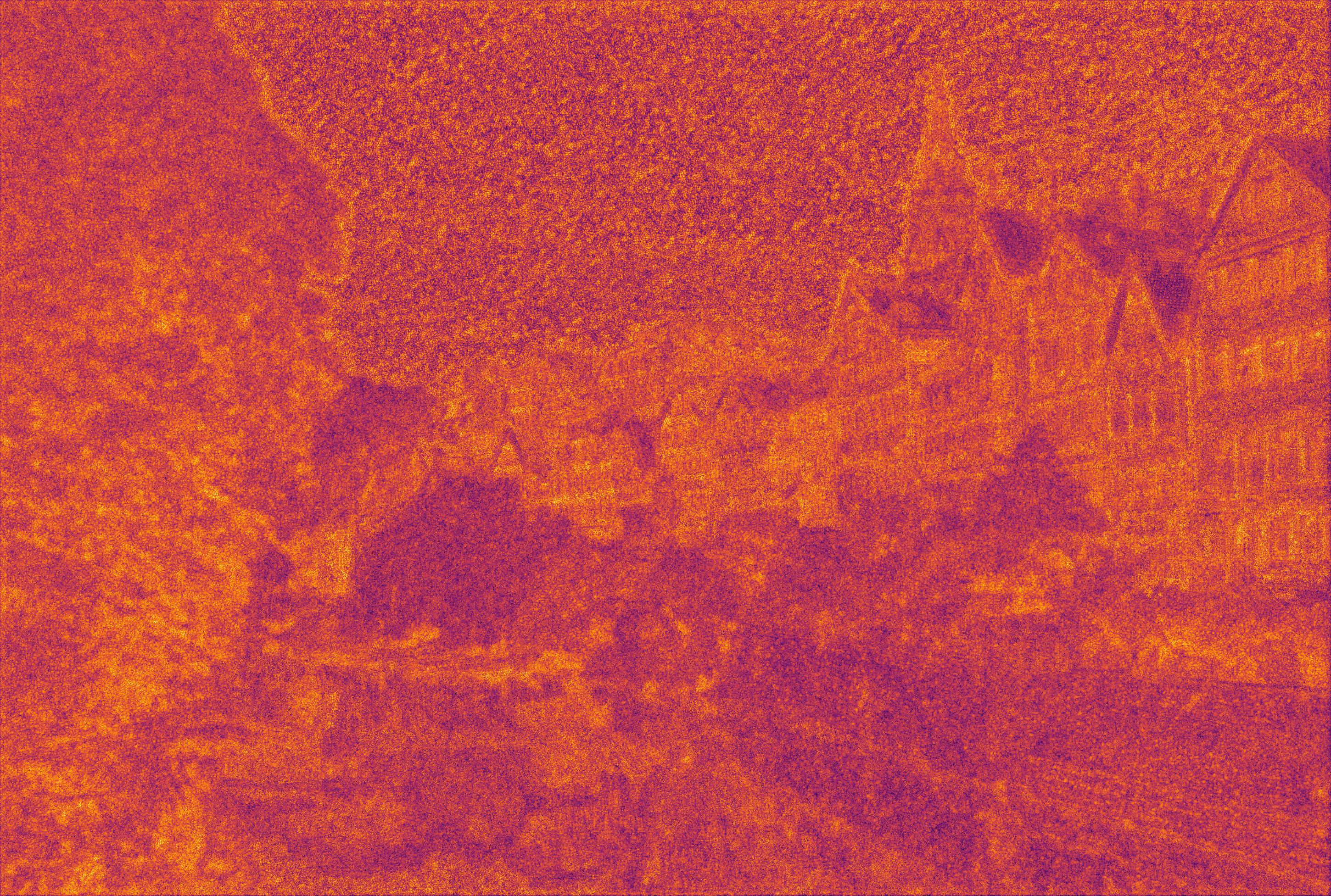}
&
\includegraphics[height=\gradbbheight]{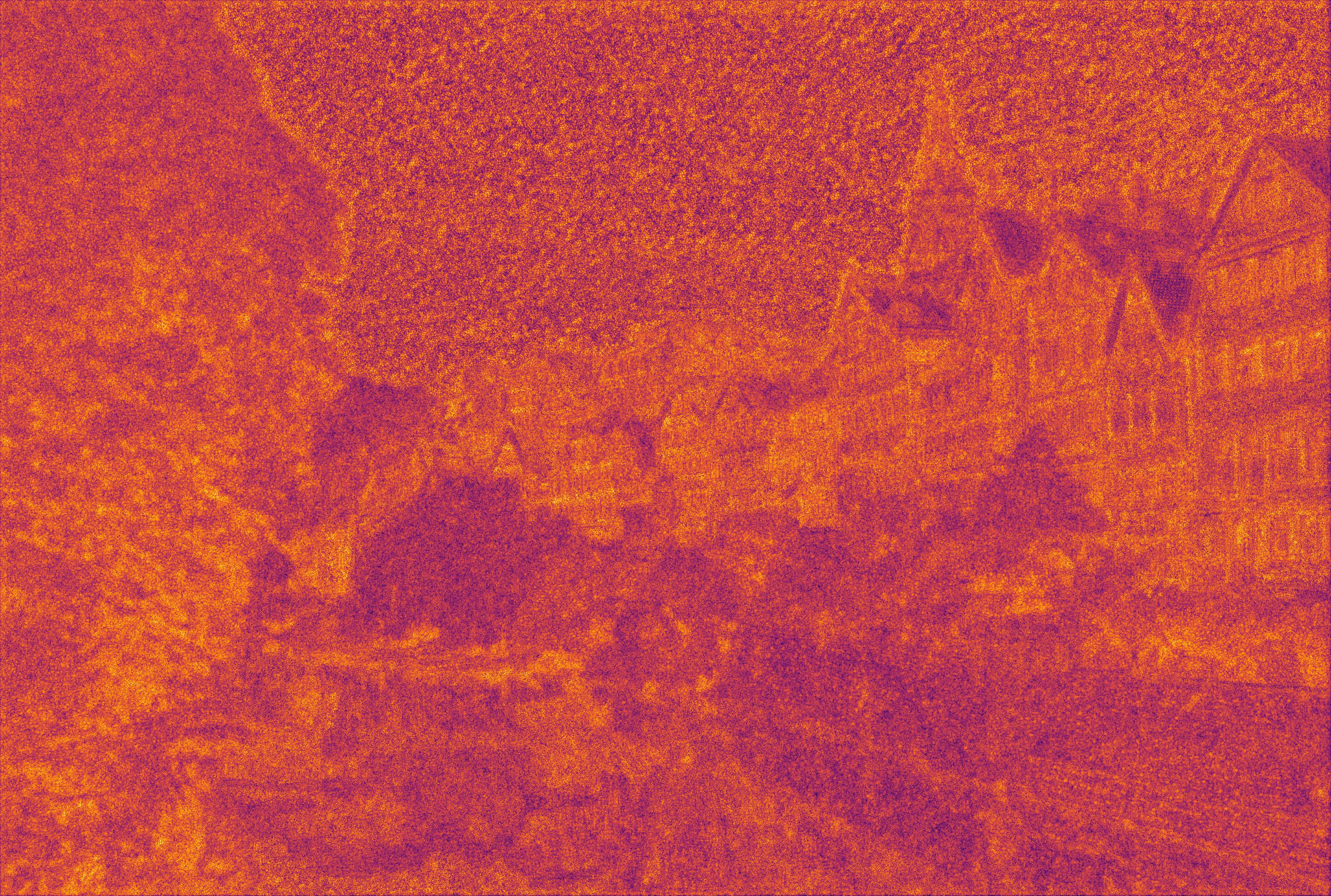}
&
\includegraphics[height=\gradbbheight]{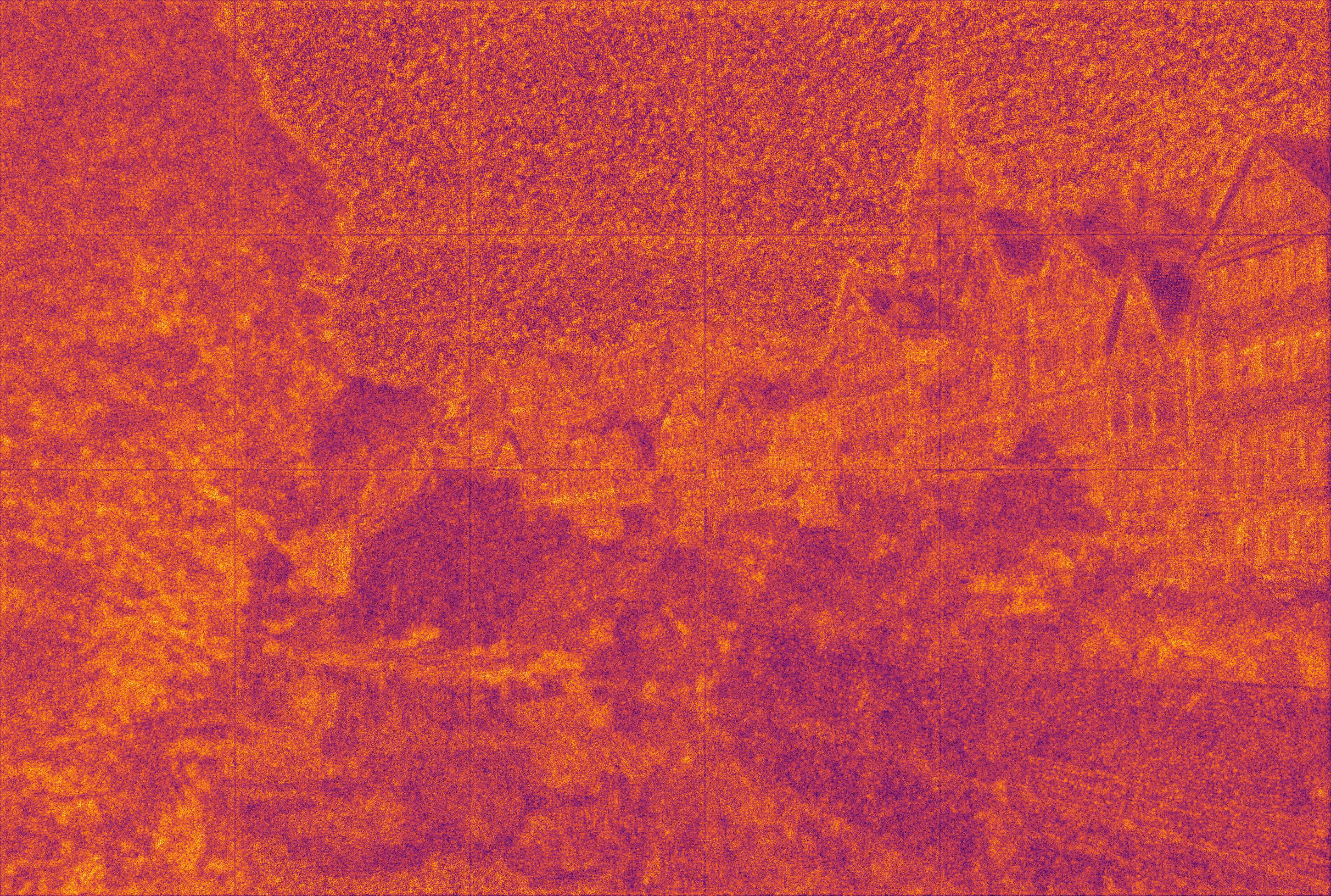}
&
\includegraphics[height=\gradbbheight]{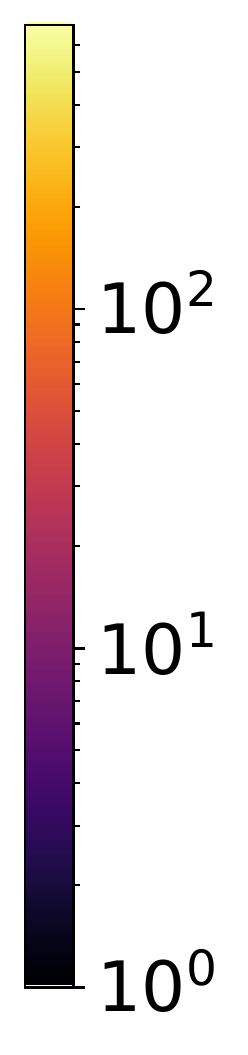}
\end{tabular}
\caption{Localized gradient computation: From left to right: Style image (size 1953$\times$2466), content image (size 1953$\times$2900), norm of the RGB gradient at each pixel computed with three different approaches: reference global gradient~\cite{Gatys_et_al_image_style_transfer_cnn_cvpr2016}, localized gradient using Algorithm~\ref{alg:localized_style_transfer_loss_gradient} (blocks of size 512$\times$512, block margin is $256$), and localized gradient with block margin set to zero.
Algorithm~\ref{alg:localized_style_transfer_loss_gradient} allows for the exact computation of the gradient up to numerical errors (relative error is 1.03$\textrm{e}$-2).
Using a block margin of size zero instead of 256 produces a gradient with visible seams at block boundaries (relative error is 5.23$\textrm{e}$-1).
 \label{fig:gradient_by_blocks_margin}}
\end{figure*}

Exploiting this locality of the gradient, it is also possible to \textit{exactly} compute the gradient vector $\nabla_x E_{\mathrm{transfer}}(x;(u,v))$ block by block using a two-pass procedure:
The first pass is used to compute the global VGG19 statistics  of each style layer and the second pass is used to locally backpropagate the gradient wrt the local neural features.
The whole procedure is described by Algorithm~\ref{alg:localized_style_transfer_loss_gradient} and illustrated by Figure~\ref{fig:algorithm_overview}.
As illustrated by Figure~\ref{fig:gradient_by_blocks_margin}, Algorithm~\ref{alg:localized_style_transfer_loss_gradient} enables to exactly compute the global gradient of the loss in a localized way. The used block margin of size 256 is necessary to avoid visual discontinuities at block boundaries (see Figure~\ref{fig:gradient_by_blocks_margin}).

\begin{algorithm}
\caption{Localized computation of the style transfer loss and its gradient wrt $x$}\label{alg:localized_style_transfer_loss_gradient}
\begin{algorithmic}
\Require Current image $x$, content image layer $V^{L_\mathrm{c}}(u)$, and list of feature statistics of $v$ $\{(G^L(v), \mean(V^L(v)), \std(V^L(v))), ~ L\in\mathcal{L}_\mathrm{s}\}$ (computed block by block)
\Ensure $E_{\mathrm{transfer}}(x;(u,v))$ and $\nabla_x E_{\mathrm{transfer}}(x;(u,v))$
\State \textbf{Step 1: Compute the global style statistics of $x$ block by block:}
\For{each block in the partition of $x$}
  \State Extract the block $b$ with margin and compute $\mathrm{VGG}(b)$ without computation graph
  \State For each style layer $L\in\mathcal{L}_\mathrm{s}$: Extract the features of the block by properly removing the margin and add their contribution to $G^L(x)$ and $\mean(V^L(x))$.
\EndFor
\State \textcolor{coverletter}{For each style layer $L\in\mathcal{L}_\mathrm{s}$: Compute \\$\std(V^L(x))
= (\operatorname{diag}(G^L(x)) - \mean(V^L(x))^2)^{\frac{1}{2}}$.}
\State \textbf{Step 2: Compute the transfer loss and its gradient wrt $x$ block by block:}
\State Initialize the loss and its gradient: $E_{\mathrm{transfer}}(x;(u,v)) \gets  \tilde{E}_{\mathrm{style}}(x;v)$; $\nabla_x E_{\mathrm{transfer}}(x;(u,v)) \gets 0$
\For{each block in the partition of $x$}
  \State Extract the block $b$ with margin and compute $\mathrm{VGG}(b)$ with computation graph
  \State For each style layer $L\in\mathcal{L}_\mathrm{s}$: Compute the gradient of the style loss wrt the local features using the global statistics of $x$ from Step 1 and the style statistics of $v$ as reference (Table~\ref{table:feature_losses_gradients})
  \State For the content layer $L_\mathrm{c}$, add the contribution of $V^{L_\mathrm{c}}(b)$ to the loss $E_{\mathrm{transfer}}(x;(u,v))$ and compute the gradient of the content loss wrt the local features (first row of Table~\ref{table:feature_losses_gradients})
  \State Use automatic differentiation to backpropagate all the feature gradients to the level of the input block image $b$.
  \State Populate the corresponding block of $\nabla_x E_{\mathrm{transfer}}(x;(u,v))$ with the inner part of the gradient obtained by backpropagation.
\EndFor
\end{algorithmic}
\end{algorithm}


\section{Multiscale high-resolution painting style transfer}

\subsection{Coarse-to-fine style transfer}

\begin{algorithm}
\caption{Multiscale style transfer}
\label{alg:multiscale_style_transfer}
\begin{algorithmic}
\Require Content image $u$, a style image $v$,
number of scales $n_\mathrm{scales}$
\Ensure Style transfered image $x$
\For{scale $s=1$ to $n_\mathrm{scales}$}
\State \textbf{Downscale} $u$ and $v$ by a factor $2^{n_\mathrm{scales}-s}$ to obtain the low-resolution couple $(u^{\downarrow},v^{\downarrow})$
\State \textbf{Initialization:} If $s=1$ let $x=u^{\downarrow}$, otherwise upscale current $x$ by a factor 2
\State \textbf{Style transfer at current scale:} \\ $x^{\downarrow} \leftarrow \operatorname{StyleTransfer}((u^{\downarrow},v^{\downarrow}), x^{\downarrow})$ using $n^s_{\textrm{it}}$ iterations of L-BFGS with gradient computed with Algorithm~\ref{alg:localized_style_transfer_loss_gradient}
\EndFor
\end{algorithmic}
\end{algorithm}

Thanks to Algorithm~\ref{alg:localized_style_transfer_loss_gradient}, we can apply style transfer to unprecedented scales.
However, applying a direct style transfer to UHR images generally does not produce the desired effects due to the fixed size of VGG19 receptive fields.
For images larger than 500$^2$ px, visually richer results are obtained by adopting a multiscale approach~\cite{Gatys_etal_Controlling_perceptual_factors_in_neural_style_transfer_CVPR2017} corresponding to the standard coarse-to-fine texture synthesis~\cite{Wei_Levoy_fast_texture_synthesis_2000} that we recall in Algorithm~\ref{alg:multiscale_style_transfer}.

Our two step localized computation approach allows to apply style transfer through up to 6 scales (e.g., from 512$^2$ px to 16384$^2$ px).
Except for the first step, all subsequent style transfers are well-initialized, allowing for a faster optimization~\cite{Gatys_etal_Controlling_perceptual_factors_in_neural_style_transfer_CVPR2017}.
For our baseline implementation, we use L-BFGS with 600 iterations for the first scale and 300 iterations for the subsequent scales.
Due to the large memory needed to store UHR images, the L-BFGS history is limited to the 10 last gradients for all scales except the first one that uses the standard history size of 100.

Finally, in order to avoid GPU memory saturation, for very large images we perform the L-BFGS update procedure and gradient history storing on the CPU for the last scale. This allows to increase the maximal number of pixels by 190\% (+70\% in square image side), as reported in the left column of Table~\ref{tab:max_resoluton_computation_time}. In particular this allows to apply style transfer on images with the unprecedented size of 20k$^2$ using a GPU with 80 GB of memory.

The coarse-to-fine procedure is revealed to be essential to convey the visual complexity of UHR digital photograph of a painting: the first scale encompasses color and large strokes while subsequent scales refine the stroke details up to the painting texture, bristle brushes, fine painting cracks and canvas texture, as illustrated in Figure~\ref{fig:multiscale_style_transfer}.
Surprisingly, fast methods for universal style transfer are not based on a coarse-to-fine approach, which is probably the main reason for their lack of fidelity to fine details (see Section~\ref{subsec:comparison_with_fast_alternatives}).



\begin{figure*}[t]
\setlength{\resultwidth}{0.228\linewidth}
\begin{tikzpicture}[spy using outlines={rectangle, magnification=8,height=\resultwidth,width=\resultwidth, every spy on node/.append style={thick}}, every node/.style={inner sep=0,outer sep=0}]%
    \node[anchor=south west] at
    (0*\resultwidth+0*0.02\linewidth,0){\includegraphics[width=\resultwidth]{img/cvpr_multiscale/square_result_sc_1_of_4--iters_600_300_300_300_cw_1.0_wmstd_1000.png}};
    \node[anchor=south west] at
    (1*\resultwidth+1*0.02\linewidth,0){\includegraphics[width=\resultwidth]{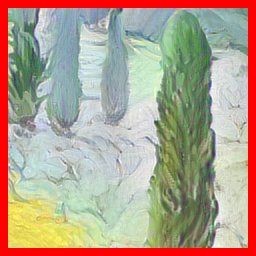}};
    \node[anchor=south west]  at
    (2*\resultwidth+2*0.02\linewidth,0){\includegraphics[width=\resultwidth]{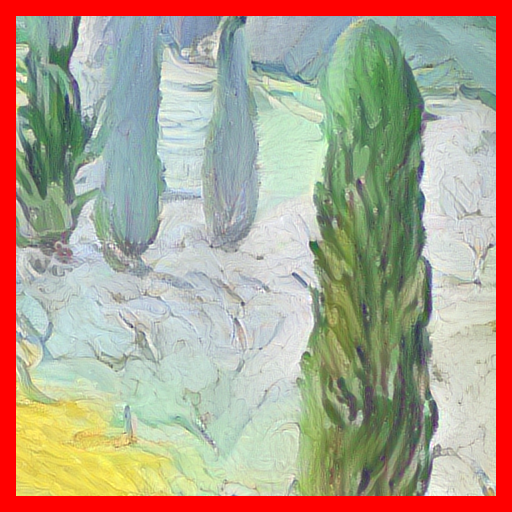}};
    \node[anchor=south west]  at
    (3*\resultwidth+3*0.02\linewidth,0){\includegraphics[width=\resultwidth]{img/cvpr_multiscale/square_result_sc_4_of_4--iters_600_300_300_300_cw_1.0_wmstd_1000.png}};
    \node[] at (0.5*\resultwidth,\resultwidth+0.01\linewidth) {{Scale 1 (128$\times$128)}};
    \node[] at (1.5*\resultwidth+1*0.02\linewidth,\resultwidth+0.01\linewidth) {{Scale 2 (256$\times$256)}};
    \node[] at (2.5*\resultwidth+2*0.02\linewidth,\resultwidth+0.01\linewidth) {{Scale 3 (512$\times$512)}};
    \node[] at (3.5*\resultwidth+3*0.02\linewidth,\resultwidth+0.01\linewidth) {{Scale 4 (1024$\times$1024)}};
    \node[rotate=90] at (-0.02\linewidth,0.5\resultwidth) {{SPST (baseline)}};

\end{tikzpicture}

\vspace{0.015\linewidth}

\begin{tikzpicture}[spy using outlines={rectangle, magnification=8,height=\resultwidth,width=\resultwidth, every spy on node/.append style={thick}}, every node/.style={inner sep=0,outer sep=0}]%
    \node[anchor=south west] at
    (0*\resultwidth+0*0.02\linewidth,0){\includegraphics[width=\resultwidth]{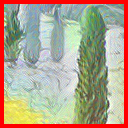}};
    \node[anchor=south west] at
    (1*\resultwidth+1*0.02\linewidth,0){\includegraphics[width=\resultwidth]{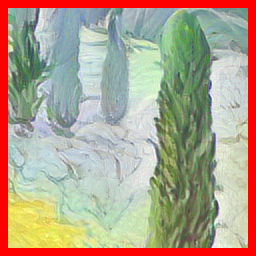}};
    \node[anchor=south west]  at
    (2*\resultwidth+2*0.02\linewidth,0){\includegraphics[width=\resultwidth]{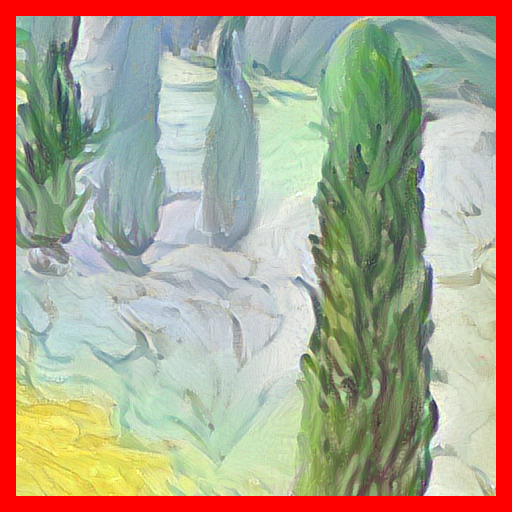}};
    \node[anchor=south west]  at
    (3*\resultwidth+3*0.02\linewidth,0){\includegraphics[width=\resultwidth]{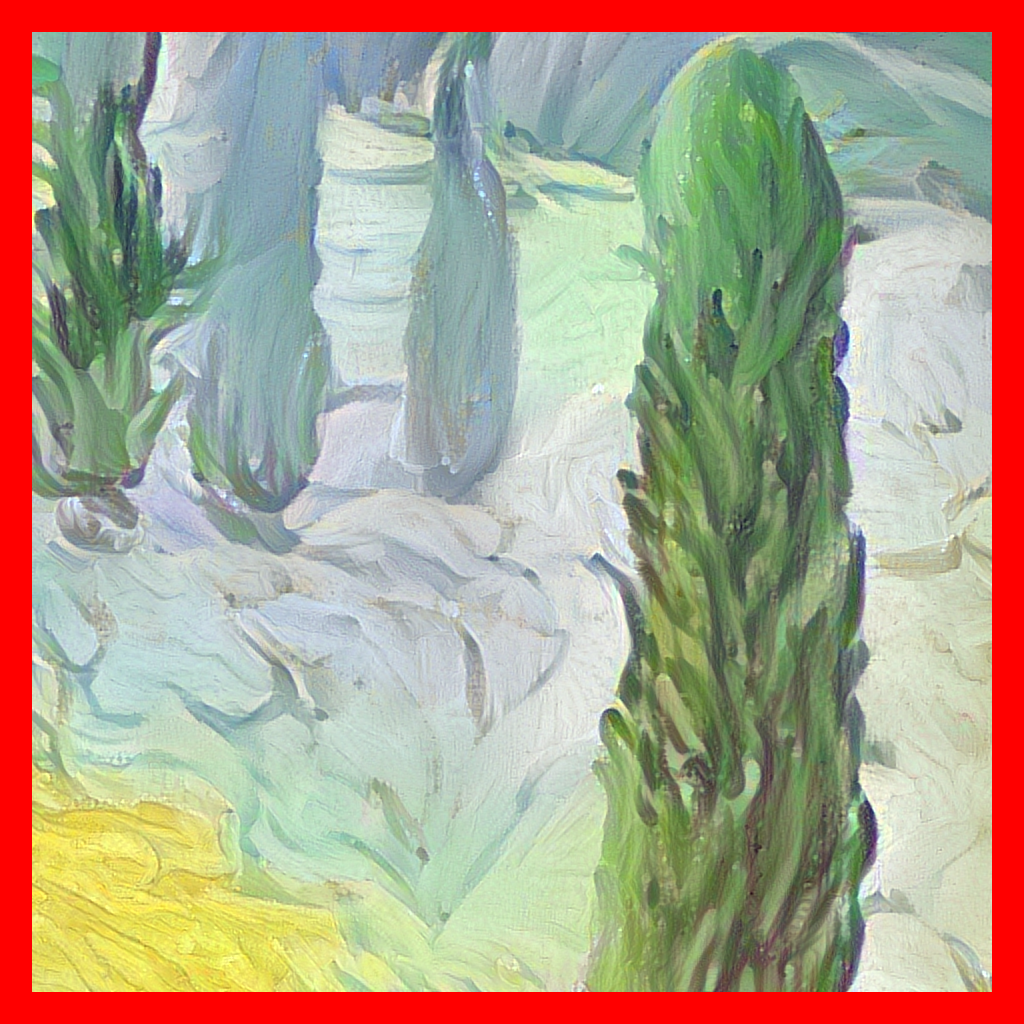}};
    \node[rotate=90] at (-0.02\linewidth,0.5\resultwidth) {{SPST-fast}};

\end{tikzpicture}
\caption{UHR multiscale style transfer: Stability of upscaling and SPST-fast.
The top row shows intermediary steps for the experiment of Figure~\ref{fig:multiscale_style_transfer} displaying details of size $128^2$ to $1024^2$.
While the transfer is globally stable from one scale to the next,
each upscaling enables the addition of fine pictorial details which give an authentic painting aspect to the final output image.
Bottom row: Same details for the output of the SPST-fast alternative that uses less and less L-BFGS iterations after each upscaling.
Computation times  for the full scale image outputs of size 6048$\times$8064 are 74 minutes for SPST and 13 minutes for SPST-fast ($\times$5.6 speed up).
Observe that both methods produce very close results but that the very fine details of the SPST-fast output are slightly less complex.
\label{fig:multiscale_style_transfer_details_fast}}
\end{figure*}

\subsection{Accelerated multiscale style transfer}

\begin{table}[t]
    \centering
    \begin{tabular}{@{}llcccc@{}}
    \toprule
    GPU (VRAM) & & \multicolumn{4}{c}{Computation time (in min.)}%
    \tabularnewline
    \multicolumn{2}{@{}l}{w/ max. res. GPU / GPU+CPU} &  2k & 4k & 8k & 16k \\
    \midrule
    RTX 2080 (11GB) &  SPST & 12.8 & 70.3$^\star$ & - & - \tabularnewline
    \smallskip max. res. 3k / 5k$^\star$ & SPST-fast & 4.3 & 13.1$^\star$ & - & -
    \tabularnewline

    A100 (40GB) & SPST & 4.0 & 20.6 & 96.6 & - \tabularnewline
    \smallskip max. res. 8k / 14k$^\star$ & SPST-fast & 1.7 & 4.1 & 15.2 & -
\tabularnewline

    A100 (80GB) & SPST & 3.8 & 19.0 & 89.7 & 406$^\star$ \tabularnewline
max. res. 12k / 20k$^\star$ & SPST-fast & 1.5 & 3.9 & 14.4 & 61.4$^\star$

    \tabularnewline
    \bottomrule
    \end{tabular}
    \caption{
    Resolution and computation time of SPST and SPST-fast depending on GPU hardware.
    Below each hardware name, we give the maximum resolution achievable with full computation on the GPU and the maximum resolution achievable when using the CPU for L-BGFS steps for the last scale (denoted by $^\star$).
    For the computation times, $^\star$ indicates that L-BFGS optimization had to be moved to the CPU to avoid GPU memory saturation.
    All images are square.}
    \label{tab:max_resoluton_computation_time}
\end{table}

The main drawback of our baseline approach is the computational cost.
Indeed, the complexity is linear in the number of pixels, making each upscaling step four times longer than the previous one.
Nevertheless, we experimentally observed that the style transfer is remarkably stable from one step to the next, as can be observed in the top row of Figure~\ref{fig:multiscale_style_transfer_details_fast}.
To the best of our knowledge, this property has never been reported, probably because style transfer involving several scales was not reachable without  our localized algorithm for gradient computation.

The role of the last steps is to refine local texture in accordance to the style image at the current resolution.
To allow for a faster alternative, we found that these last steps can be alleviated by reducing the number of iterations.
We thus propose an alternative procedure, called \emph{SPST-fast} in what follows, that reduces the number of iterations by a factor $3$ from one scale to the other, while ensuring a minimal number of 30 iterations, e.g., for 4 scales one uses $(n^s_{\textrm{it}})_{1\leq x\leq 4} = (600, 200, 66, 30)$ instead of $(n^s_{\textrm{it}})_{1\leq x\leq 4} = (600, 300, 300, 300)$ for the baseline implementation.
Computation times for both SPST and SPST-fast are reported in Table~\ref{tab:max_resoluton_computation_time} for three different GPU hardwares.
They show that SPST-fast is about five times faster than SPST.
Note that our algorithm allows for multiscale style transfer of UHR images up to 20k$^2$ px.
Even on a moderate GPU with 11 GB of memory, our algorithm can deal with images of size 5k$^2$ px, while the original implementation of \cite{Gatys_et_al_image_style_transfer_cnn_cvpr2016}
does not run on a 40GB GPU for an image of size 4k$^2$ px.
Our source code is available at \url{https://github.com/bgalerne/scaling_painting_style_transfer}.

As shown in Figure~\ref{fig:multiscale_style_transfer_details_fast}, SPST-fast produces visually satisfying results but with small texture details that are slightly less aligned with the UHR content image compared to the SPST baseline approach.

\section{Numerical Results}
\label{sec:experiments}

\subsection{Ultra-high resolution style transfer}

An example of UHR style transfer is displayed in Figure~\ref{fig:teaser} with several highlighted details.
Figure~\ref{fig:multiscale_style_transfer} illustrates intermediary steps of our high resolution multiscale algorithm.
The result for the first scale (third column) corresponds to the ones of the original paper~\cite{Gatys_et_al_image_style_transfer_cnn_cvpr2016} (except for our slightly modified style loss) and suffers from poor image resolution and grid artifacts. As already discussed with Figure~\ref{fig:multiscale_style_transfer_details_fast}, while progressing to the last scale, the texture of the painting gets refined and stroke details gain a natural aspect. This process is remarkably stable; the successive global style transfers results remain consistent with the one of the first scale.

\subsection{Ultra-high resolution texture synthesis}

Although we focus our discussion on style transfer,
our approach also allows for UHR texture synthesis.
Following the original paper on texture synthesis~\cite{Gatys_et_al_texture_synthesis_using_CNN_2015},
given a texture exemplar $v$, texture synthesis is performed by minimizing
$E_{\mathrm{style}}(x;v)$~\eqref{eq:style_loss},
starting from a random white noise image $x_0$.
From a practical point of view,
it consists in minimizing the style transfer loss with the three following differences:
a) The style image is replaced by the texture image.
b) There is no content image and no content loss (set $\lambda_\mathrm{c}=0$).
c) The image $x$ is initialized as a random white noise $x_0$.
We perform texture synthesis following the same multiscale approach and using the augmented style loss $\tilde{E}_{\mathrm{style}}^L(x;v)$ defined in Equation~\eqref{eq:augmented_gatys_loss}.

Our experiments show that for texture synthesis, one should use a number of scales as high as possible, that is, the multiscale process starts with images of moderate size (about 200 pixels).
To illustrate this point we show two different UHR texture synthesis in
Figures~\ref{fig:texture_synthesis_one}
(six additional results are displayed in the supp. mat.).
For each example, the synthesis using three scales (same setting as for style transfer) and five scales is shown.
Starting with a first scale with small size is critical for a satisfying synthesis quality.
Indeed, using only three scales yields textures that are spatially homogeneous due to the white noise initialization.

Let us recall that our approach enables to reach up to 20k$^2$ px (see Table~\ref{tab:max_resoluton_computation_time} for maximal resolutions), which pushes by far the maximal resolution for neural texture synthesis. Indeed, to the best of our knowledge the highest resolution reported in the neural texture synthesis literature was limited to $2048^2$ px~\cite{gonthier2022high} for the multiscale version of Gatys \emph{et al.} algorithm~\cite{Gatys_et_al_texture_synthesis_using_CNN_2015}.

\newlength{\textsynthtwidth}
\begin{figure*}
\setlength{\textsynthtwidth}{0.32\linewidth}
\begin{tikzpicture}[spy using outlines={rectangle, magnification=5,height=\resultwidth,width=\resultwidth, every spy on node/.append style={thick}}, every node/.style={inner sep=0,outer sep=0}]%
    \node[anchor=south west] (ori) at (0,0){\includegraphics[width=\textsynthtwidth]{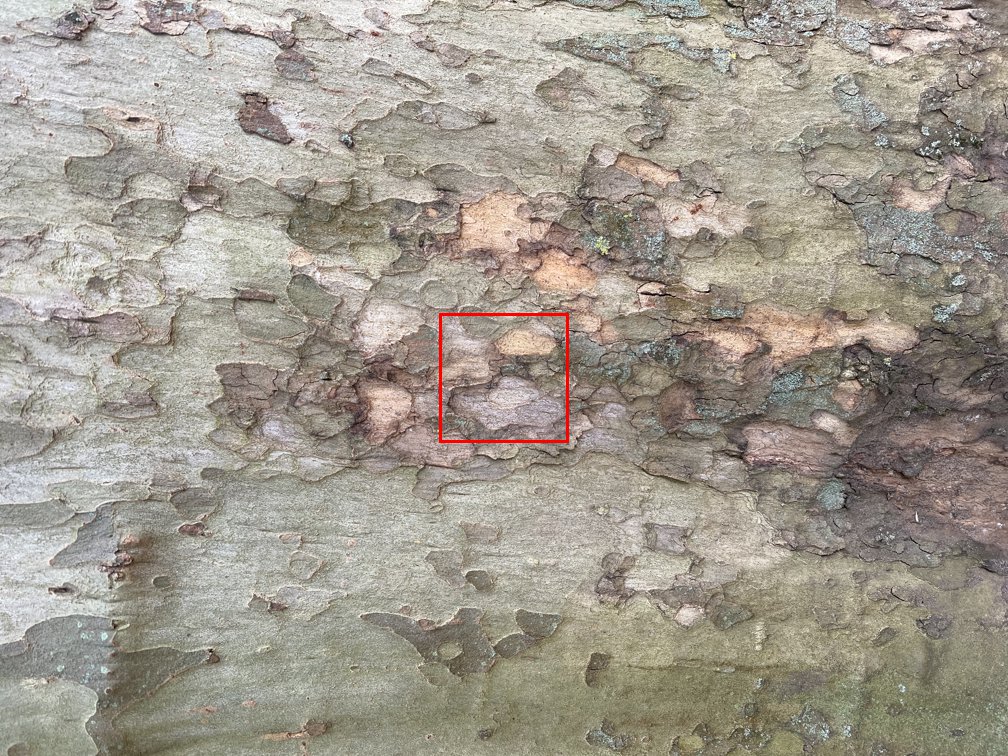}};
    \node[anchor=south west] (synth256) at
    (\textsynthtwidth+0.02\linewidth,0){\includegraphics[width=\textsynthtwidth]{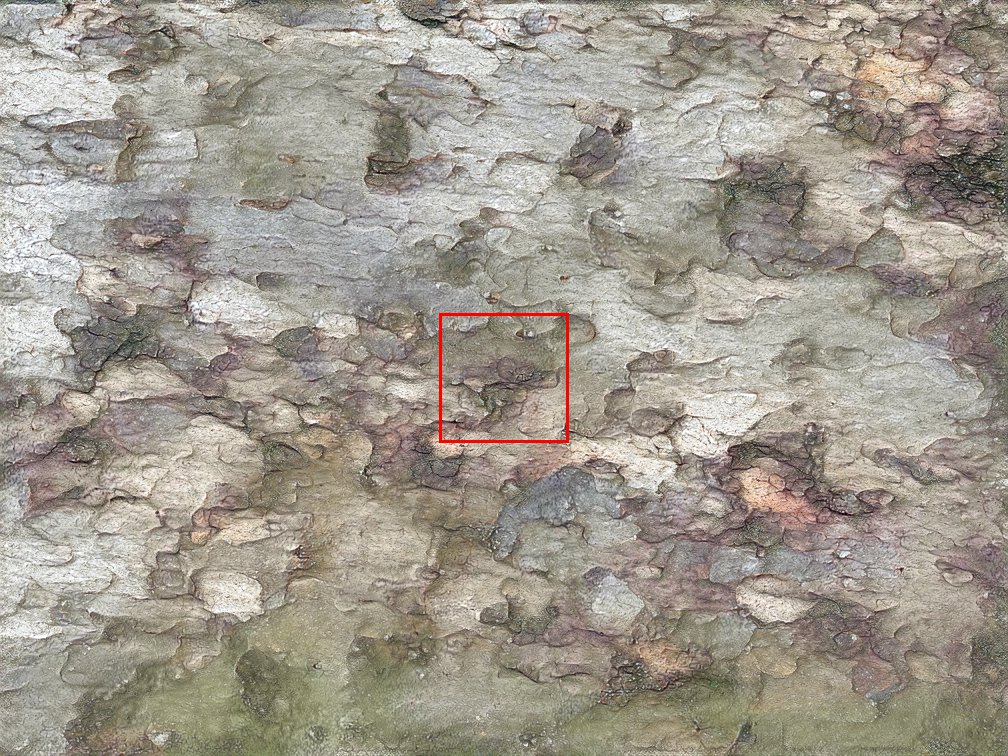}};
    \node[anchor=south west] (synth800) at
    (2*\textsynthtwidth+2*0.02\linewidth,0){\includegraphics[width=\textsynthtwidth]{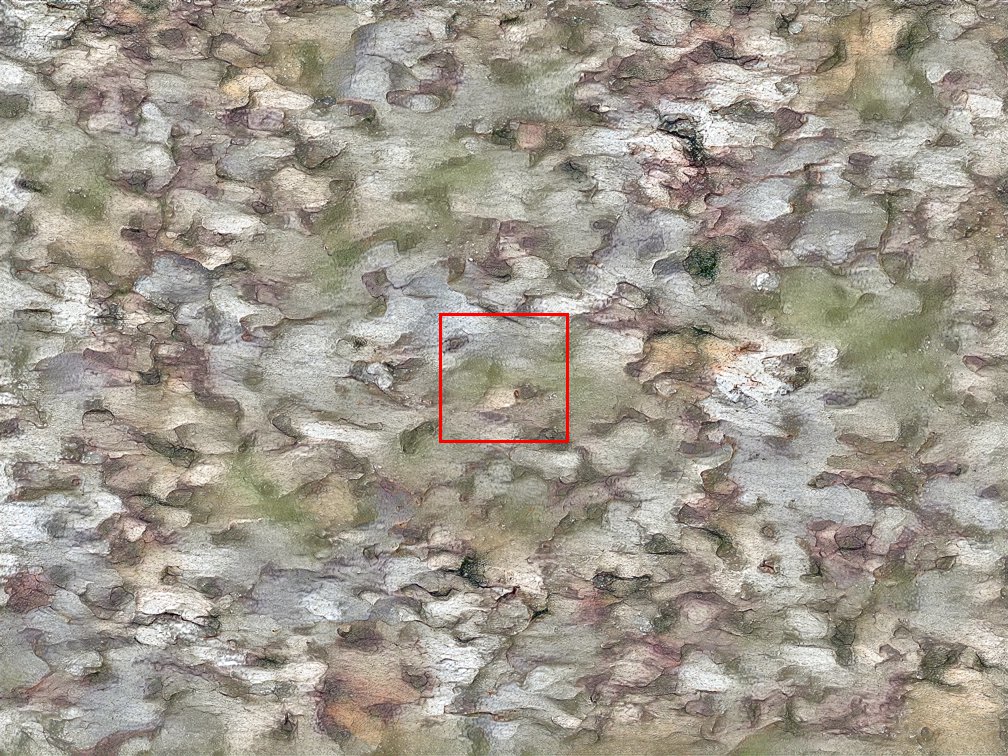}};
    \node[] (oritext) at (0.5*\textsynthtwidth,4.5) {{Input texture image}};
    \node[] (256res) at (1.5*\textsynthtwidth+0.02\linewidth,4.5) {Synthesis using 5 scales};
    \node[] (800res) at (2.5*\textsynthtwidth+2*0.02\linewidth,4.5) {Synthesis using 3 scales};
    \node[anchor = north west] at
    (0,-0.01\linewidth){\includegraphics[width=\textsynthtwidth]{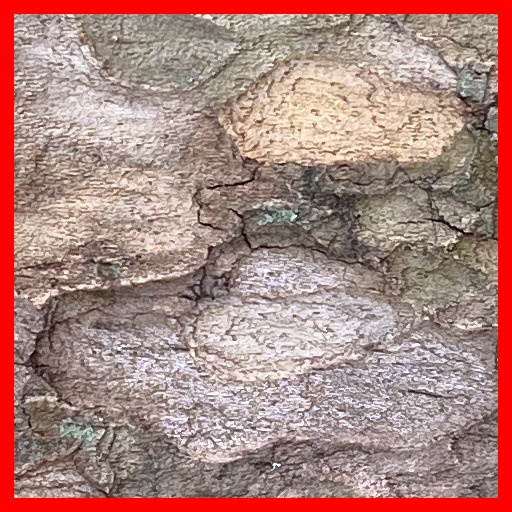}};
    \node[anchor = north west] at
    (\textsynthtwidth+0.02\linewidth,-0.01\linewidth){\includegraphics[width=\textsynthtwidth]{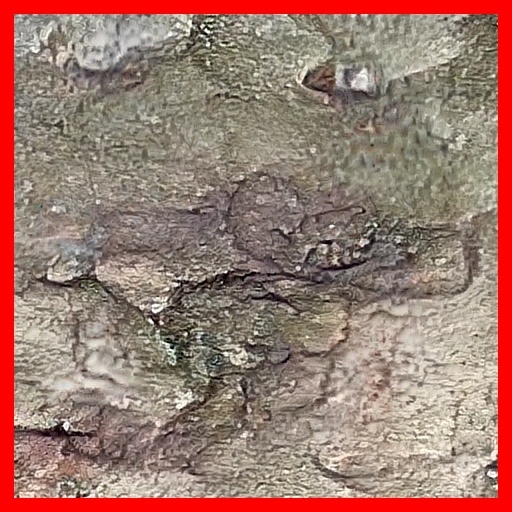}};
    \node[anchor = north west] at
    (2*\textsynthtwidth+2*0.02\linewidth,-0.01\linewidth){\includegraphics[width=\textsynthtwidth]{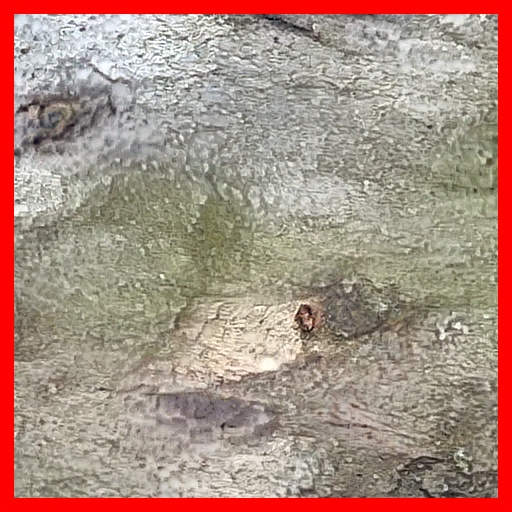}};
\end{tikzpicture}

\vspace{0.015\linewidth}

\begin{tikzpicture}[spy using outlines={rectangle, magnification=5,height=\resultwidth,width=\resultwidth, every spy on node/.append style={thick}}, every node/.style={inner sep=0,outer sep=0}]%
    \node[anchor=south west] (ori) at (0,0){\includegraphics[width=\textsynthtwidth]{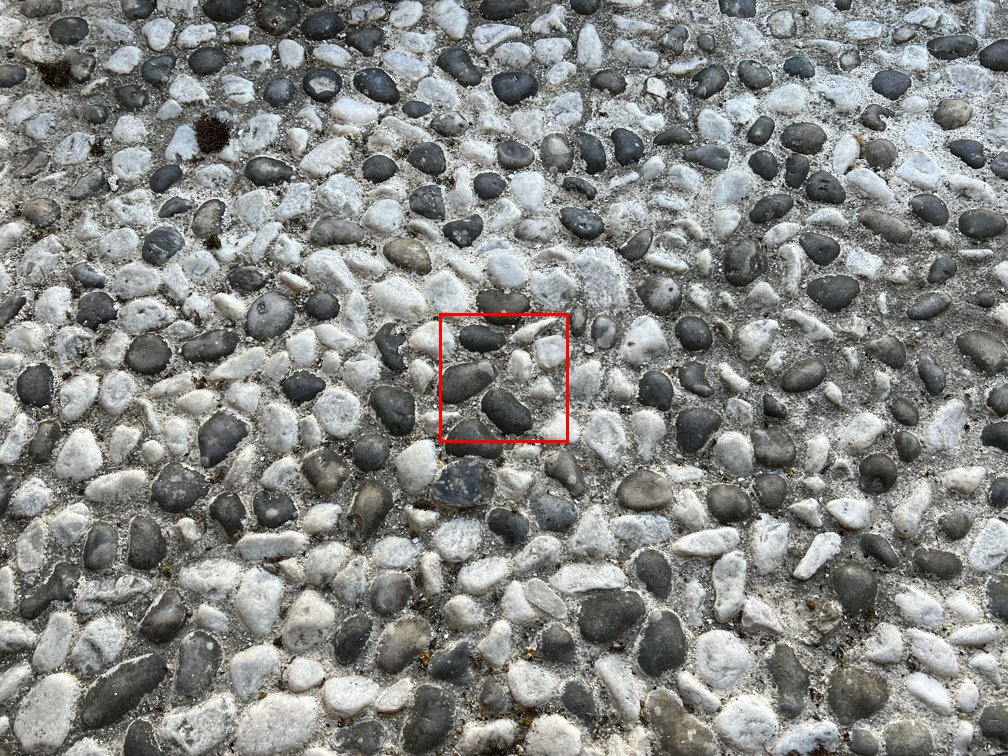}};
    \node[anchor=south west] (synth256) at
    (\textsynthtwidth+0.02\linewidth,0){\includegraphics[width=\textsynthtwidth]{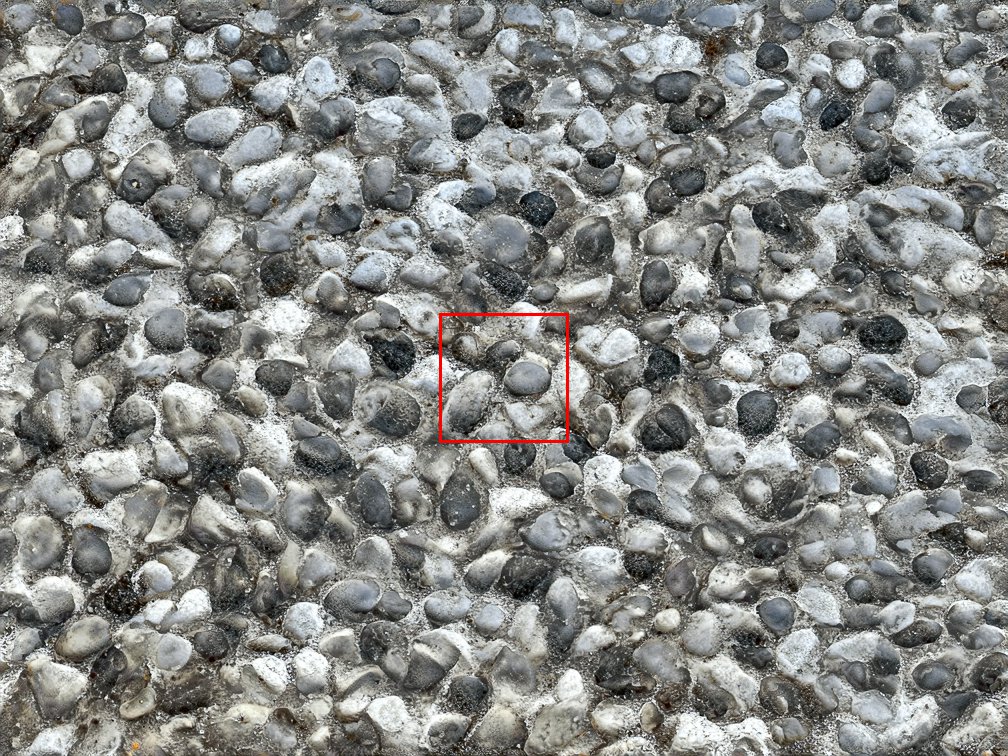}};
    \node[anchor=south west] (synth800) at
    (2*\textsynthtwidth+2*0.02\linewidth,0){\includegraphics[width=\textsynthtwidth]{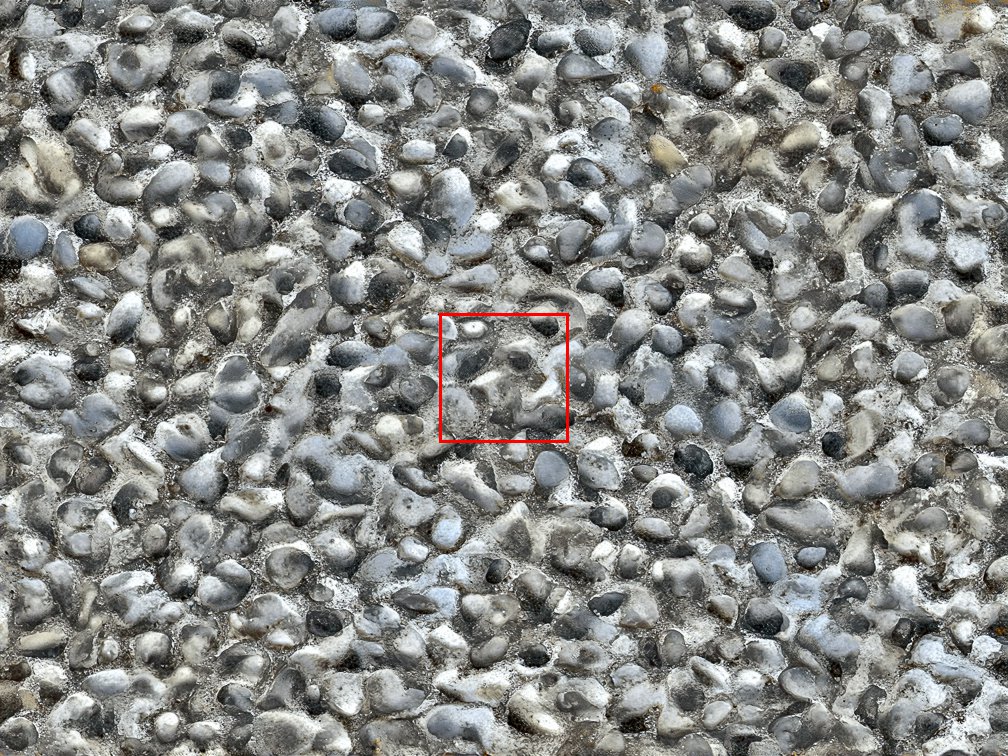}};
    \node[anchor = north west] at
    (0,-0.01\linewidth){\includegraphics[width=\textsynthtwidth]{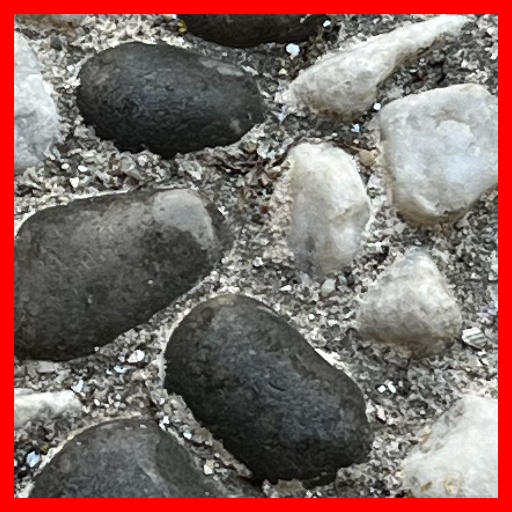}};
    \node[anchor = north west] at
    (\textsynthtwidth+0.02\linewidth,-0.01\linewidth){\includegraphics[width=\textsynthtwidth]{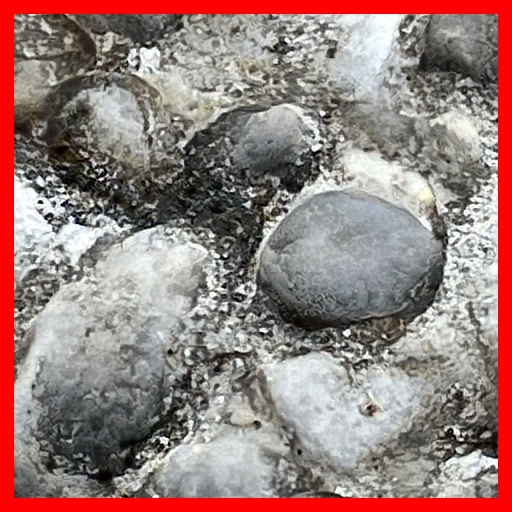}};
    \node[anchor = north west] at
    (2*\textsynthtwidth+2*0.02\linewidth,-0.01\linewidth){\includegraphics[width=\textsynthtwidth]{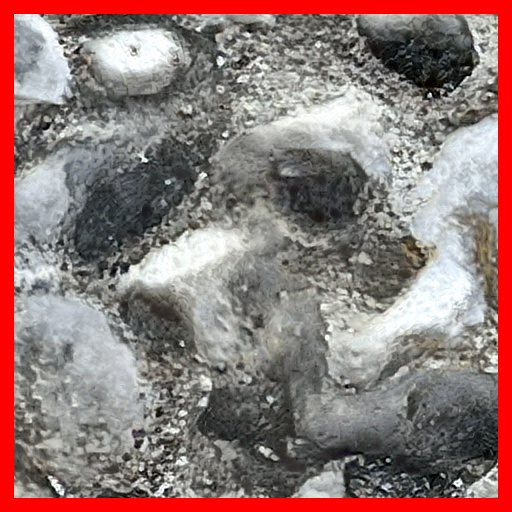}};
\end{tikzpicture}
\caption{UHR texture synthesis (same as Figure~\ref{fig:texture_synthesis_one}): From left to right: Input texture image, synthesis using 5 scales, synthesis using 3 scales. Image have size (3024$\times$4032) (downscaled by a factor 4 for inclusion in the .pdf) and true resolution details have size (512$\times$512).}
\label{fig:texture_synthesis_one}
\end{figure*}

\section{Comparison with very fast alternatives}
\label{sec:comparison}

\subsection{Visual comparison}
\label{subsec:comparison_with_fast_alternatives}
We compare our method with two fast alternatives for UHR style transfer, namely collaborative distillation (CD)~\cite{Wang_2020_CVPR} and URST~\cite{Chen_Wang_Xie_Lu_Luo_towards_ultra_resolution_neural_style_transfer_thumbnail_instance_normalization_AAAI2022} (based on~\cite{li2017universal}) using their official implementations. 
To improve readability of Figure~\ref{fig:style_transfer_comparison}, results for SPST-fast, which are really close to the ones of SPST but have slightly less details, are only reproduced in supp. mat.

As already discussed in Section~\ref{sec:related-work}, URST decreases the resolution of the style image to $1024^2$ px, so the style transfer is
not performed at the proper scale and fine details cannot be transferred (e.g., the algorithm is not aware of the brushstroke style).
As in UST methods, CD does not take into account details at different scales but simply proposes to reduce the number of filters in the auto-encoder network through collaborative distillation, to process larger images.
Unsurprisingly, one observes in Figure~\ref{fig:style_transfer_comparison} that our method is the only one capable of conveying the aspect of the painting strokes to the content image.
CD suffers from halos around objects (e.g., tress in the first example), saturated color, and high-frequency artifacts (see fourth column of Figure~\ref{fig:style_transfer_comparison}).
URST presents visible patch boundaries, a detail frequency mismatch due to improper scaling, loss of structure (e.g., buildings in the second example) and sometimes critical shrinking of the color palette (see fifth column of Figure~\ref{fig:style_transfer_comparison}).

All in all, even though CD and URST produce UHR images, one can argue that the effective resolution of the output does not match their size due to the many visual artifacts. In comparison, our iterative SPST algorithm produces images for which every image part is in accordance with UHR painting style, up to the pixel level.

\textcolor{coverletter}{Finally, let us observe that the style transfer results are in general better when the geometric content of the style image and the content image are close, regardless of the method (see Figure~\ref{fig:failure-cases} and supp. mat. for additional examples).
Gatys \emph{et al.}~\cite{Gatys_etal_Controlling_perceptual_factors_in_neural_style_transfer_CVPR2017} show that one can mitigate these failure cases by adding more control (e.g., segmentation, color transfer as preprocessing, careful rescaling of the style). All these solutions can be straightforwardly adapted to work on UHR images.}


\begin{figure*}[t]
\setlength{\gradbbheight}{0.1837\linewidth}
\begin{tabular}{@{}ccccc@{}}
Style image & Content image & SPST (ours) & CD &  URST \\
\includegraphics[width=0.1\textwidth]{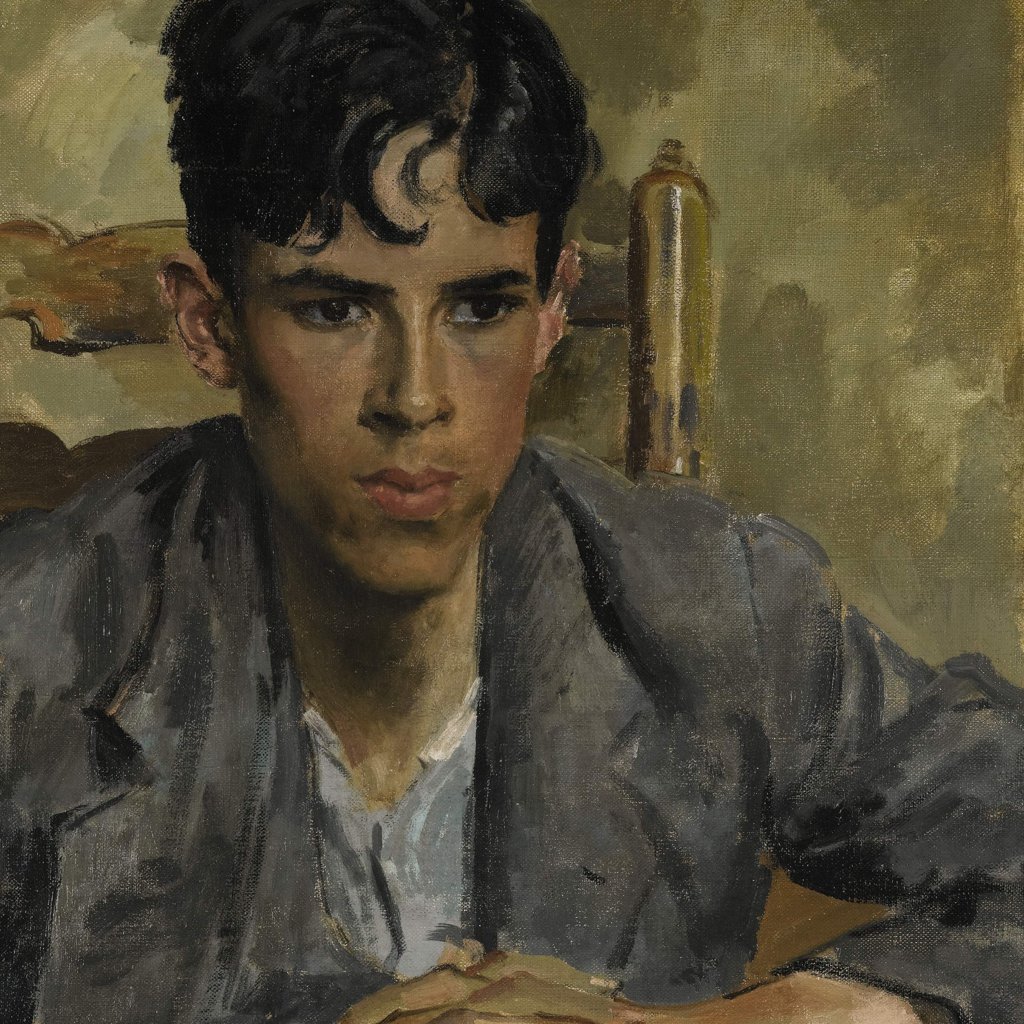}
&
\includegraphics[width=0.2\textwidth]{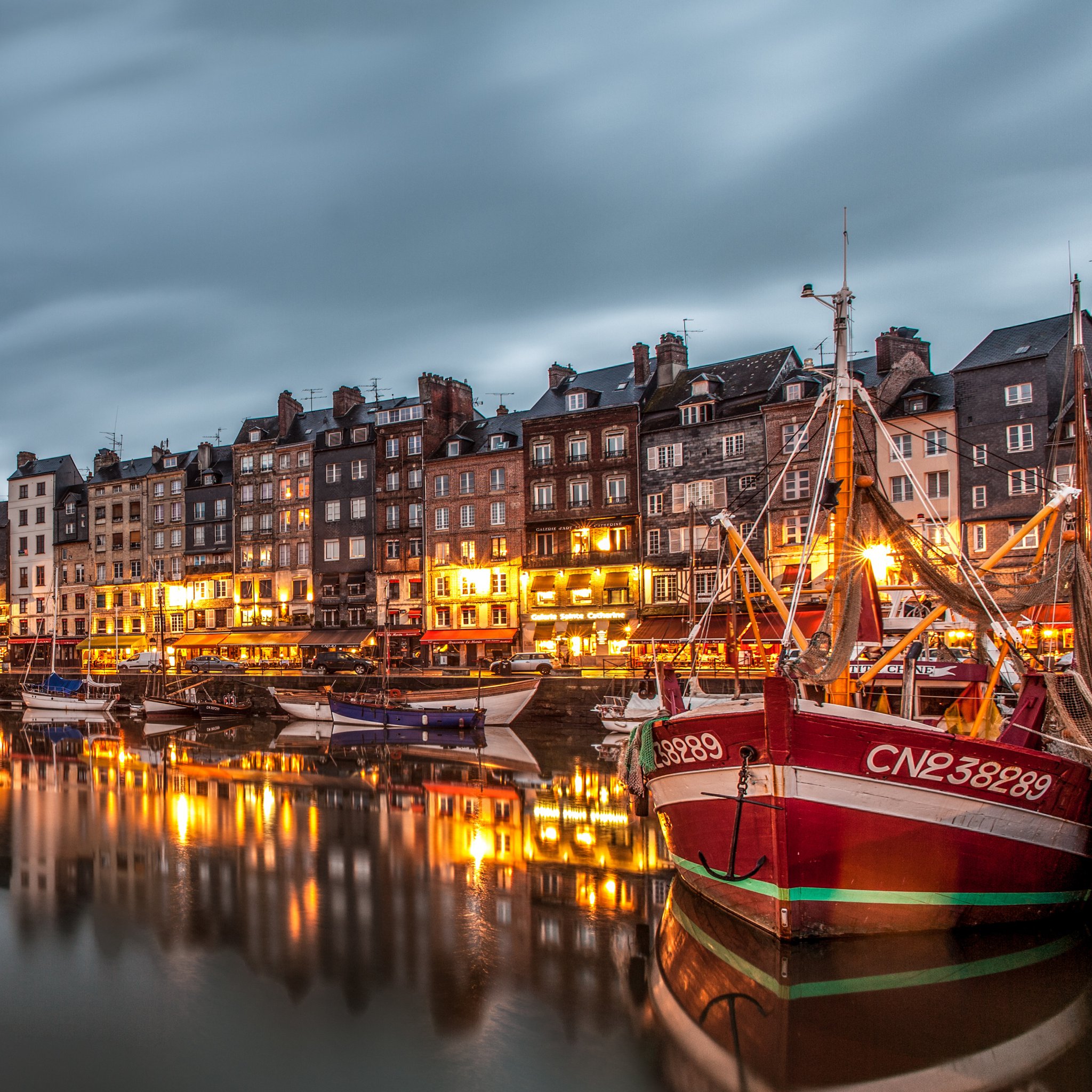}
&
\includegraphics[width=0.2\textwidth]{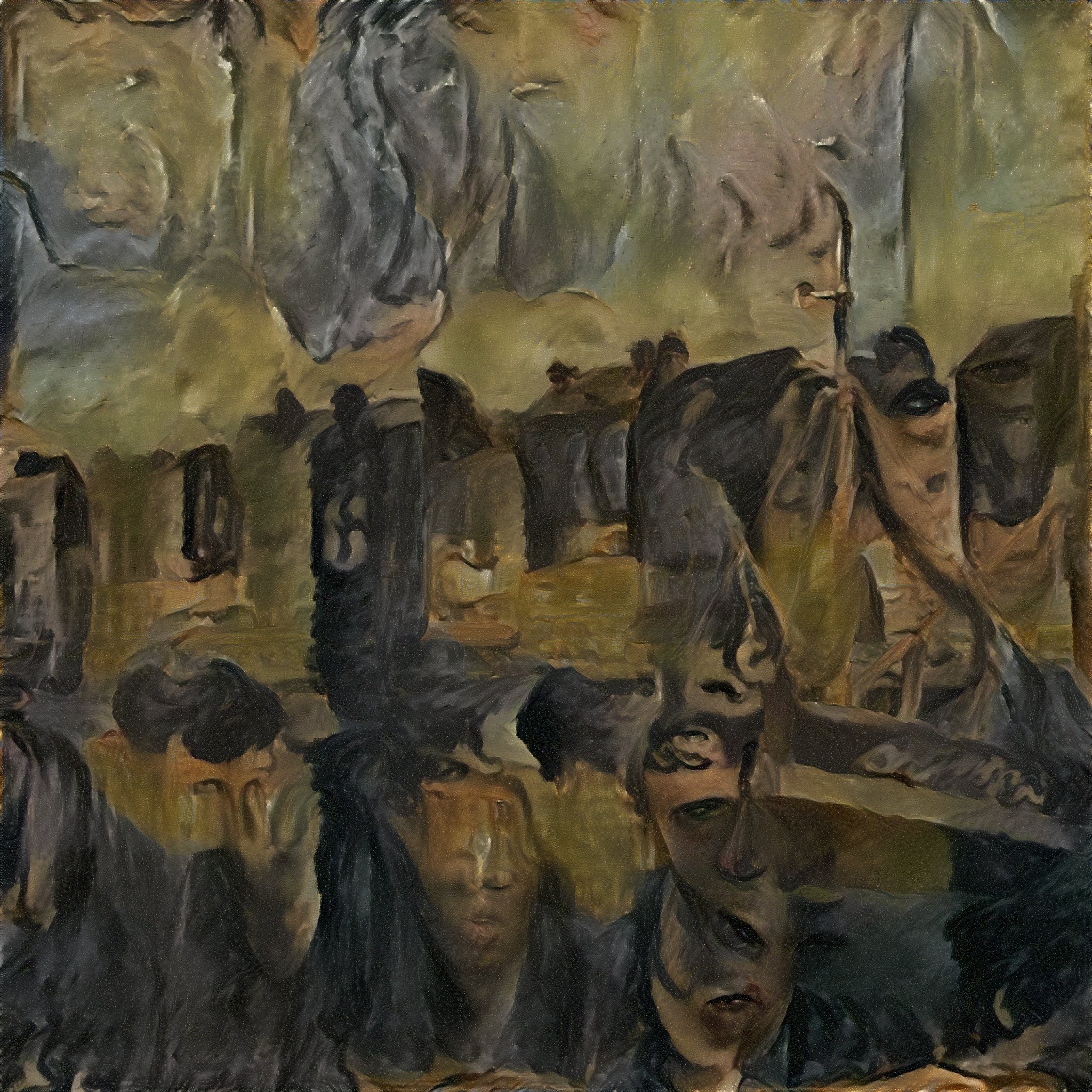}
&
\includegraphics[width=0.2\textwidth]{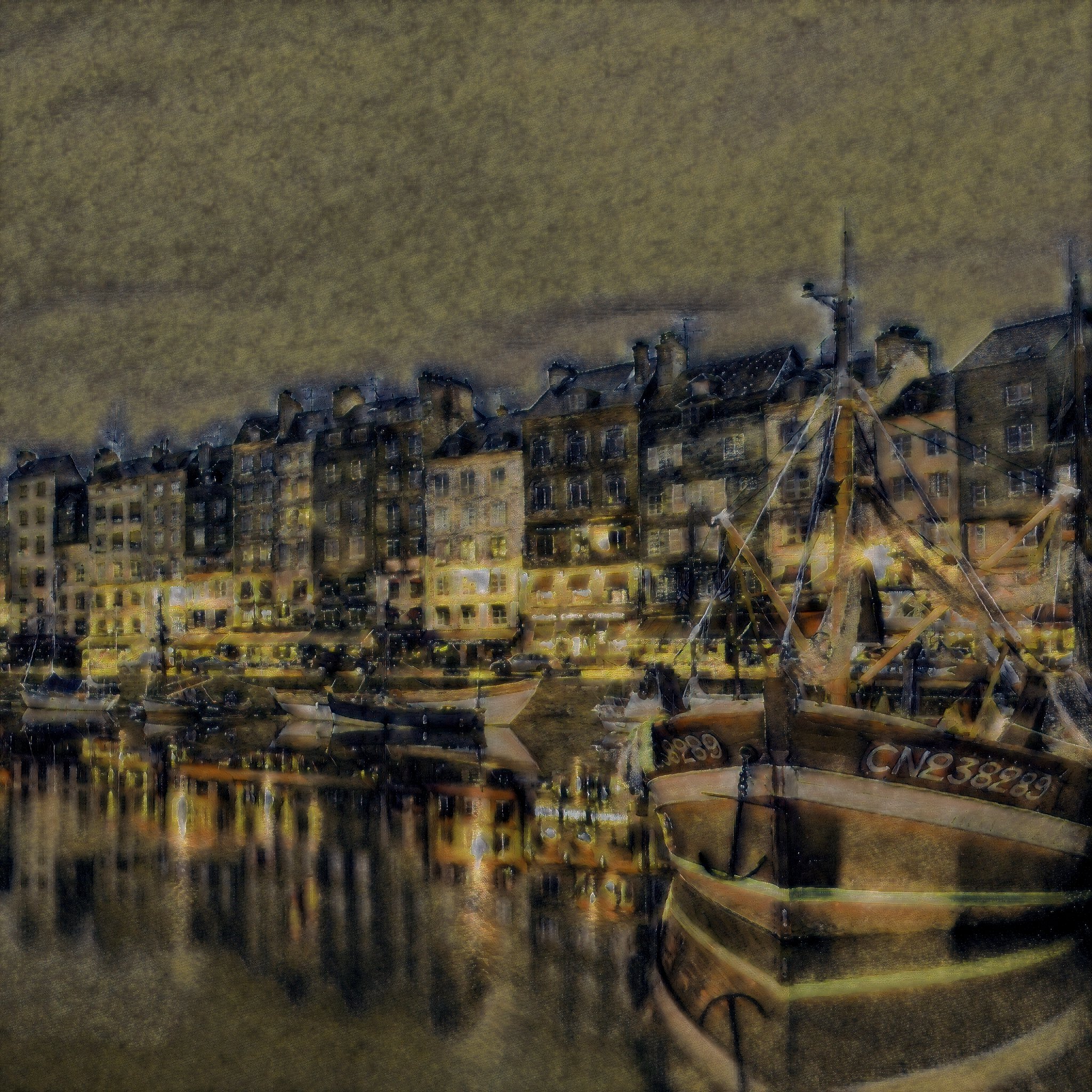}
&
\includegraphics[width=0.2\textwidth]{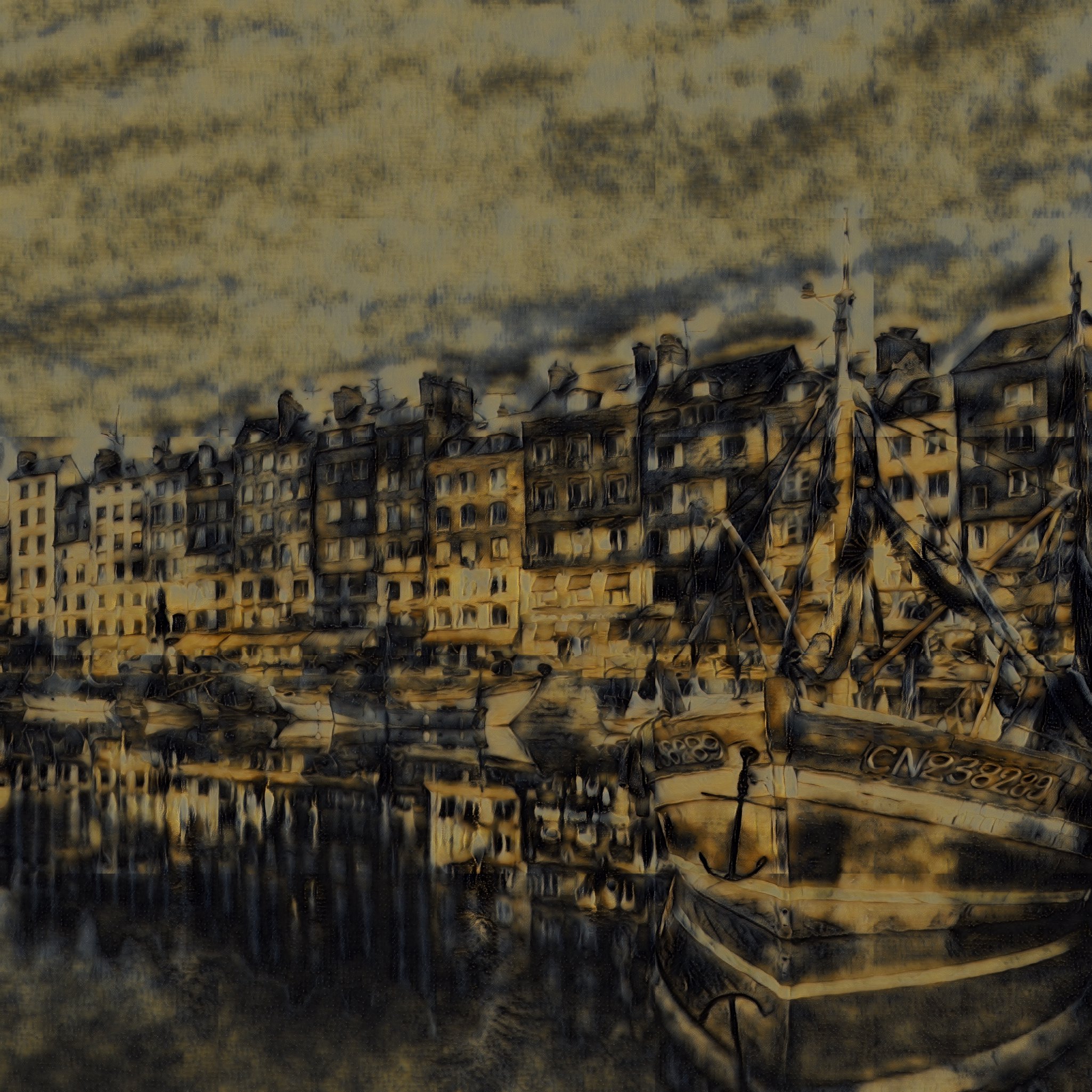}
\end{tabular}
\caption{\textcolor{coverletter}{Example of content-style mismatch. From left to right: style (2048$\times$2048), content (4096$\times$4096), our results (SPST),
CD~\cite{Wang_2020_CVPR} and URST~\cite{Chen_Wang_Xie_Lu_Luo_towards_ultra_resolution_neural_style_transfer_thumbnail_instance_normalization_AAAI2022}. We used four scale for our results.
The content-style mismatch results in the loss of details in the buildings and does not preserve the homogeneity of the sky.
}}
\label{fig:failure-cases}
\end{figure*}

\begin{figure*}
\setlength{\resultwidth}{0.1837\linewidth}
\begin{tikzpicture}[spy using outlines={rectangle, magnification=5,height=\resultwidth,width=\resultwidth, every spy on node/.append style={thick}}, every node/.style={inner sep=0,outer sep=0}]%
    \node[anchor=south west] (style) at (0,0){\includegraphics[width= 0.6683501683501684\resultwidth]{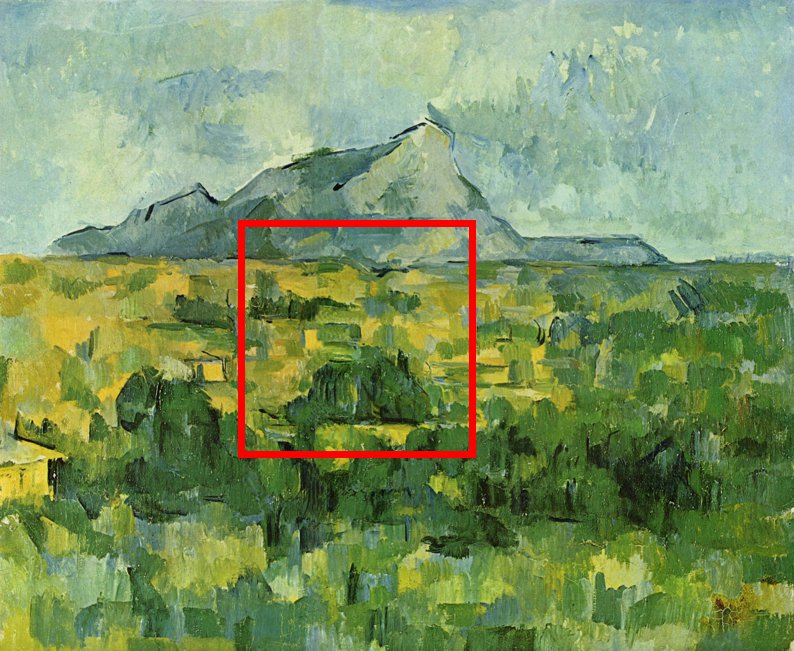}};
    \node[anchor=south west] (content) at
    (\resultwidth+0.02\linewidth,0){\includegraphics[width=\resultwidth]{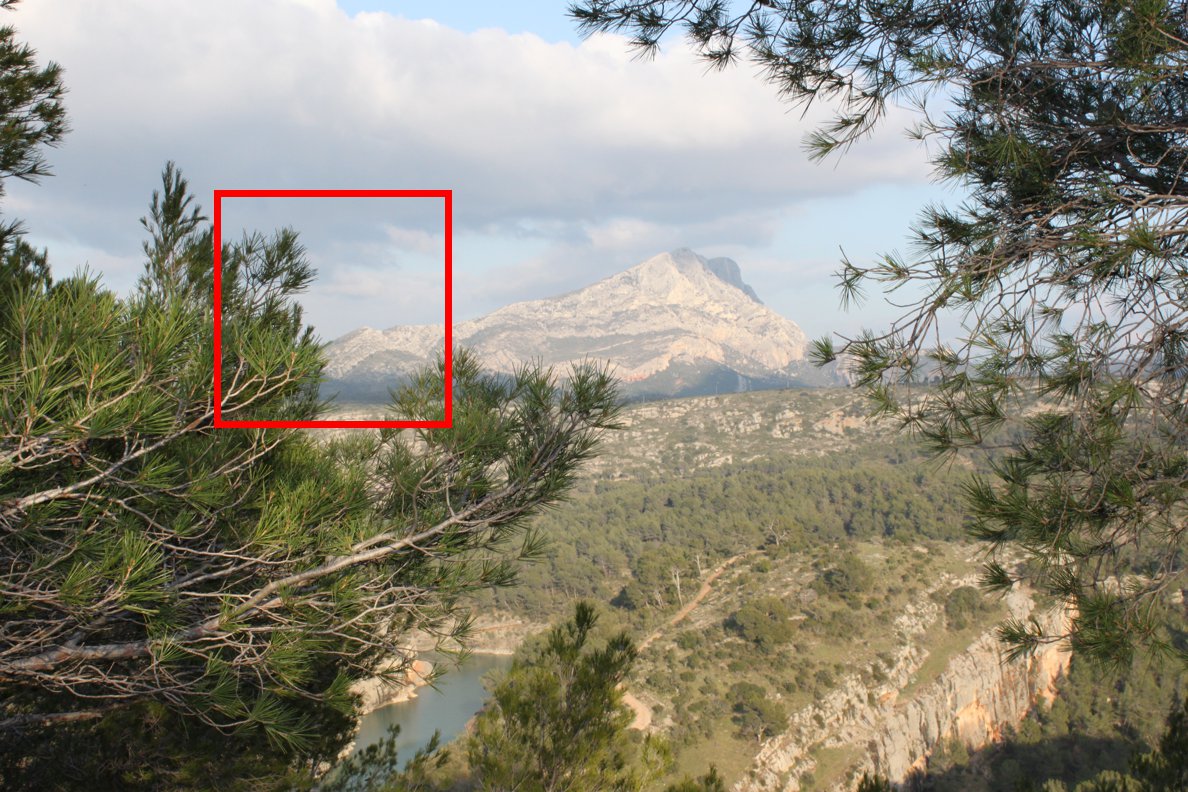}};;
    \node[anchor=south west]  (ours) at
    (2*\resultwidth+2*0.02\linewidth,0){\includegraphics[width=\resultwidth]{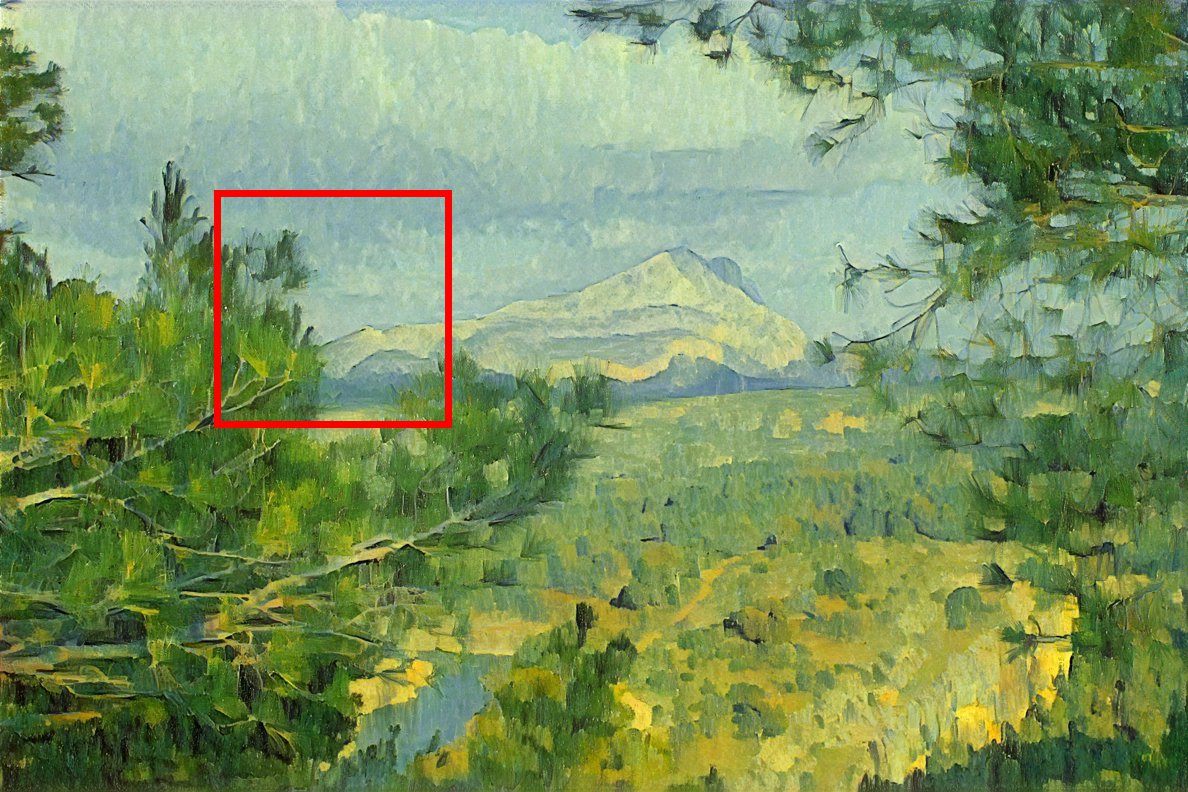}};
    \node[anchor=south west] (collab) at
    (3*\resultwidth+3*0.02\linewidth,0){\includegraphics[width=\resultwidth]{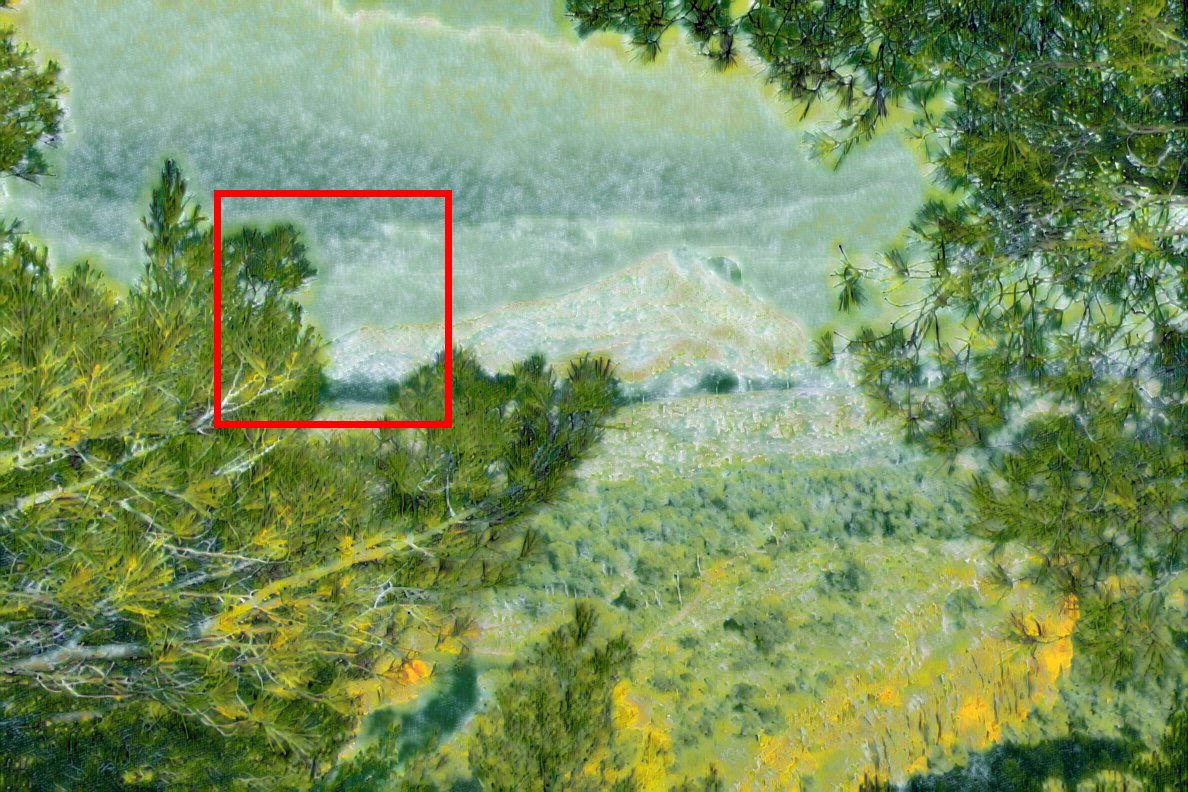}};
    \node[anchor=south west] (urst) at
    (4*\resultwidth+4*0.02\linewidth,0){\includegraphics[width=\resultwidth]{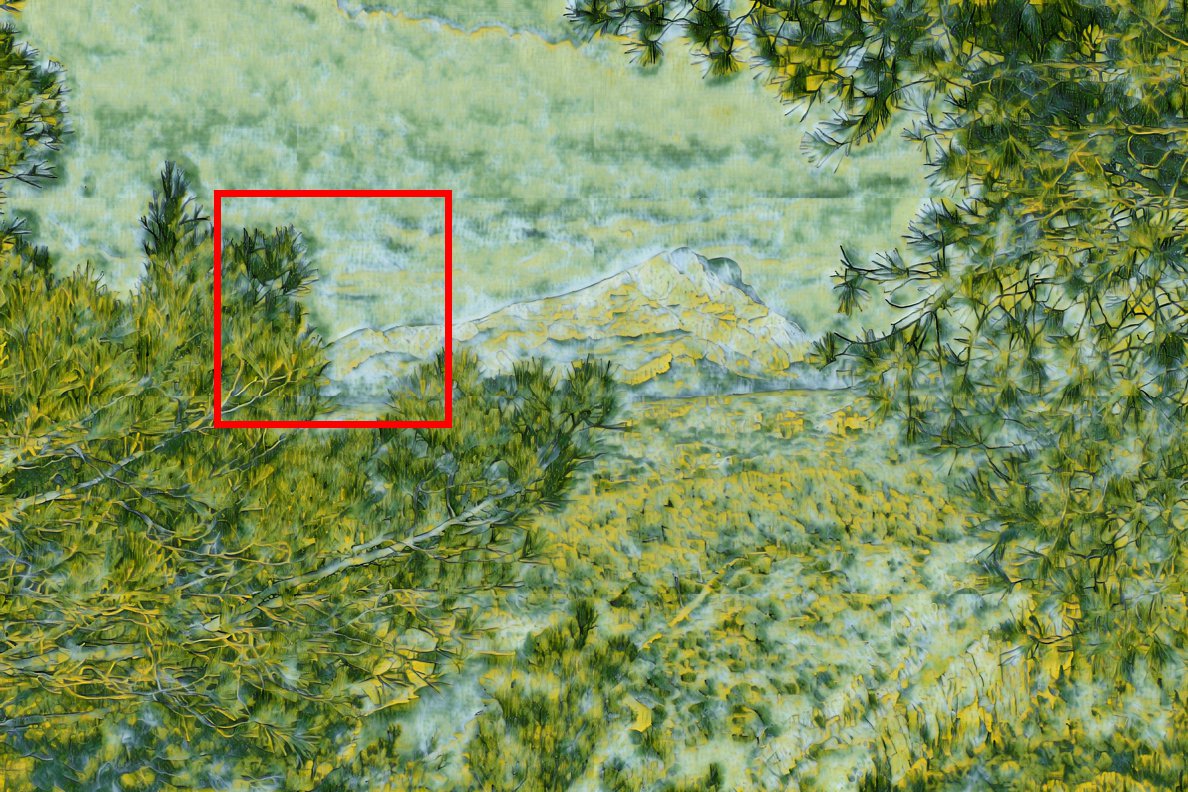}};
    \node[] (styleLabel) at (0.5*\resultwidth,2.35) {{Style image}};
    \node[] (contentLabel) at (1.5*\resultwidth+0.02\linewidth,2.35) {{Content image}};
    \node[] (spstLabel) at (2.5*\resultwidth+2*0.02\linewidth,2.35) {{SPST (ours)}};
    \node[] (cdLabel) at (3.5*\resultwidth+3*0.02\linewidth,2.35) {{CD}};
    \node[] (urstLabel) at (4.5*\resultwidth+4*0.02\linewidth,2.35) {{URST}};
    \node[anchor = north west] at
    (0,-0.02\linewidth) {\includegraphics[width=\resultwidth]{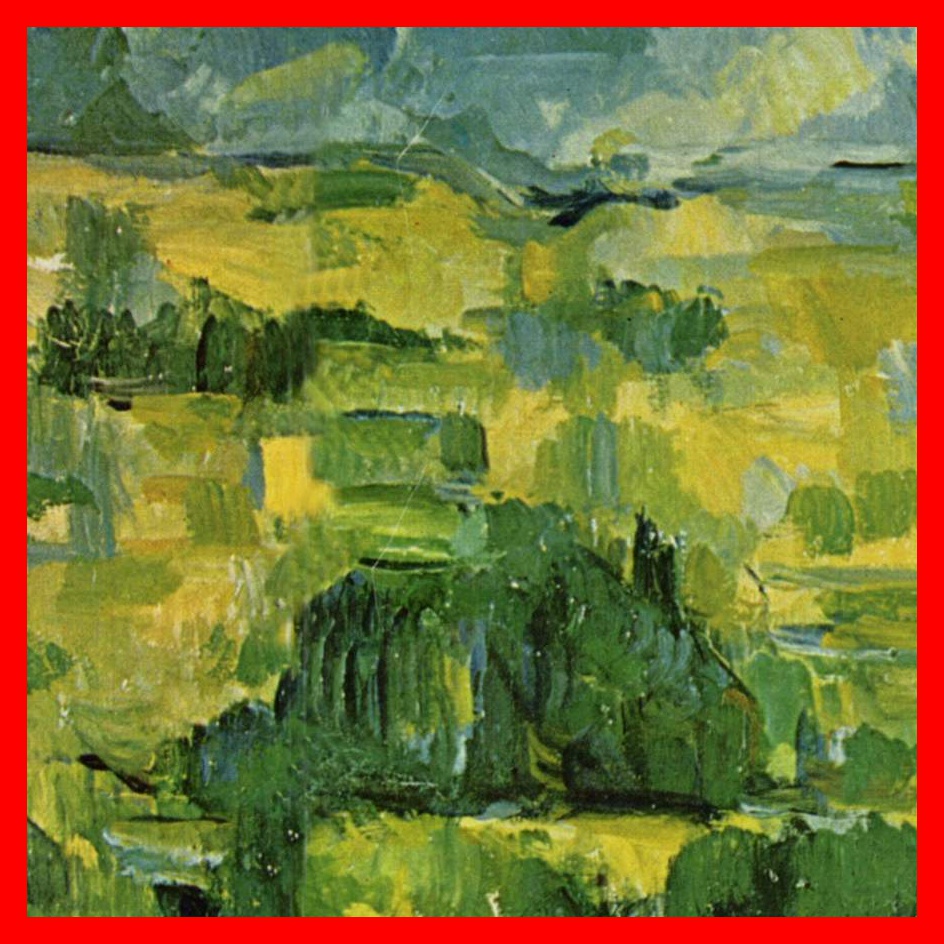}};
    \node[anchor = north west] at
    (\resultwidth+0.02\linewidth,-0.02\linewidth) {\includegraphics[width=\resultwidth]{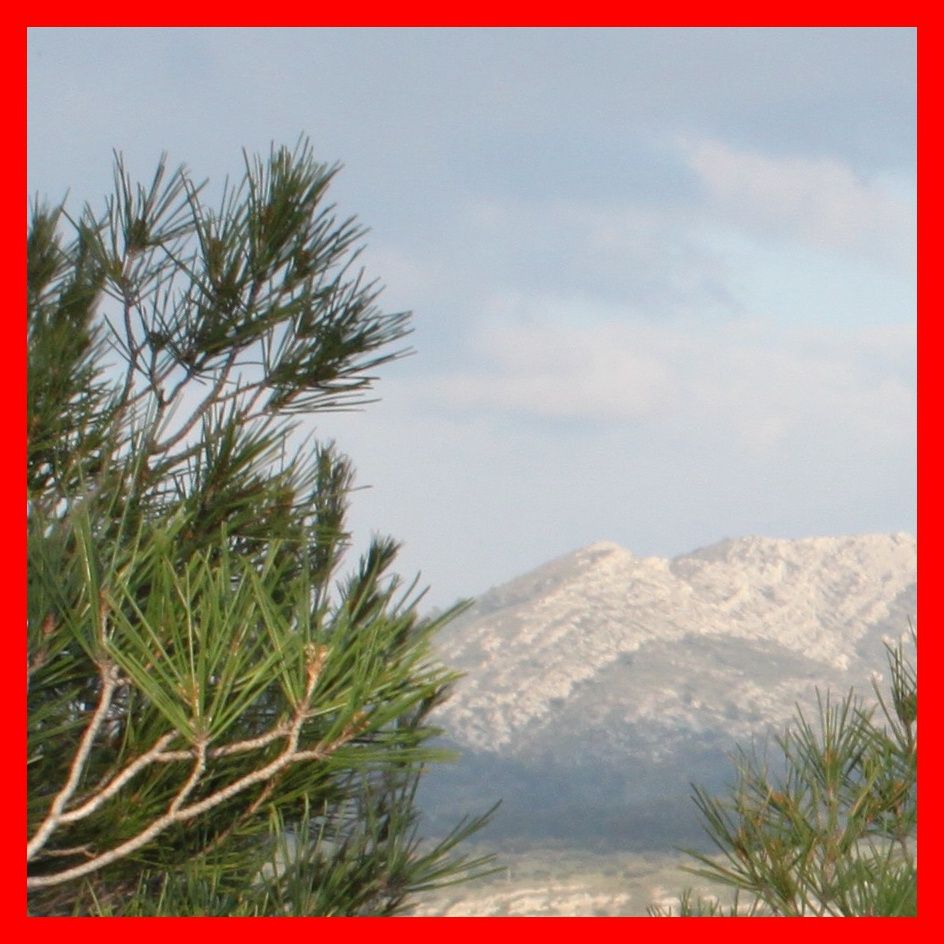}};
    \node[anchor = north west] at
    (2*\resultwidth+2*0.02\linewidth,-0.02\linewidth) {\includegraphics[width=\resultwidth]{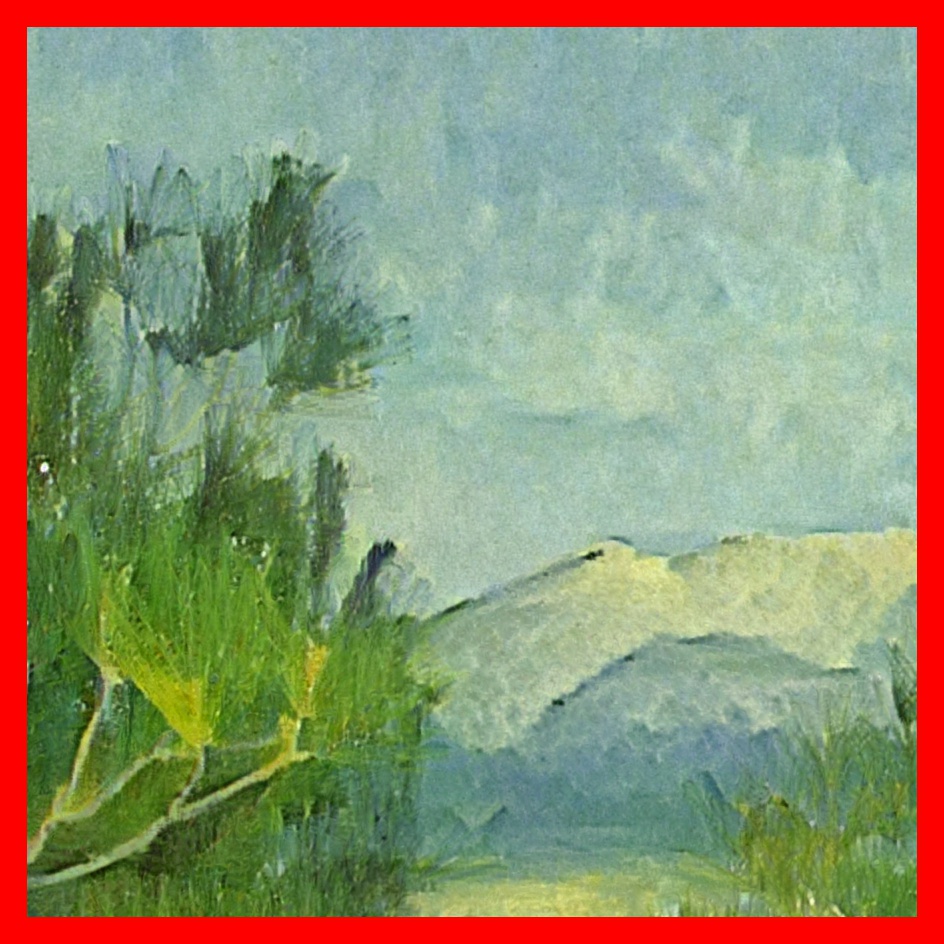}};
    \node[anchor = north west] at
    (3*\resultwidth+3*0.02\linewidth,-0.02\linewidth){\includegraphics[width=\resultwidth]{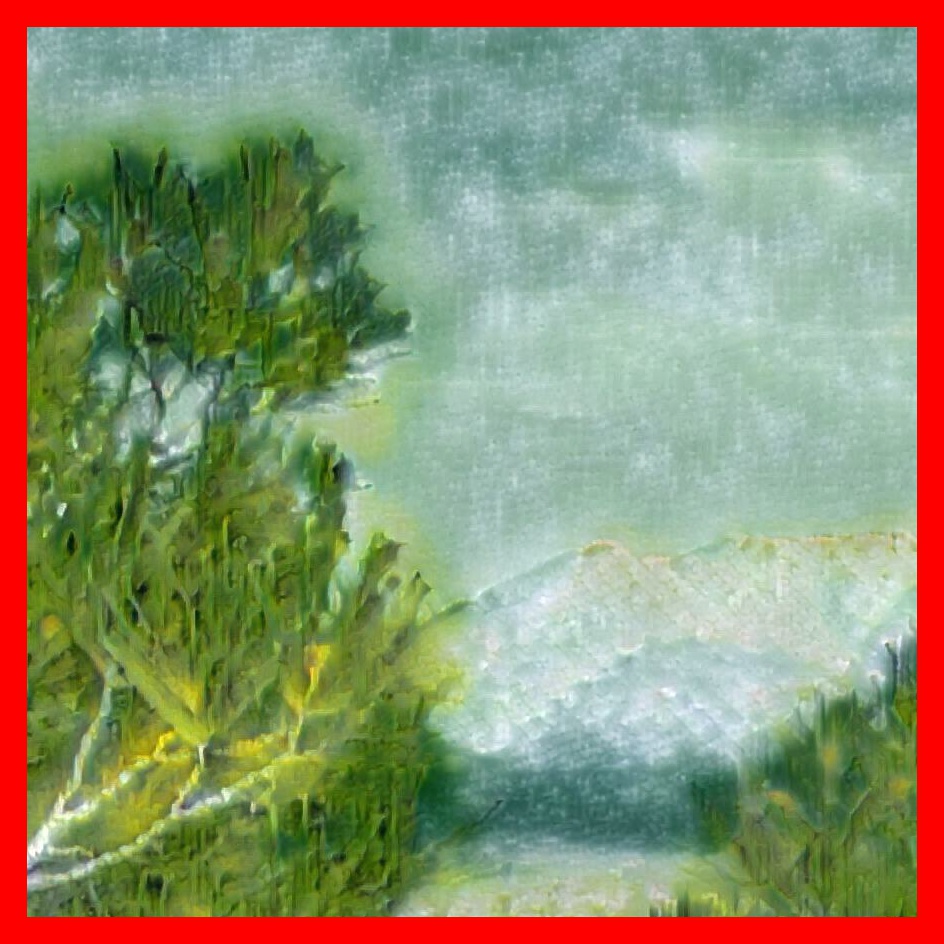}};
    \node[anchor = north west] at
    (4*\resultwidth+4*0.02\linewidth,-0.02\linewidth){\includegraphics[width=\resultwidth]{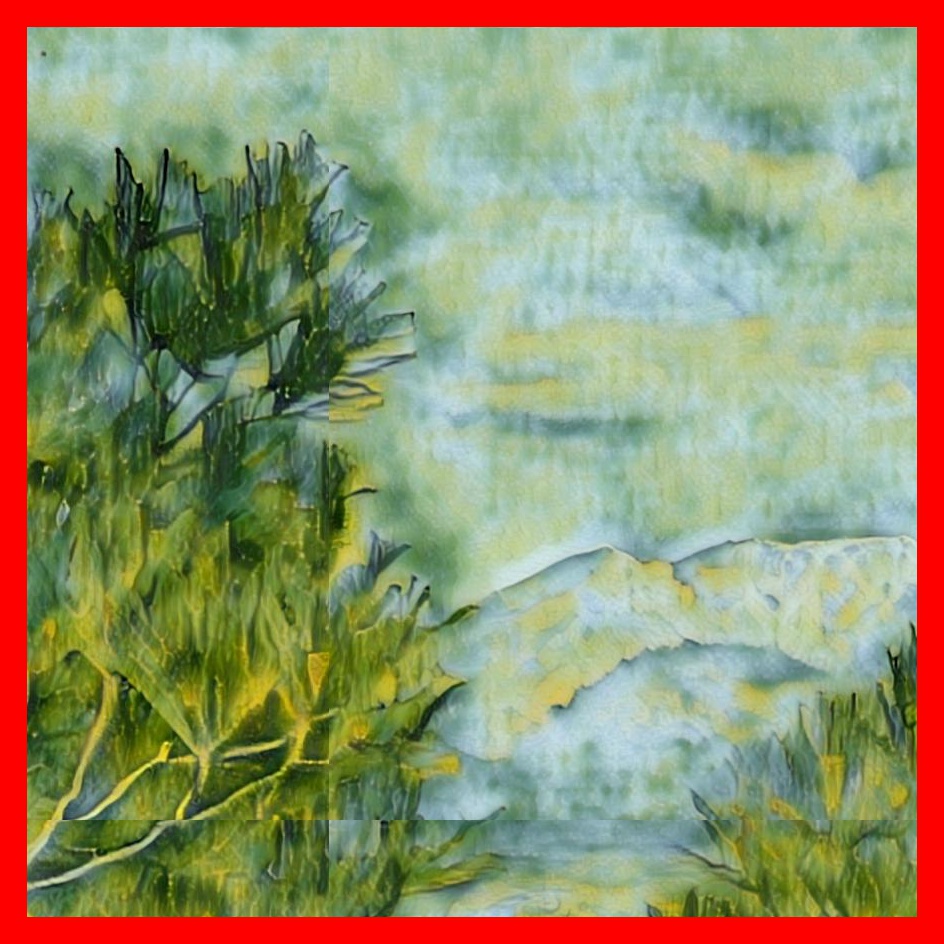}};
\end{tikzpicture}

\vspace{0.015\linewidth}

\begin{tikzpicture}[spy using outlines={rectangle, magnification=5,height=\resultwidth,width=\resultwidth, every spy on node/.append style={thick}}, every node/.style={inner sep=0,outer sep=0}]%
    \node[anchor=south west] (style) at (0,0){\includegraphics[width=0.9392361111111112\resultwidth]{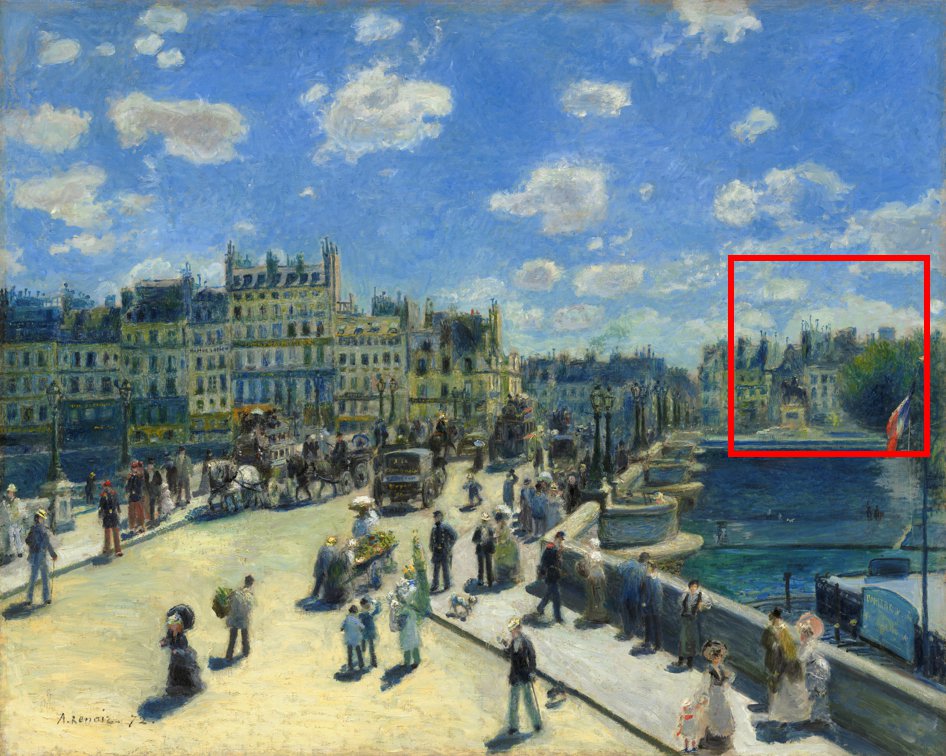}};
    \node[anchor=south west] (content) at
    (\resultwidth+0.02\linewidth,0){\includegraphics[width=\resultwidth]{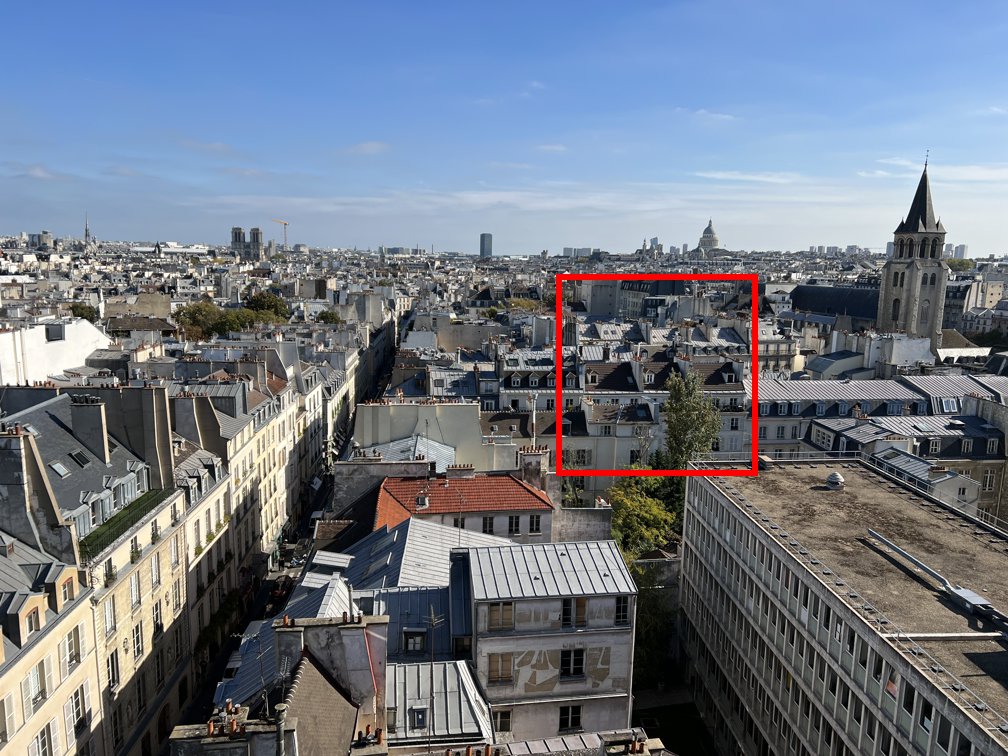}};;
    \node[anchor=south west]  (ours) at
    (2*\resultwidth+2*0.02\linewidth,0){\includegraphics[width=\resultwidth]{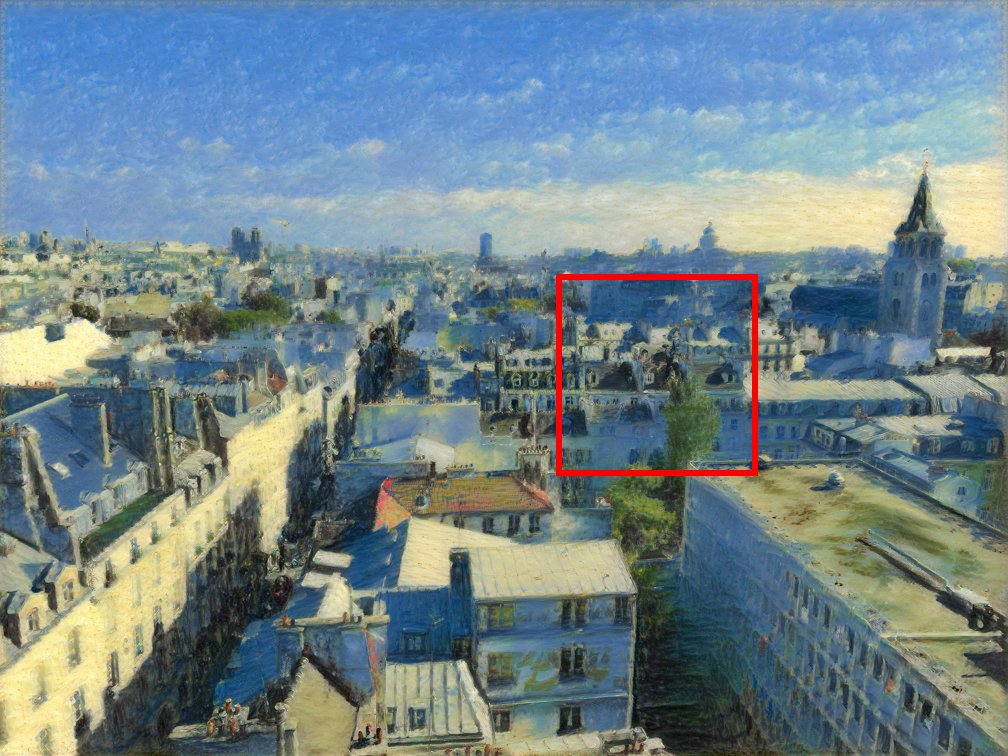}};
    \node[anchor=south west] (collab) at
    (3*\resultwidth+3*0.02\linewidth,0){\includegraphics[width=\resultwidth]{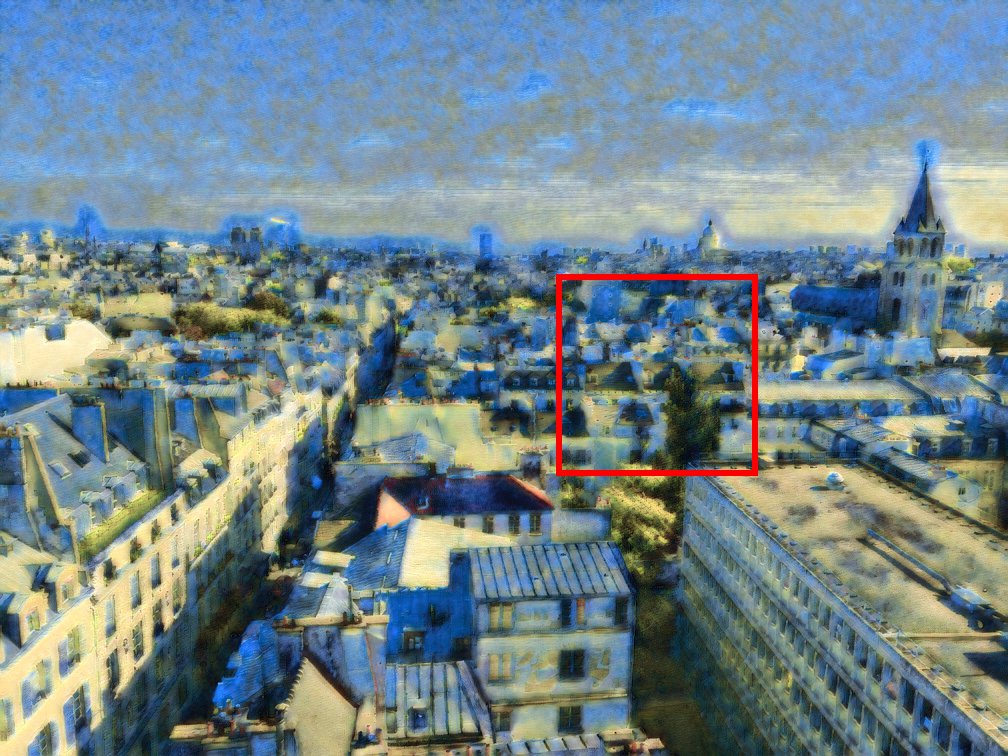}};
    \node[anchor=south west] (urst) at
    (4*\resultwidth+4*0.02\linewidth,0){\includegraphics[width=\resultwidth]{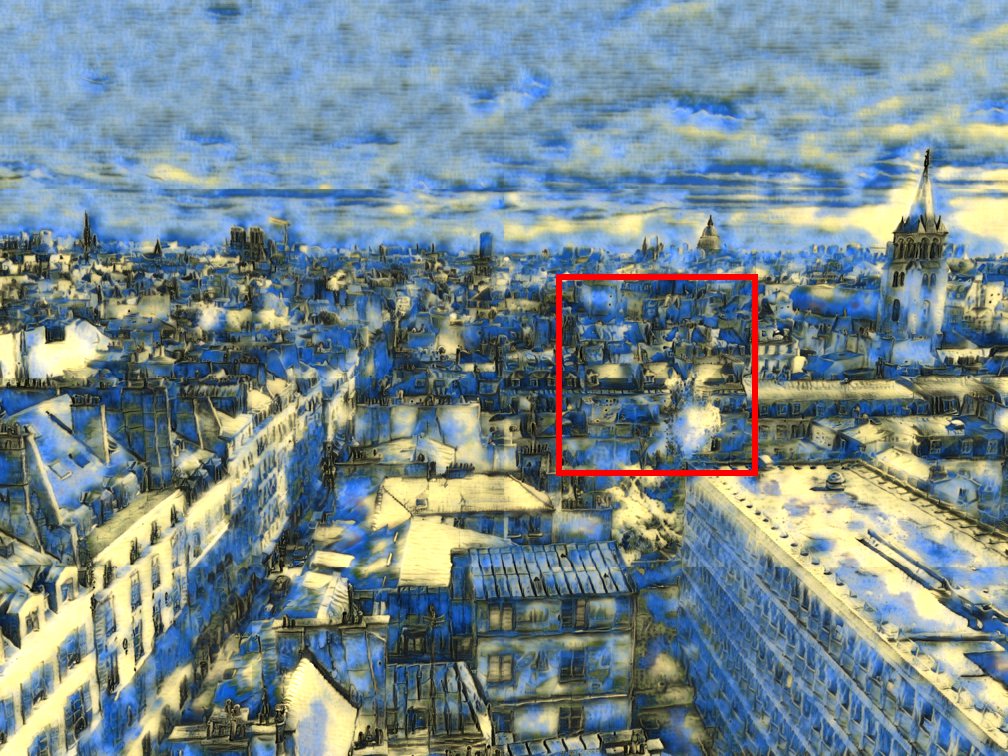}};
    \node[anchor = north west] at
    (0,-0.02\linewidth){\includegraphics[width=\resultwidth]{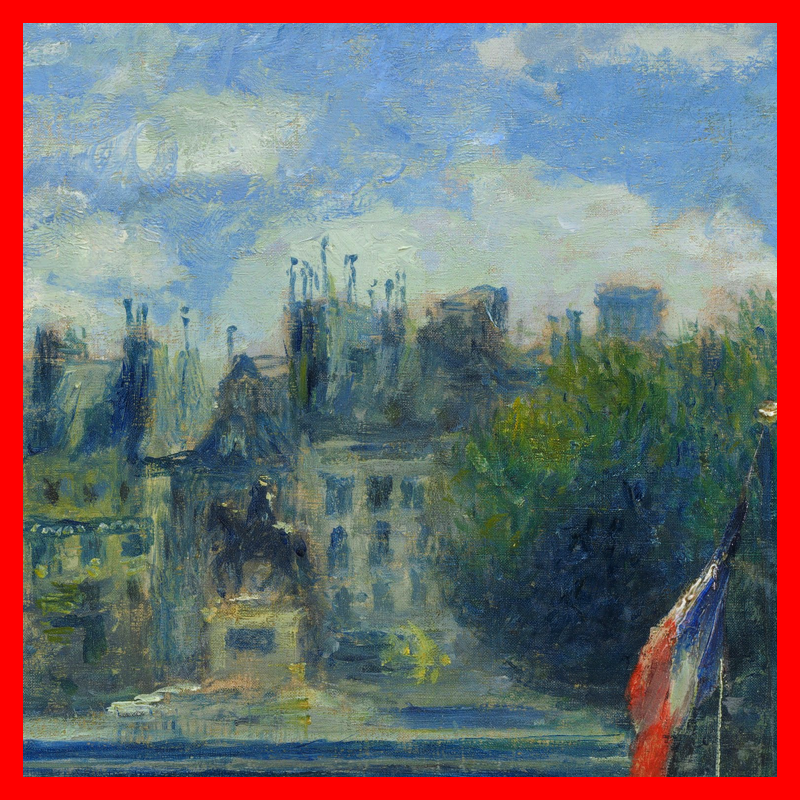}};
    \node[anchor = north west] at
    (\resultwidth+0.02\linewidth,-0.02\linewidth){\includegraphics[width=\resultwidth]{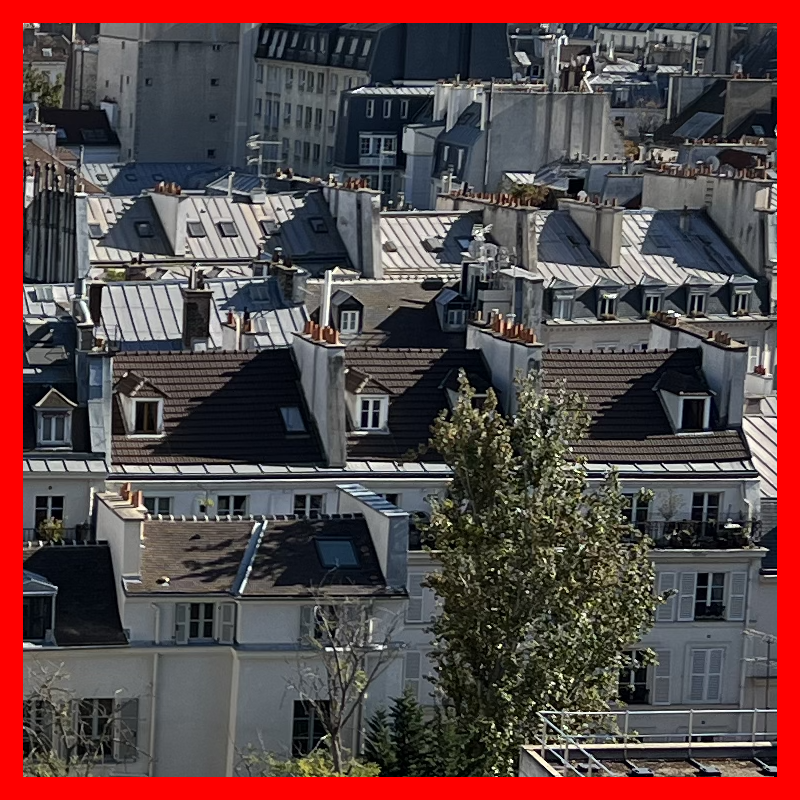}};
    \node[anchor = north west] at
    (2*\resultwidth+2*0.02\linewidth,-0.02\linewidth){\includegraphics[width=\resultwidth]{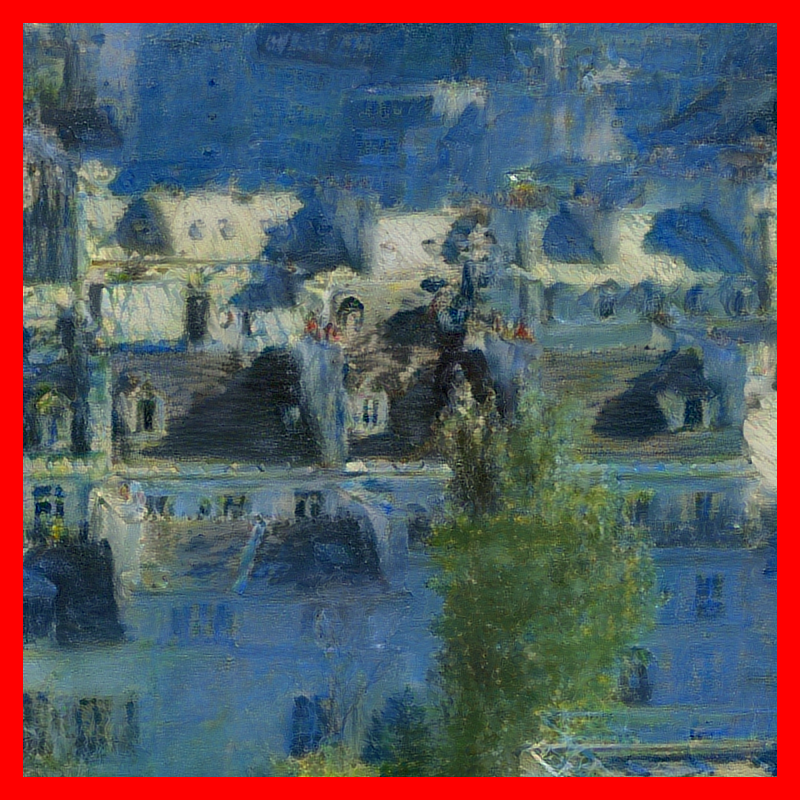}};
    \node[anchor = north west] at
    (3*\resultwidth+3*0.02\linewidth,-0.02\linewidth){\includegraphics[width=\resultwidth]{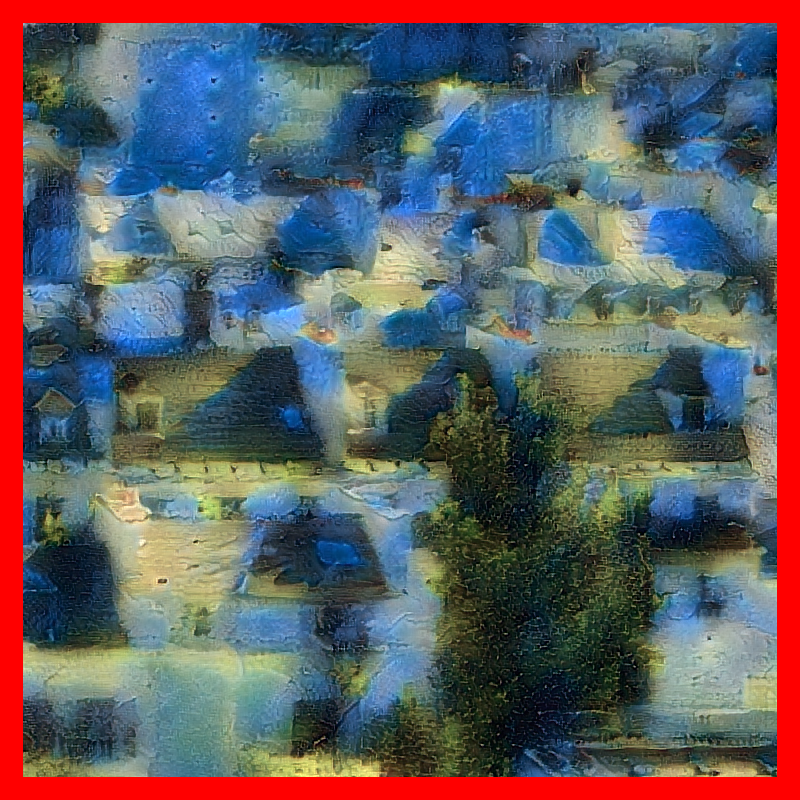}};
    \node[anchor = north west] at
    (4*\resultwidth+4*0.02\linewidth,-0.02\linewidth){\includegraphics[width=\resultwidth]{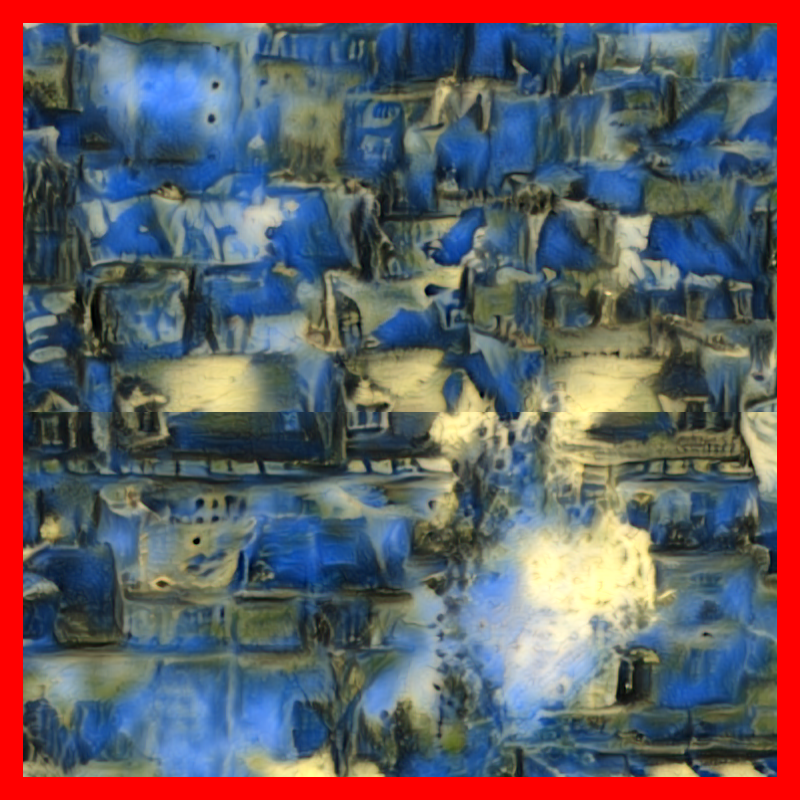}};
\end{tikzpicture}

\vspace{0.015\linewidth}

\begin{tikzpicture}[spy using outlines={rectangle, magnification=8,height=\resultwidth,width=\resultwidth, every spy on node/.append style={thick}}, every node/.style={inner sep=0,outer sep=0}]%
    \node[anchor=south west] (style) at (0,0){\includegraphics[width=0.723404255319149\resultwidth]{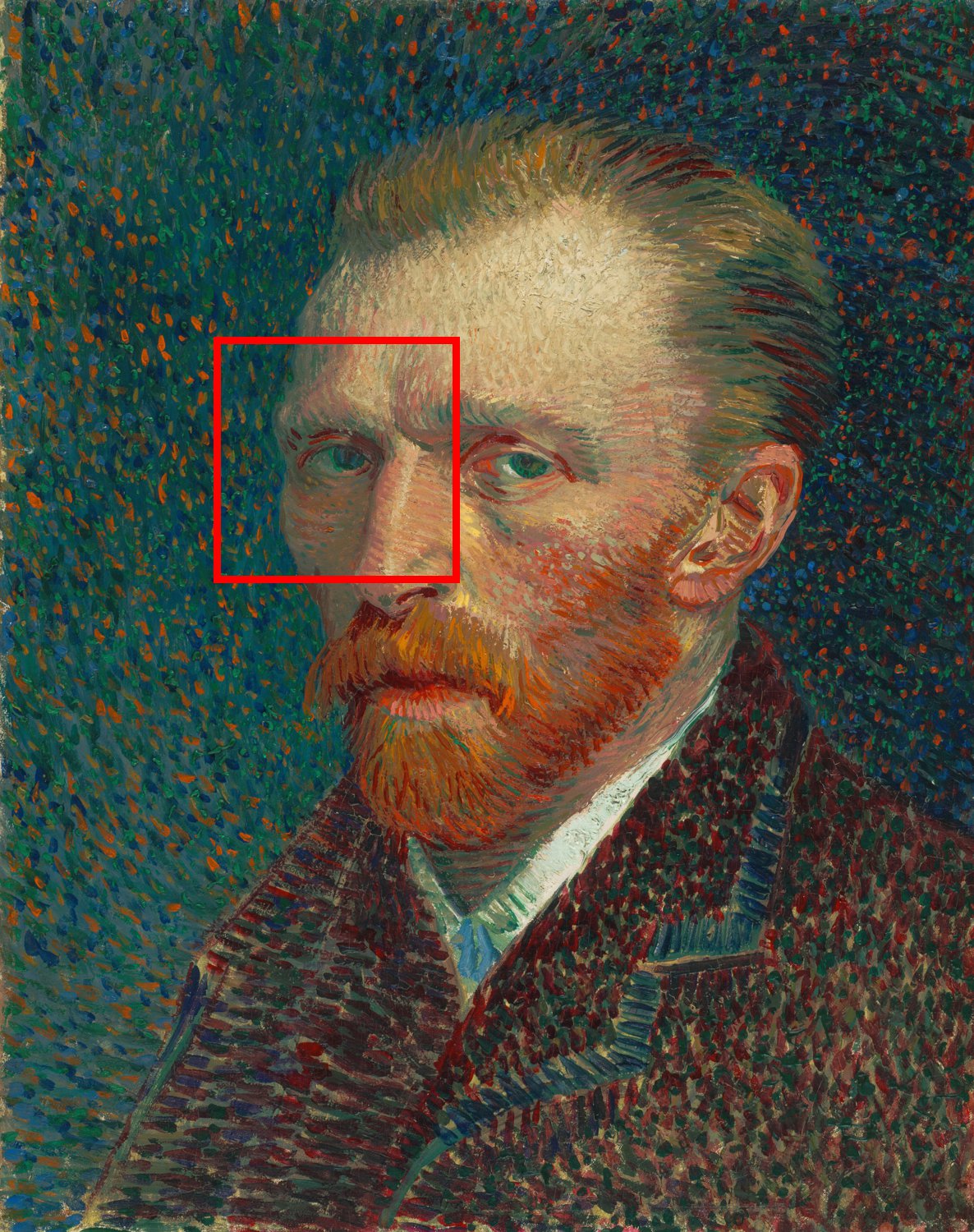}};
    \node[anchor=south west] (content) at
    (\resultwidth+0.02\linewidth,0){\includegraphics[width=\resultwidth]{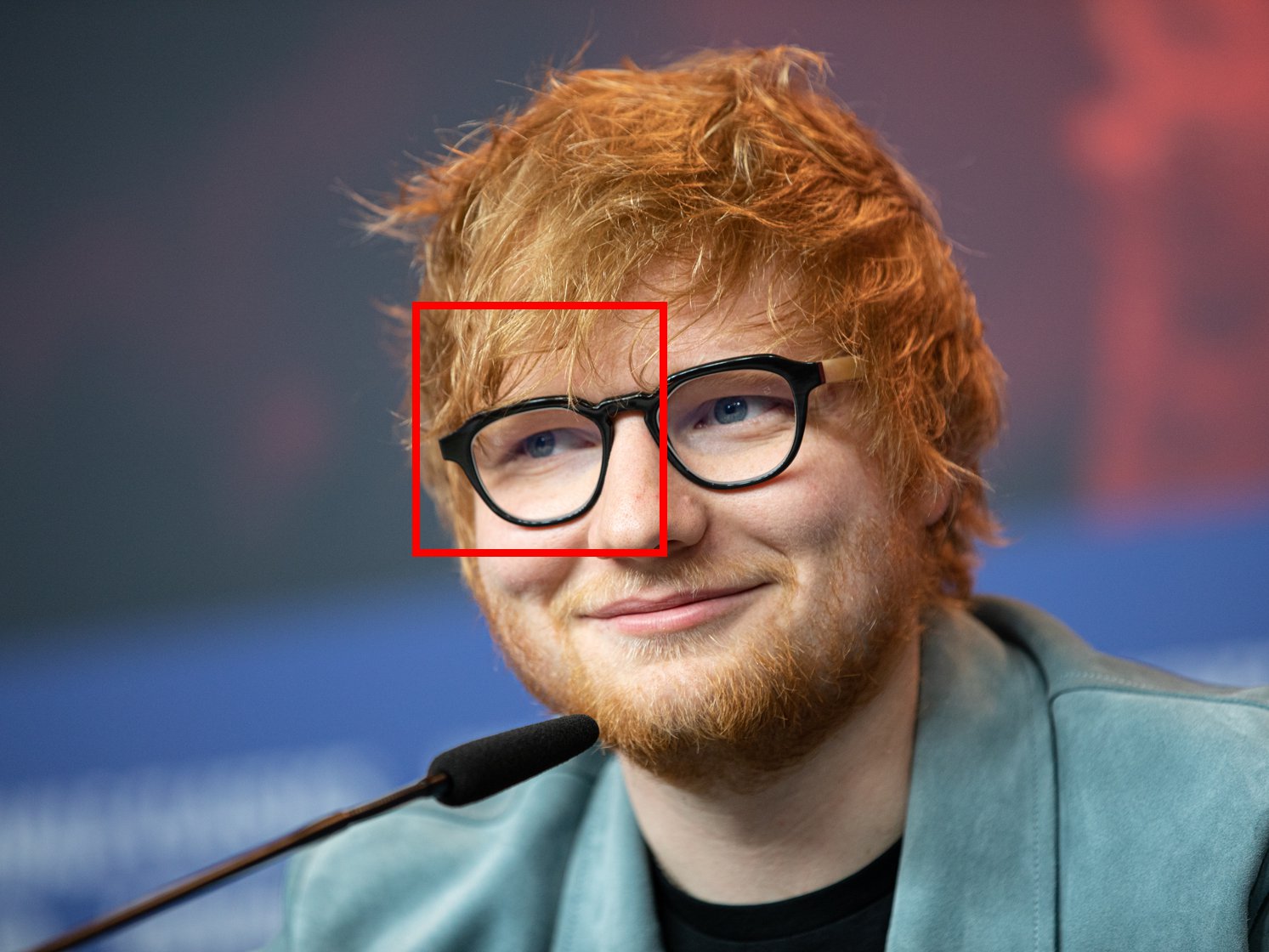}};;
    \node[anchor=south west]  (ours) at
    (2*\resultwidth+2*0.02\linewidth,0){\includegraphics[width=\resultwidth]{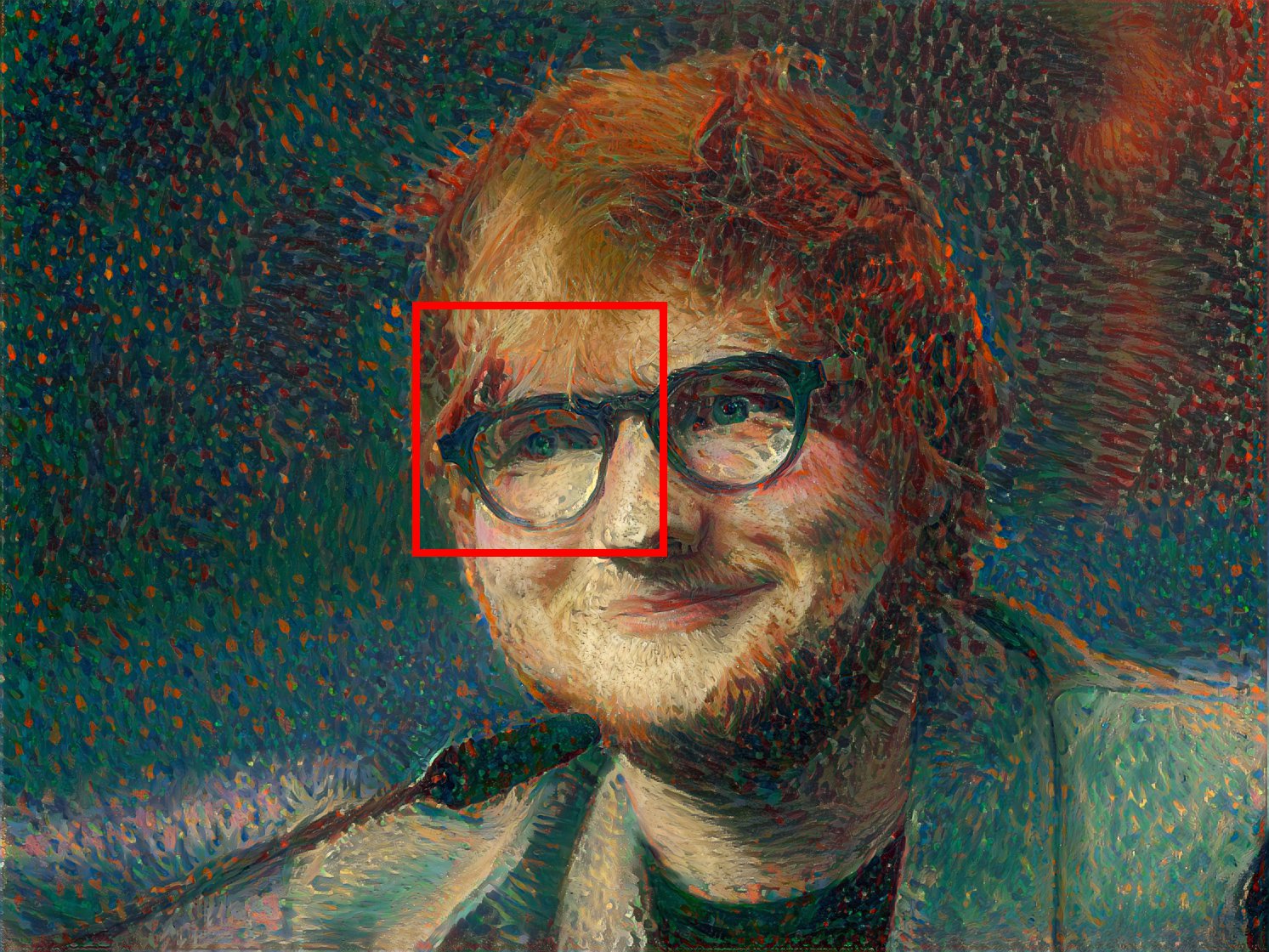}};
    \node[anchor=south west] (collab) at
    (3*\resultwidth+3*0.02\linewidth,0){\includegraphics[width=\resultwidth]{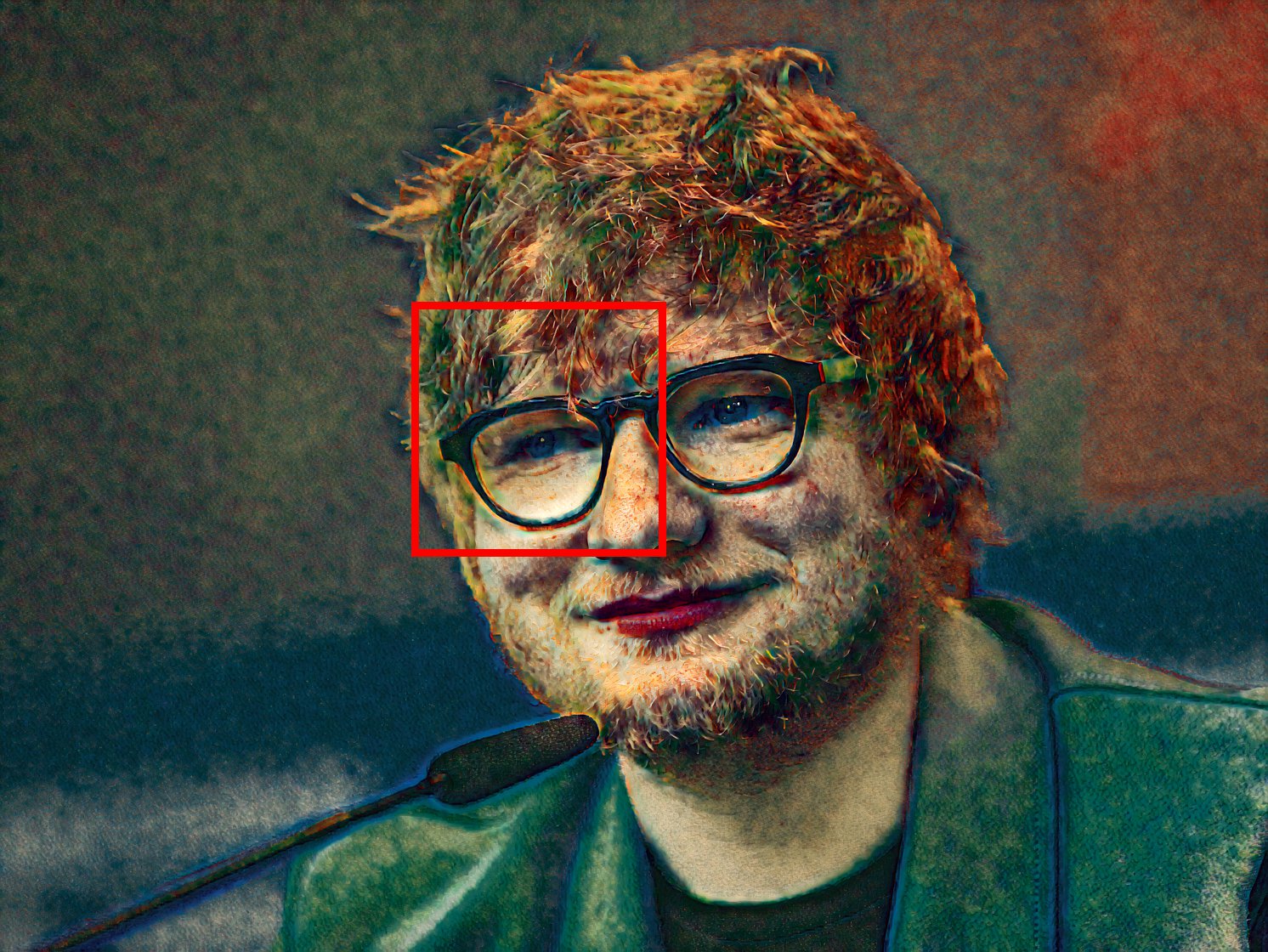}};
    \node[anchor=south west] (urst) at
    (4*\resultwidth+4*0.02\linewidth,0){\includegraphics[width=\resultwidth]{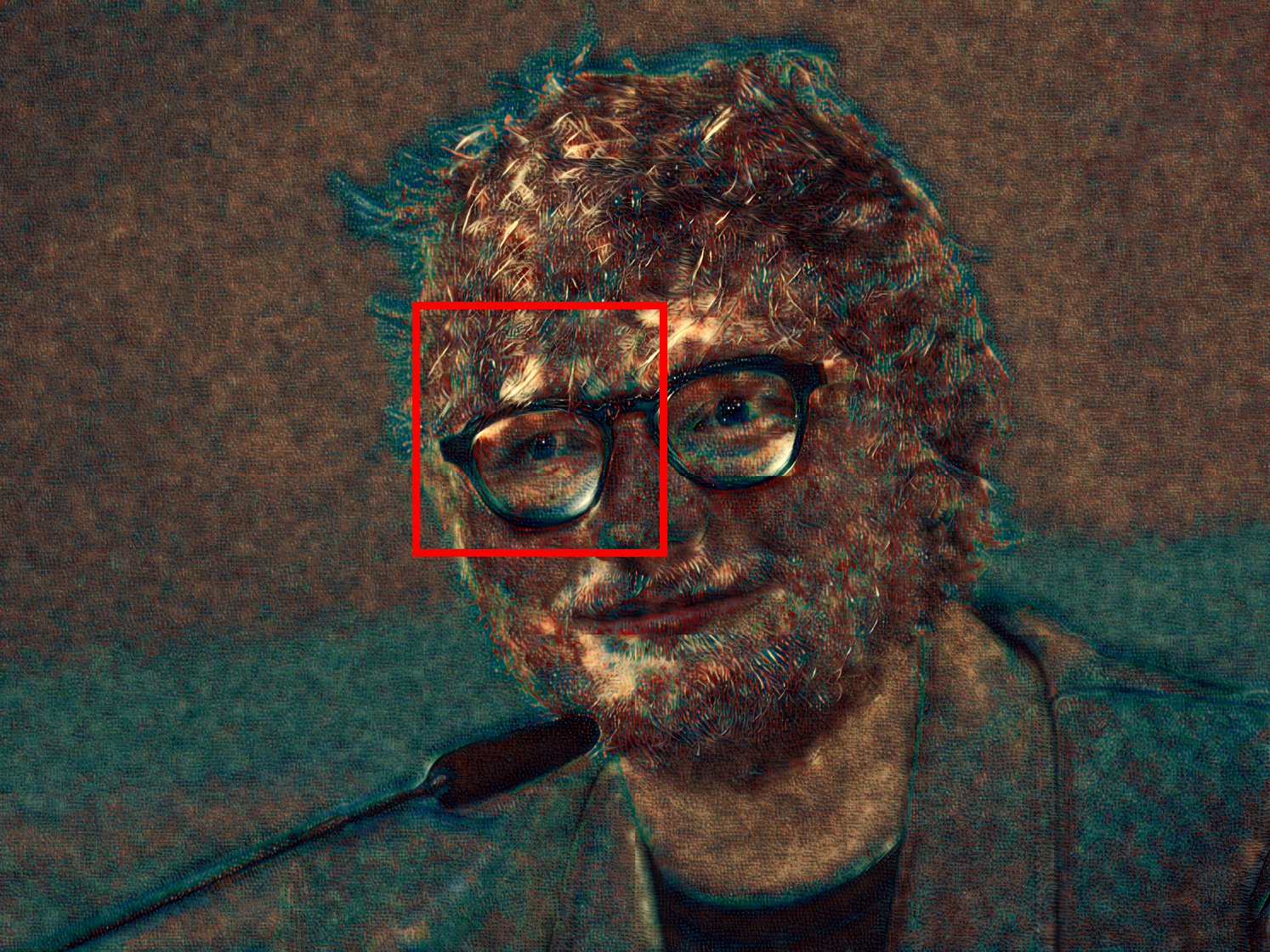}};
\node[anchor = north west] at
    (0,-0.02\linewidth){\includegraphics[width=\resultwidth]{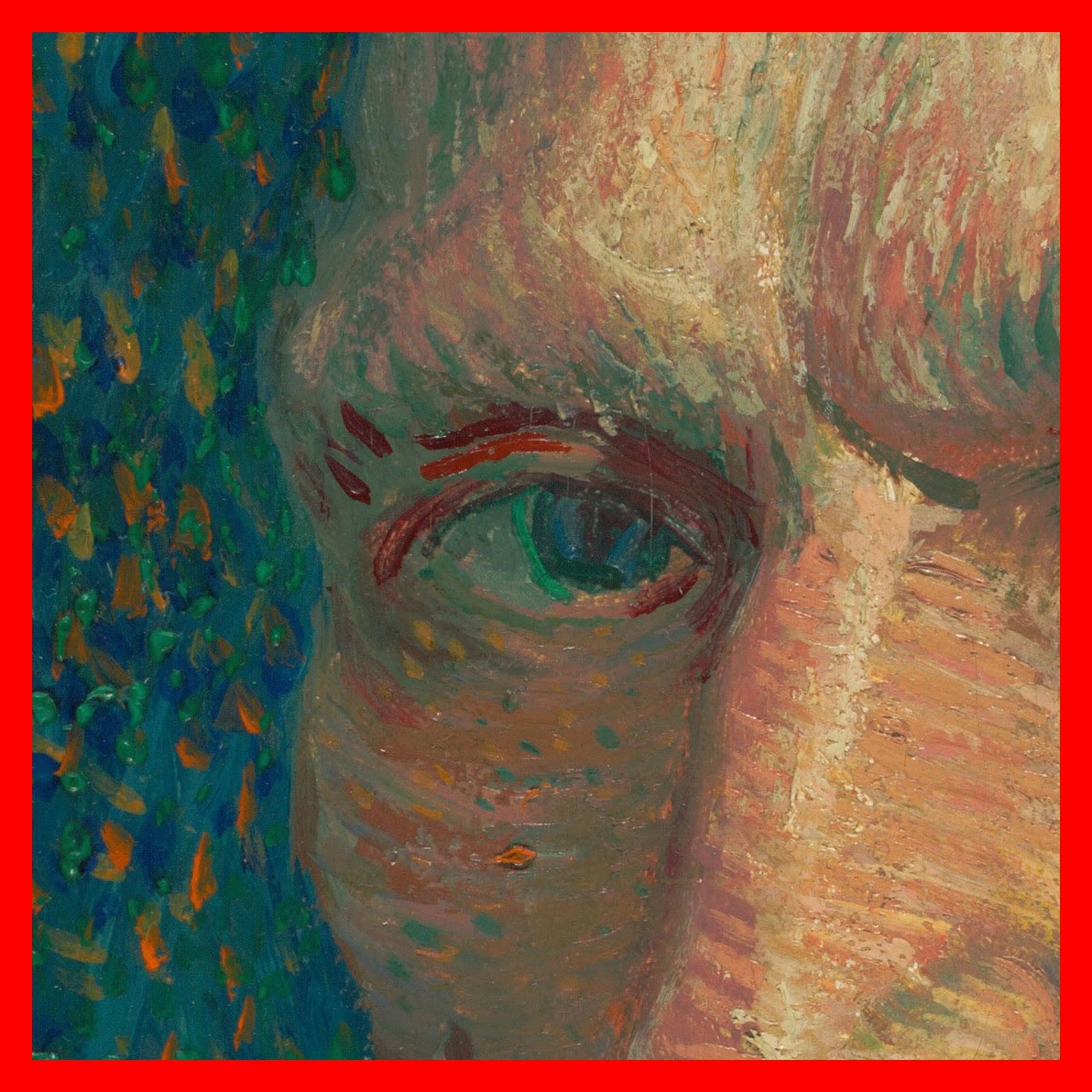}};
    \node[anchor = north west] at
    (\resultwidth+0.02\linewidth,-0.02\linewidth){\includegraphics[width=\resultwidth]{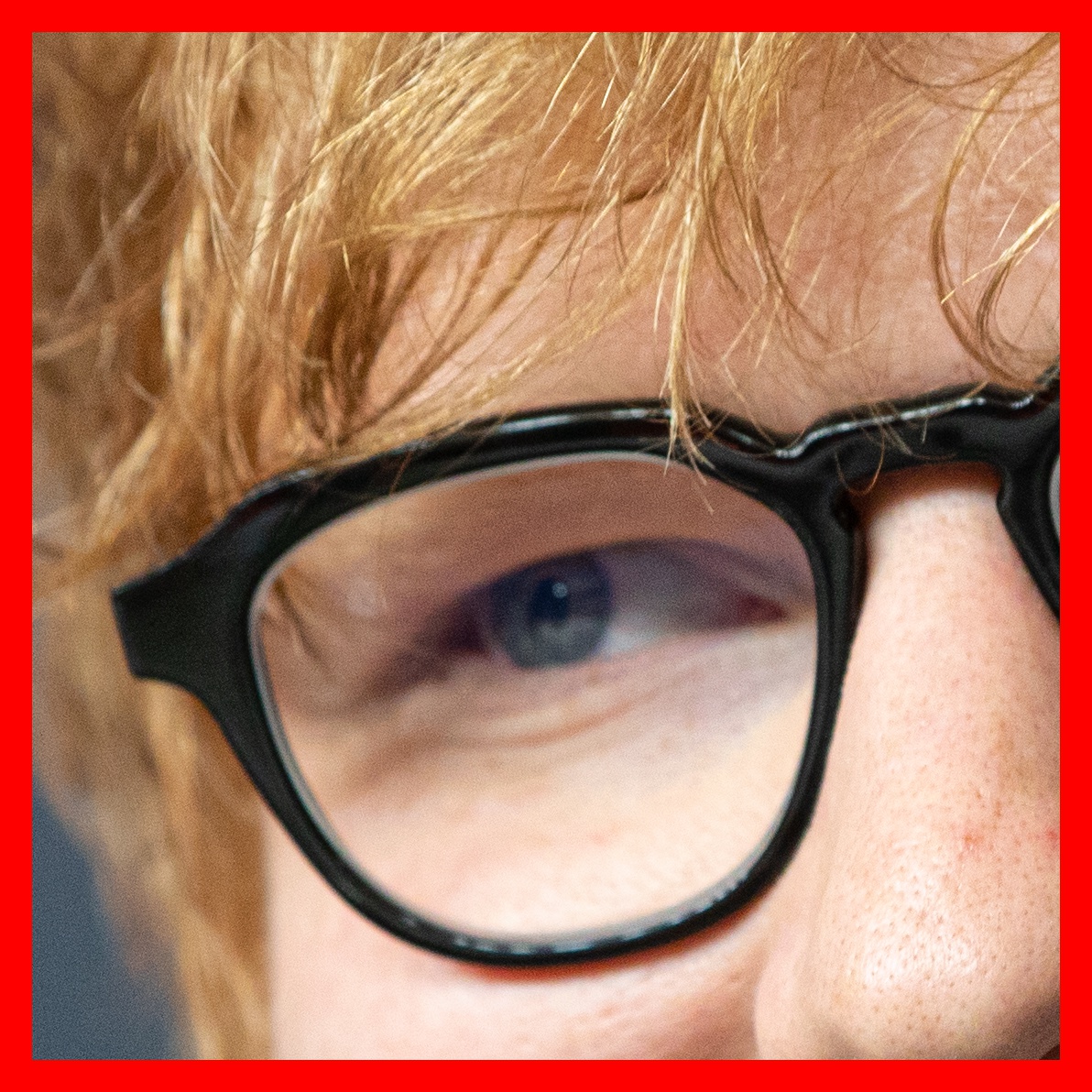}};
    \node[anchor = north west] at
    (2*\resultwidth+2*0.02\linewidth,-0.02\linewidth){\includegraphics[width=\resultwidth]{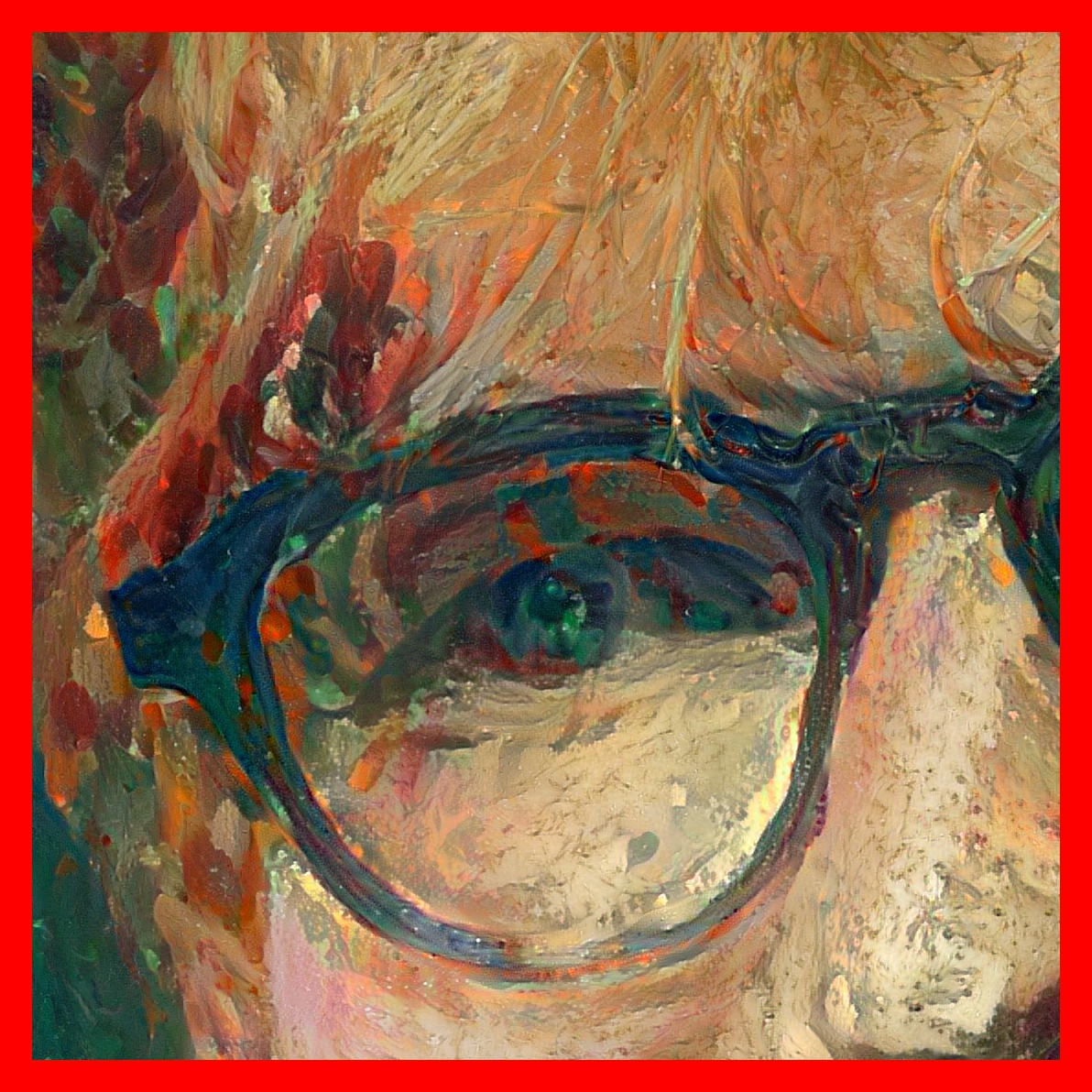}};
    \node[anchor = north west] at
    (3*\resultwidth+3*0.02\linewidth,-0.02\linewidth){\includegraphics[width=\resultwidth]{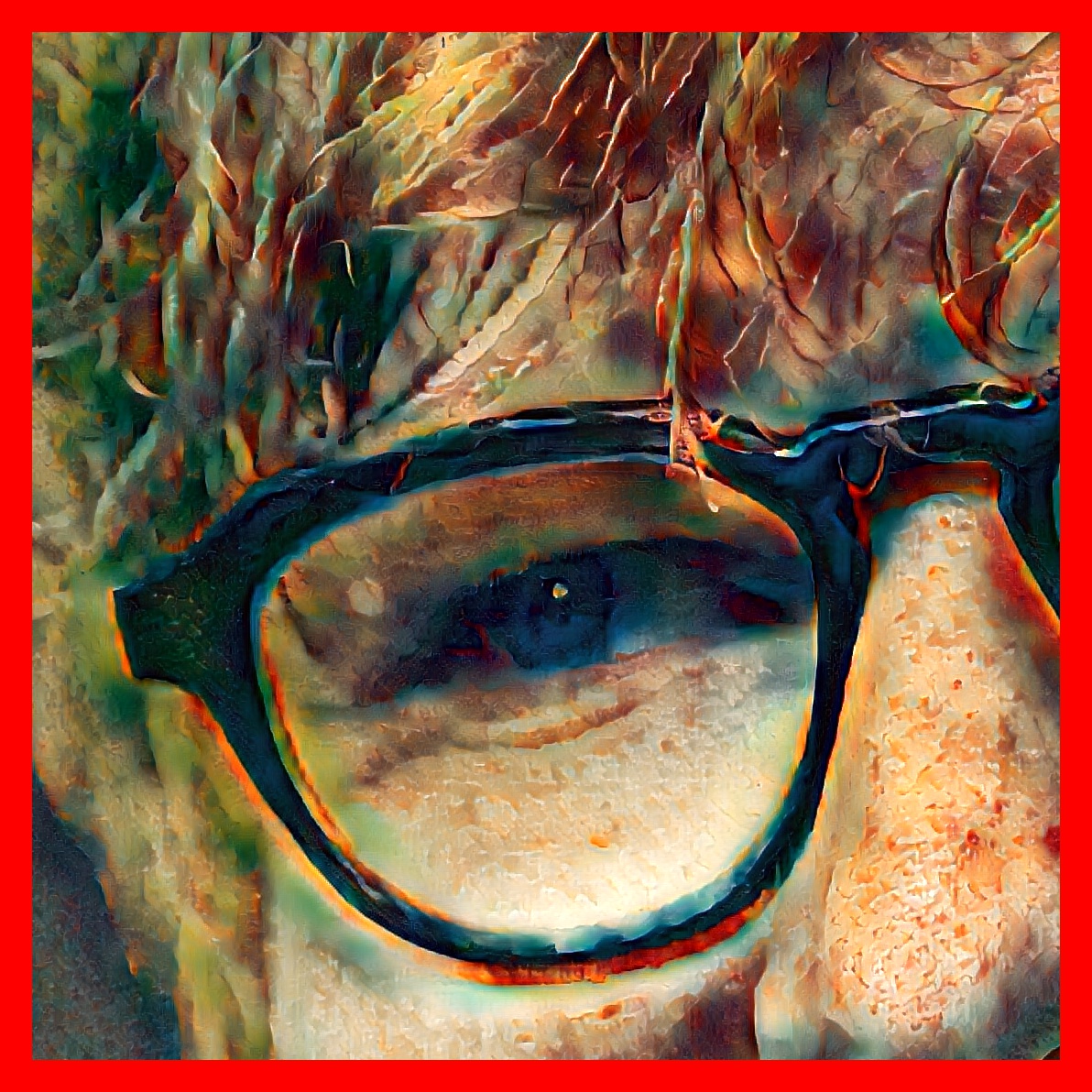}};
    \node[anchor = north west] at
    (4*\resultwidth+4*0.02\linewidth,-0.02\linewidth){\includegraphics[width=\resultwidth]{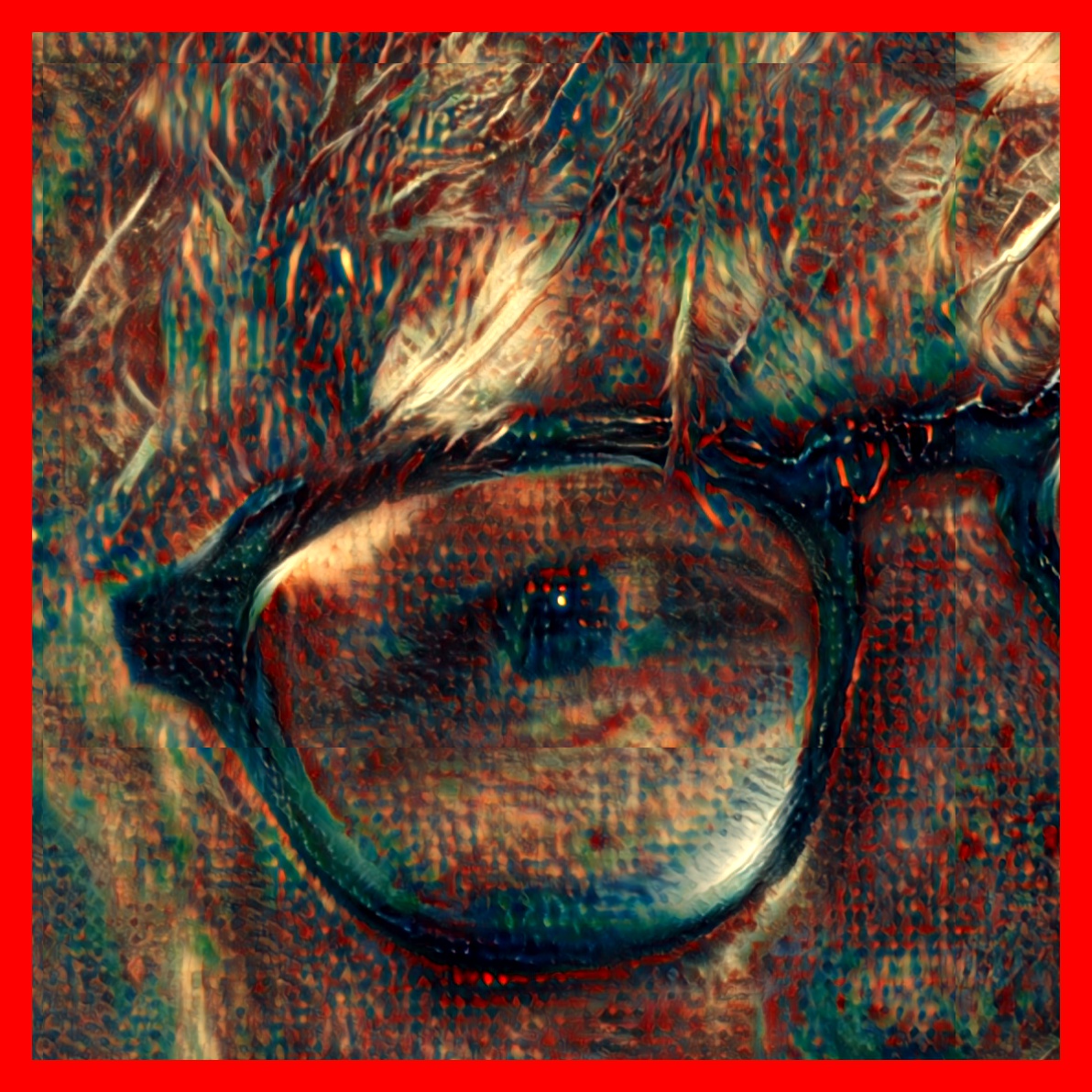}};
\end{tikzpicture}

\caption{Comparison of UHR style transfers. For each example, top row, left to right: style, content, our result (SPST), CD~\cite{Wang_2020_CVPR}, URST~\cite{Chen_Wang_Xie_Lu_Luo_towards_ultra_resolution_neural_style_transfer_thumbnail_instance_normalization_AAAI2022}. Bottom row: zoom in of the corresponding top row. First row: content ($3168\times 4752$), style ($2606\times 3176$), SPST uses three scales; third row: content (3024$\times$4032), style (3024$\times$3787), SPST uses three scales; fifth row: content ($4480\times5973$), style ($6000\times 4747$), SPST uses four scales.
In comparison to our results, state of the art very fast methods produce images with many defects: halo effect, neural artifacts, blending, unfaithful color palette, ... This result in images that do not look like painting contrary to SPST outputs.}
    \label{fig:style_transfer_comparison}
\end{figure*}

\subsection{Identity test for style transfer quality assessment}


\begin{table}[]
    \centering
    \small
    \begin{tabular}{@{}l@{}ccccc@{}}
        \toprule
        Method & PSNR  $\uparrow$ & SSIM $\uparrow$ & LPIPS $\downarrow$ & Gram $\downarrow$ & Time $\downarrow$\\
        \midrule
        SPST   & \underline{$24.6$} & $\mathbf{0.454}$ & $\mathbf{0.352}$ & $\mathbf{1.99\mathrm{e}5}$ & 25.1 \\
        SPST-fast   & $\mathbf{\mathbf{24.6}}$ & \underline{$0.438$} & \underline{$0.446$} & \underline{$4.08\mathrm{e}5$} &  4.96\\
        \midrule
        CD     & $21.8$ & $0.413$ & $0.500$ & $4.28\mathrm{e}7$ & \underline{0.373}\\
        URST   & $19.0$ & $0.413$ & $0.546$ & $6.77\mathrm{e}7$ & \textbf{0.232}\\
        \bottomrule
    \end{tabular}
    \caption{Quantitative evaluation of \emph{identity test} for UHR style transfer.
    Results include PSNR, SSIM~\cite{ssim}, LPIPS~\cite{Zhang_2018_CVPR}, the Gram (\emph{style distance}) metrics, and computation time in minutes for our results (SPST), its faster alternative (SPST-fast), CD~\cite{Wang_2020_CVPR} and URST~\cite{Chen_Wang_Xie_Lu_Luo_towards_ultra_resolution_neural_style_transfer_thumbnail_instance_normalization_AAAI2022}.
    All metrics are averages using $79$ HR paintings used as both content and style. Best results are in bold, second best underlined.
    Our iterative procedures SPST and SPST-fast are the best for all the image fidelity metrics but are respectively 100 and 20 times slower.}
    \label{tab:metrics}
\end{table}

Style transfer is an ill-posed problem by nature.
We {introduce here} an \emph{identity test} to evaluate if a method is able to reproduce a painting when using the same image for both content and style.
Two examples of this sanity check test are shown in Figure~\ref{fig:identity_test}.
We observe that our iterative algorithm is slightly less sharp than the original style image, yet high-resolution details from the paint texture are faithfully conveyed.
In comparison, the results of CD~\cite{Wang_2020_CVPR} suffer from color deviation and frequency artifacts while URST~\cite{Chen_Wang_Xie_Lu_Luo_towards_ultra_resolution_neural_style_transfer_thumbnail_instance_normalization_AAAI2022} applies a style transfer that is too homogeneous and present color and scale issues as already discussed. Again corresponding results for Figure~\ref{fig:identity_test} for SPST-fast are only reproduced in supp. mat. for readability.

Some previous works introduced a \emph{style distance}~\cite{Wang_2020_CVPR} that corresponds to the Gram loss for some VGG19 layers, showing again that fast approximate methods try to reproduce the algorithm of Gatys \emph{et al.} which we extend to UHR images.
Since we explicitly minimize this quantity, it is not fair to only consider this criterion for a quantitative evaluation.
For this reason, we also calculated PSNR, SSIM~\cite{ssim} and LPIPS~\cite{Zhang_2018_CVPR} metrics on a set of $79$ paint styles (see supp. mat.) to quantitatively evaluate our results.
We further report the ``Gram'' metric, that is, the style loss of Equation~\eqref{eq:style_loss} using the original Gram loss of Equation~\eqref{eq:gatys_gram_loss}, computed on UHR results using our localized approach.
The average scores reported in  Table~\ref{tab:metrics} confirm the good qualitative behavior discussed earlier: SPST and SPST-fast are by far the best for all the scores.
However, SPST and SPST-fast are respectively 100 and 20 times slower than the fastest method.


\begin{figure*}%
\newlength{\idtestwidth}
\setlength{\idtestwidth}{0.235\linewidth}%
\begin{tikzpicture}[spy using outlines={rectangle,magnification=14,height=\idtestwidth,width=\idtestwidth,
    every spy on node/.append style={thick}}, every node/.style={inner sep=0,outer sep=0}]
    \path (0cm,0cm) -- (\linewidth, 0cm);
\node[anchor=south west] (style) at (0,0){\includegraphics[width=\idtestwidth]{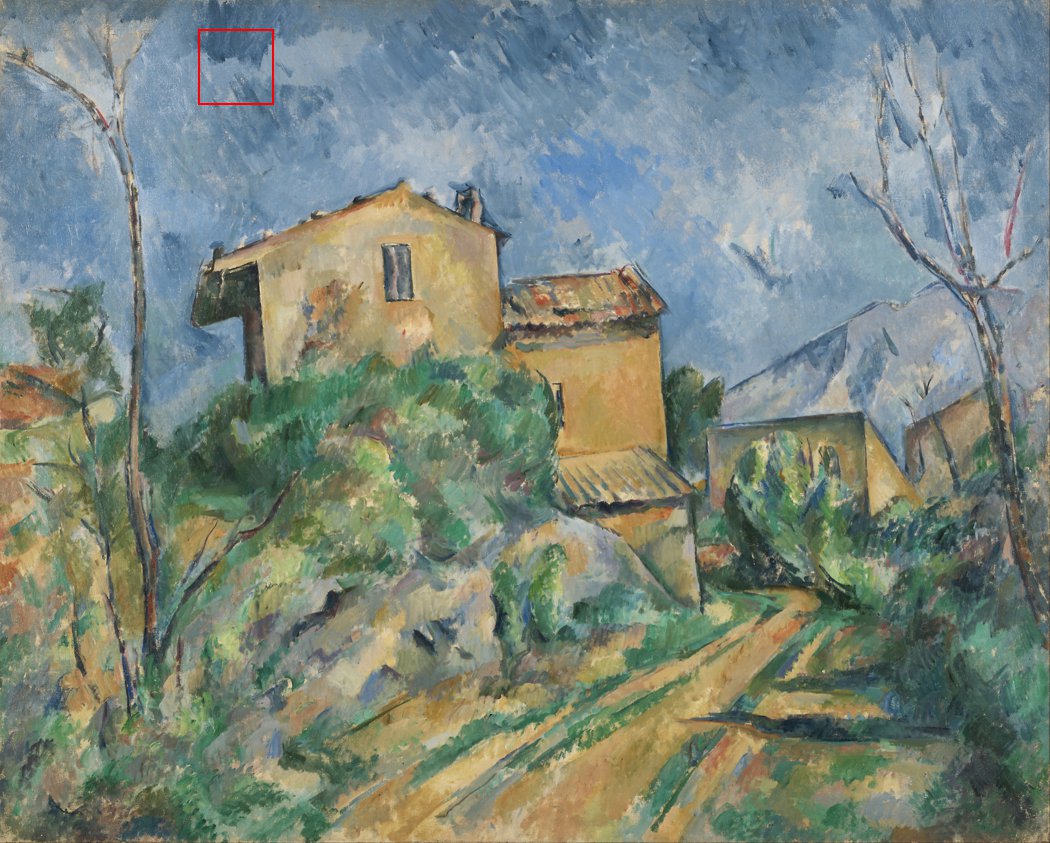}};
\node[anchor=south west]  (ours) at
    (\idtestwidth+0.02\linewidth,0){\includegraphics[width=\idtestwidth]{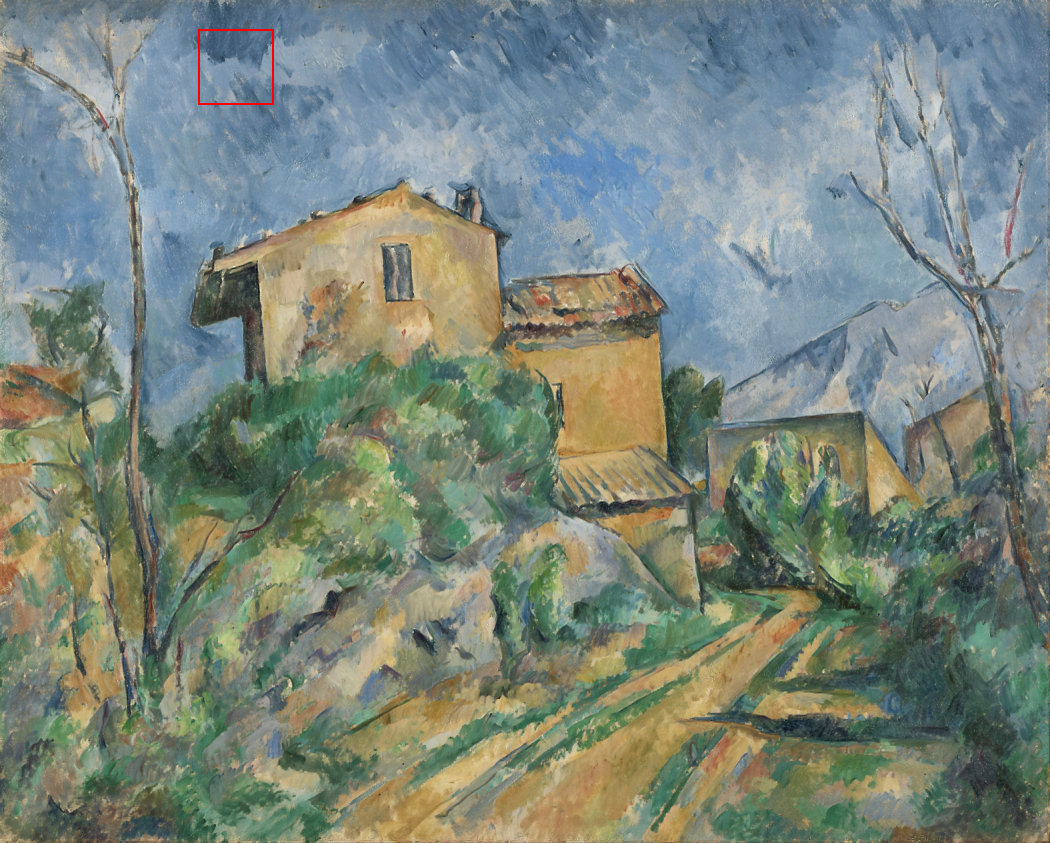}};
\node[anchor=south west]  (collab) at
    (2*\idtestwidth+2*0.02\linewidth,0){\includegraphics[width=\idtestwidth]{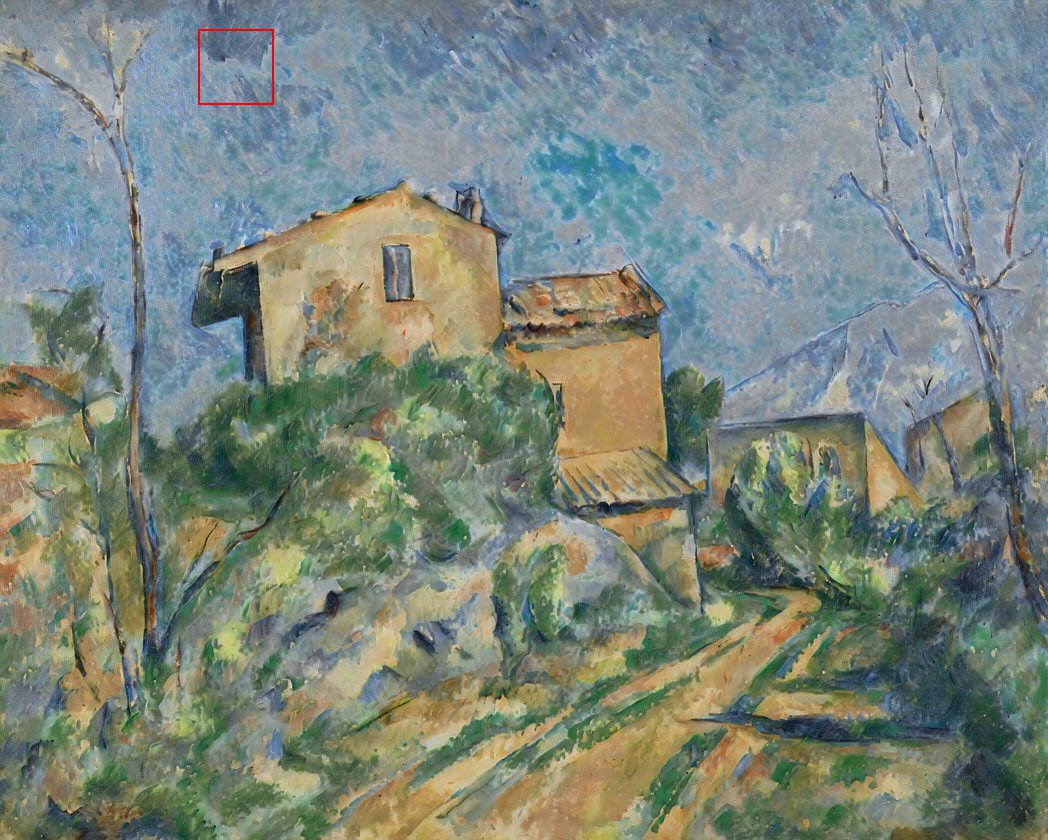}};
\node[anchor=south west] (urst) at
    (3*\idtestwidth+3*0.02\linewidth,0){\includegraphics[width=\idtestwidth]{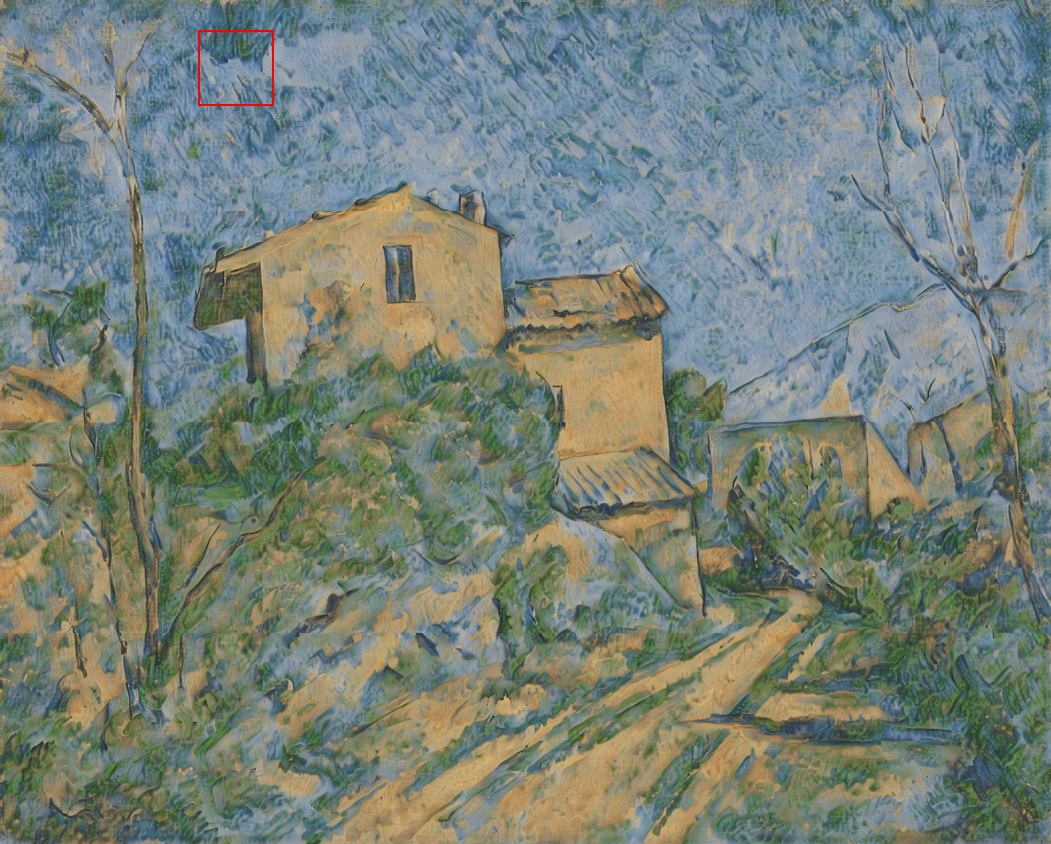}};
\node[] (styleLabel) at (0.5*\idtestwidth,3.55) {{Style/content image}};
\node[] (spstLabel) at (1.5*\idtestwidth+0.02\linewidth,3.55) {{SPST (ours)}};
\node[] (cdLabel) at (2.5*\idtestwidth+2*0.02\linewidth,3.55) {{CD}};
\node[] (urstLabel) at (3.5*\idtestwidth+3*0.02\linewidth,3.55) {{URST}};
\node[anchor = north west] at
(0,-0.02\linewidth){\includegraphics[width=\idtestwidth]{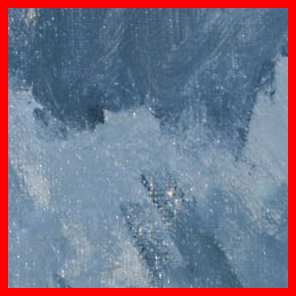}};
\node[anchor = north west] at
(\idtestwidth+0.02\linewidth,-0.02\linewidth){\includegraphics[width=\idtestwidth]{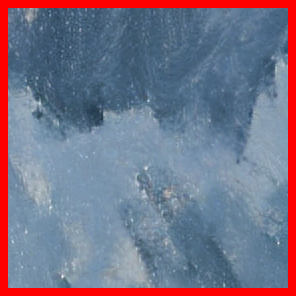}};
\node[anchor = north west] at
(2*\idtestwidth+2*0.02\linewidth,-0.02\linewidth){\includegraphics[width=\idtestwidth]{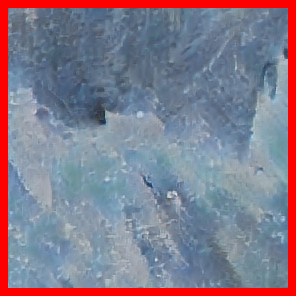}};
\node[anchor = north west] at
(3*\idtestwidth+3*0.02\linewidth,-0.02\linewidth){\includegraphics[width=\idtestwidth]{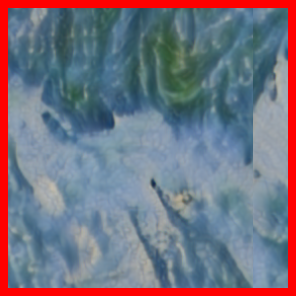}};
\end{tikzpicture}

\vspace{0.015\linewidth}

\begin{tikzpicture}[spy using outlines={rectangle,magnification=14,height=\idtestwidth,width=\idtestwidth,
    every spy on node/.append style={thick}}, every node/.style={inner sep=0,outer sep=0}]
    \path (0cm,0cm) -- (\linewidth, 0cm);
\node[anchor=south west] (style) at (0,0){\includegraphics[width=\idtestwidth]{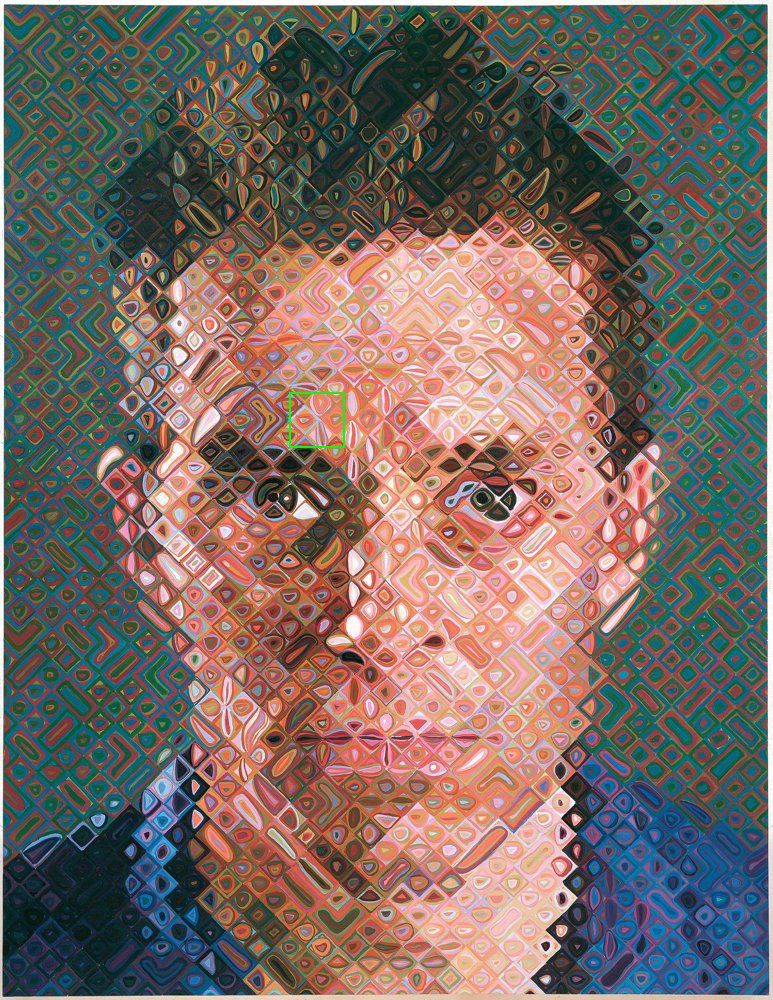}};
\node[anchor=south west]  (ours) at
    (\idtestwidth+0.02\linewidth,0){\includegraphics[width=\idtestwidth]{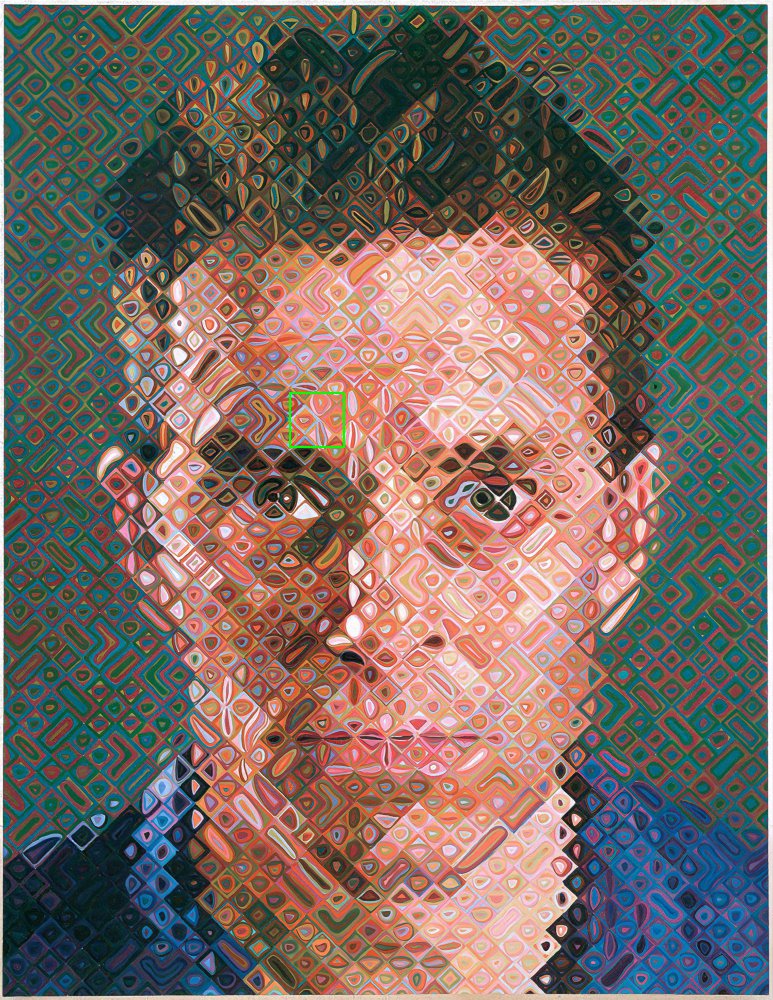}};
\node[anchor=south west]  (collab) at
    (2*\idtestwidth+2*0.02\linewidth,0){\includegraphics[width=\idtestwidth]{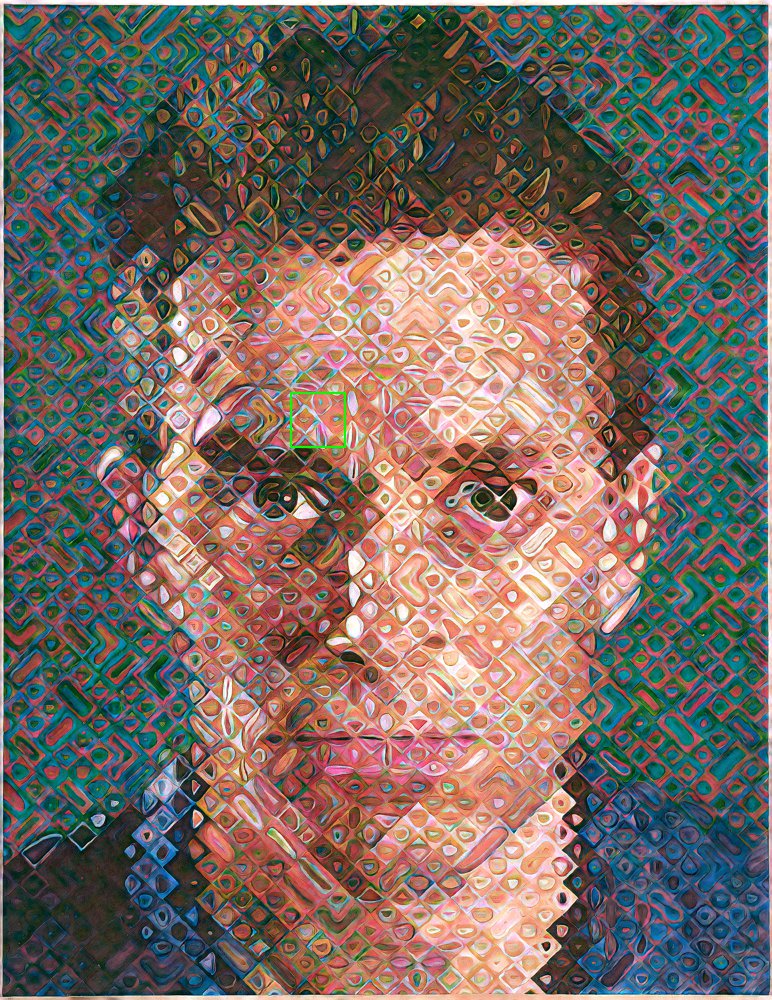}};
\node[anchor=south west] (urst) at
    (3*\idtestwidth+3*0.02\linewidth,0){\includegraphics[width=\idtestwidth]{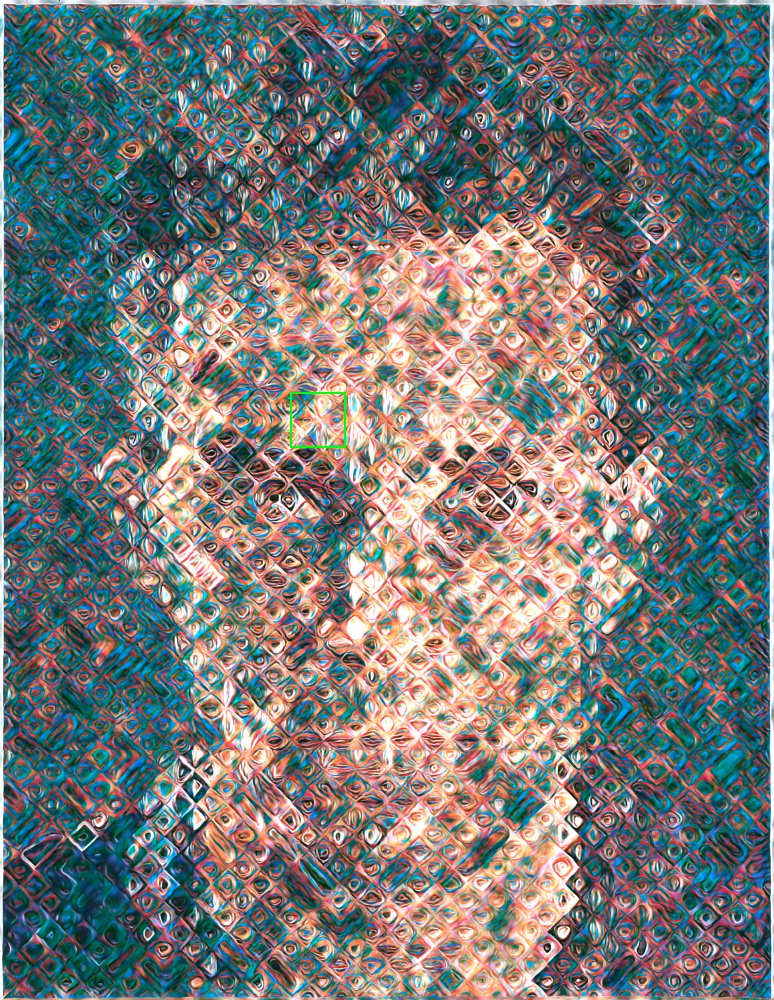}};

\node[anchor = north west] at
(0,-0.02\linewidth){\includegraphics[width=\idtestwidth]{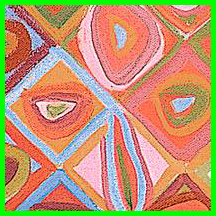}};
\node[anchor = north west] at
(\idtestwidth+0.02\linewidth,-0.02\linewidth){\includegraphics[width=\idtestwidth]{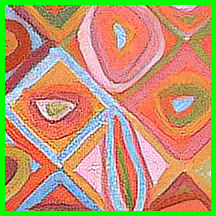}};
\node[anchor = north west] at
(2*\idtestwidth+2*0.02\linewidth,-0.02\linewidth){\includegraphics[width=\idtestwidth]{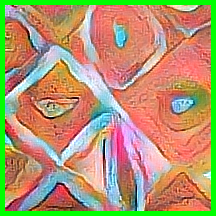}};
\node[anchor = north west] at
(3*\idtestwidth+3*0.02\linewidth,-0.02\linewidth){\includegraphics[width=\idtestwidth]{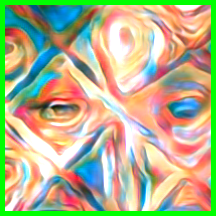}};
\end{tikzpicture}
\caption{Identity test: a style image is transferred to itself. We compare three style transfer strategies. From left to right: ground truth style, our result (SPST), CD~\cite{Wang_2020_CVPR}, URST~\cite{Chen_Wang_Xie_Lu_Luo_towards_ultra_resolution_neural_style_transfer_thumbnail_instance_normalization_AAAI2022}.
First row: The style image has resolution 3375$\times$4201; Third row: The style image has resolution 3095$\times$4000 (UHR images have been downscaled by $\times$4 factor to save memory).
Second and fourth row: Close-up view with true resolution.
Observe that our results are more faithful to the input painting and do not suffer from color blending.}
\label{fig:identity_test}
\end{figure*}

\subsection{\textcolor{coverletter}{Perceptual study}}

To further compare our results, we performed a \textcolor{coverletter}{perceptual study} comparing the fast version of our algorithm (SPST-fast) to CD~\cite{Wang_2020_CVPR} and URST~\cite{Chen_Wang_Xie_Lu_Luo_towards_ultra_resolution_neural_style_transfer_thumbnail_instance_normalization_AAAI2022}.

The \textcolor{coverletter}{perceptual study} consisted of several evaluation instances, each of which compared four images: the style used for the transfer and the results of the three methods (SPSt-fast, URST, and CD), which were displayed at random positions for each evaluation instance. Each participant was asked to select the result closest to the style of the style image among the three displayed results.

Participants were presented three types of experiments, each of which had five instances to evaluate, thus yielding a total of 15 instances to evaluate per candidate. The results were saved only if the participant conducted the whole test. The first two experiments aim to compare the results of our identity test. In one case, the overall performance of the methods is evaluated by displaying the complete results at a resolution of 1280$\times$720, and in the other case, the performance of the methods on fine details is evaluated by displaying a close-up of the results at a size of 512$\times$512. For the identity test, 79 painting styles were used and each participant was shown five random instances for the global evaluation and another five for the detail evaluation. The third experiment aims to compare the results of the three methods when transferring a painting style image to a generic content image. Only the overall performance of the methods is compared displaying the whole results at a resolution of 1280$\times$720. In this case, 13 pairs of style/content images were used, and five instances were randomly shown to each participant.

A total of 61 participants took the test, yielding a total of 305 evaluations for each type of experiment. All invited   participants  were image processing experts in academy and industry. The results of the study are shown in Table~\ref{tab:user-study}.
They confirm that our approach, both for the identity test (global and close-up) and the transfer of a painting style to any image, is by far superior to CD and URST in terms of visual quality: Our method is considered better more than 90$\%$ of the times as the one that better reproduces the style of the painting (Table~\ref{tab:user-study} third column).

\begin{table}[t]
    \centering
    \begin{tabular}{@{}lccc@{}}
    \toprule
     & \multicolumn{3}{c}{Voting results (\%)}\tabularnewline
     & Id global & Id detail & Style transfer \tabularnewline
    \midrule

    CD & \underline{6.56} &\underline{22.95} & \underline{4.92} \tabularnewline
    URST & 0.33 & 2.29 & 4.26 \tabularnewline
    SPST-fast & \textbf{93.11} & \textbf{74.75} & \textbf{90.82} \tabularnewline
    \bottomrule
    \end{tabular}
    \caption{
    This \textcolor{coverletter}{perceptual study} results shows the percentage of times each method was selected out of the 305 comparisons for each experiment. Best results are in bold, second best underlined.
    }
    \label{tab:user-study}
\end{table}



\section{\textcolor{coverletter}{Conclusion}}

This work presented the SPST algorithm, a provably correct extension of 
the Gatys \emph{et al.} style transfer algorithm to UHR images.
Regarding visual quality, our algorithm outperforms competing UHR methods by conveying a true painting feel thanks to faithful
HR details such as strokes, paint cracks, and canvas texture.
This is clearly supported by our \textcolor{coverletter}{perceptual study} and our proposed quantitative \emph{identity test}.
SPST also allows for the synthesis by example of high-quality UHR textures.
While the baseline SPST method can become prohibitively slow, even though its complexity scales linearly with image size,
we proposed a faster alternative SPST-fast that limits computations as the scale grows by exploiting the stability of multiscale style transfer.

As we have demonstrated, very fast methods do not reach a satisfying quality. They fail our proposed identity test due to the presence of many artifacts, and our results are considered more faithful to the style image by a vast majority of users.
This work also leads to conclude that very fast high-quality style transfer remains an open problem and that our results provide a new standard to assess the overall quality of such algorithms.


This work opens the way for several future research directions, from allowing local control for UHR style transfer~\cite{Gatys_etal_Controlling_perceptual_factors_in_neural_style_transfer_CVPR2017} to training fast CNN-based models to reproduce our results.
Another promising direction is to extend our framework to video or radiance fields style transfer for which reaching ultra-high resolution would be beneficial.

\bigskip
{\noindent\textbf{Acknowledgements:} B. Galerne and L. Raad acknowledge the support of the project MISTIC (ANR-19-CE40-005).}
\bigskip

{\noindent\textbf{Supplementary material:}
A full preprint version with higher image quality and  complete supplementary material is available here:\\%
\url{https://hal.archives-ouvertes.fr/hal-03897715/en}}


\printbibliography

@incollection{Gatys_et_al_texture_synthesis_using_CNN_2015,
title = {Texture Synthesis Using Convolutional Neural Networks},
author = {Gatys, L. and Ecker, A. S. and Bethge, M.},
booktitle = {Advances in Neural Information Processing Systems 28},
pages = {262 -- 270},
year = {2015},
url = {http://papers.nips.cc/paper/5633-texture-synthesis-using-convolutional-neural-networks.pdf}
}

@INPROCEEDINGS{Gatys_et_al_image_style_transfer_cnn_cvpr2016,
author={L. A. Gatys and A. S. Ecker and M. Bethge},
booktitle={2016 IEEE Conference on Computer Vision and Pattern Recognition (CVPR)},
title={Image Style Transfer Using Convolutional Neural Networks},
year={2016},
pages={2414--2423},
doi={10.1109/CVPR.2016.265},
month={June},
}

@InProceedings{Gatys_etal_Controlling_perceptual_factors_in_neural_style_transfer_CVPR2017,
author = {Gatys, Leon A. and Ecker, Alexander S. and Bethge, Matthias and Hertzmann, Aaron and Shechtman, Eli},
title = {Controlling Perceptual Factors in Neural Style Transfer},
booktitle = {Proceedings of the IEEE Conference on Computer Vision and Pattern Recognition (CVPR)},
month = {July},
year = {2017}
}

@inproceedings{ulyanov2016texturenets,
 author = {Ulyanov, D. and Lebedev, V. and Vedaldi, A. and Lempitsky, V.},
 title = {Texture Networks: Feed-forward Synthesis of Textures and Stylized Images},
 booktitle = {ICML},
 year = {2016},
 location = {New York, NY, USA},
 pages = {1349--1357},
 numpages = {9},
 url = {http://dl.acm.org/citation.cfm?id=3045390.3045533},
 acmid = {3045533},
}

@InProceedings{Ulyanov_etal_improved_texture_networks_CVPR2017,
author = {Ulyanov, Dmitry and Vedaldi, Andrea and Lempitsky, Victor},
title = {Improved Texture Networks: Maximizing Quality and Diversity in Feed-Forward Stylization and Texture Synthesis},
booktitle = {Proceedings of the IEEE Conference on Computer Vision and Pattern Recognition (CVPR)},
month = {July},
year = {2017}
}

@InProceedings{Luan_etal_deep_photo_style_transfer_cvpr2017,
author = {Luan, Fujun and Paris, Sylvain and Shechtman, Eli and Bala, Kavita},
title = {Deep Photo Style Transfer},
booktitle = {Proceedings of the IEEE Conference on Computer Vision and Pattern Recognition (CVPR)},
month = {July},
year = {2017}
}

@article{chen2016fast,
  title={Fast patch-based style transfer of arbitrary style},
  author={Chen, Tian Qi and Schmidt, Mark},
  journal={arXiv preprint arXiv:1612.04337},
  year={2016}
}

@InProceedings{Huang_arbitrary_style_transfer_real_time_ICCV2017,
author = {Huang, Xun and Belongie, Serge},
title = {Arbitrary Style Transfer in Real-Time With Adaptive Instance Normalization},
booktitle = {Proceedings of the IEEE International Conference on Computer Vision (ICCV)},
month = {Oct},
year = {2017}
}

@article{li2017universal,
  title={Universal style transfer via feature transforms},
  author={Li, Yijun and Fang, Chen and Yang, Jimei and Wang, Zhaowen and Lu, Xin and Yang, Ming-Hsuan},
  journal={Advances in neural information processing systems},
  volume={30},
  pages={386--396},
  year={2017}
}

@inproceedings{sheng2018avatar,
  title={Avatar-net: Multi-scale zero-shot style transfer by feature decoration},
  author={Sheng, Lu and Lin, Ziyi and Shao, Jing and Wang, Xiaogang},
  booktitle={Proceedings of the IEEE Conference on Computer Vision and Pattern Recognition},
  pages={8242--8250},
  year={2018}
}

@inproceedings{li2019learning,
  title={Learning linear transformations for fast image and video style transfer},
  author={Li, Xueting and Liu, Sifei and Kautz, Jan and Yang, Ming-Hsuan},
  booktitle={Proceedings of the IEEE conference on computer vision and pattern recognition},
  pages={3809--3817},
  year={2019}
}

@inproceedings{park2019arbitrary,
  title={Arbitrary style transfer with style-attentional networks},
  author={Park, Dae Young and Lee, Kwang Hee},
  booktitle={Proceedings of the IEEE Conference on Computer Vision and Pattern Recognition},
  pages={5880--5888},
  year={2019}
}

@inproceedings{chiu2020iterative,
  title={Iterative Feature Transformation for Fast and Versatile Universal Style Transfer},
  author={Chiu, Tai-Yin and Gurari, Danna},
  booktitle={European Conference on Computer Vision},
  pages={169--184},
  year={2020},
  organization={Springer}
}

@incollection{snelgrove2017high,
  title={High-resolution multi-scale neural texture synthesis},
  author={Snelgrove, Xavier},
  booktitle={SIGGRAPH Asia 2017 Technical Briefs},
  pages={1--4},
  year={2017}
}

@article{gonthier2022high,
  title={High-Resolution Neural Texture Synthesis with Long-Range Constraints},
  author={Gonthier, Nicolas and Gousseau, Yann and Ladjal, Sa{\"i}d},
  journal={Journal of Mathematical Imaging and Vision},
  volume={64},
  number={5},
  pages={478--492},
  year={2022},
  publisher={Springer}
}

@inproceedings{johnson2016Perceptual,
  title={Perceptual losses for real-time style transfer and super-resolution},
  author={Johnson, Justin and Alahi, Alexandre and Fei-Fei, Li},
  booktitle={European Conference on Computer Vision},
  pages={694--711},
  year={2016},
  doi = {10.1007/978-3-319-46475-6_43},
}

@inproceedings{li2016precomputed,
  title={Precomputed real-time texture synthesis with markovian generative adversarial networks},
  author={Li, Chuan and Wand, Michael},
  booktitle={European conference on computer vision},
  pages={702--716},
  year={2016},
  organization={Springer}
}

@InProceedings{Wang_2020_CVPR,
author = {Wang, Huan and Li, Yijun and Wang, Yuehai and Hu, Haoji and Yang, Ming-Hsuan},
title = {Collaborative Distillation for Ultra-Resolution Universal Style Transfer},
booktitle = {IEEE/CVF Conference on Computer Vision and Pattern Recognition (CVPR)},
month = {June},
year = {2020}
}

@InProceedings{Simonyan_Zisserman_VGG_ICLR15,
  author       = "Karen Simonyan and Andrew Zisserman",
  title        = "Very Deep Convolutional Networks for Large-Scale Image Recognition",
  booktitle    = "International Conference on Learning Representations",
  year         = "2015",
}

@InProceedings{Heitz_slices_Wassestein_loss_neural_texture_synthesis_CVPR2021,
    author    = {Heitz, Eric and Vanhoey, Kenneth and Chambon, Thomas and Belcour, Laurent},
    title     = {A Sliced Wasserstein Loss for Neural Texture Synthesis},
    booktitle = {Proceedings of the IEEE/CVF Conference on Computer Vision and Pattern Recognition (CVPR)},
    month     = {June},
    year      = {2021},
    pages     = {9412-9420}
}

@inproceedings{Chen_Wang_Xie_Lu_Luo_towards_ultra_resolution_neural_style_transfer_thumbnail_instance_normalization_AAAI2022,
title={Towards Ultra-Resolution Neural Style Transfer via Thumbnail Instance Normalization},
volume={36},
DOI={10.1609/aaai.v36i1.19916},
number={1},
booktitle={Proceedings of the AAAI Conference on Artificial Intelligence},
author={Chen, Zhe and Wang, Wenhai and Xie, Enze and Lu, Tong and Luo, Ping},
year={2022},
month={Jun.},
pages={393-400}
}

@inproceedings{Vacher_etal_texture_interpolation_probing_visual_perception_NEURIPS2020,
 author = {Vacher, Jonathan and Davila, Aida and Kohn, Adam and Coen-Cagli, Ruben },
 booktitle = {Advances in Neural Information Processing Systems},
 editor = {H. Larochelle and M. Ranzato and R. Hadsell and M.F. Balcan and H. Lin},
 pages = {22146--22157},
 publisher = {Curran Associates, Inc.},
 title = {Texture Interpolation for Probing Visual Perception},
 url = {https://proceedings.neurips.cc/paper/2020/file/fba9d88164f3e2d9109ee770223212a0-Paper.pdf},
 volume = {33},
 year = {2020}
}

@article{Nocedal_updating_Quasi-Newton_matrices_with_limited_storage_1980,
 ISSN = {00255718, 10886842},
 DOI = {10.1090/S0025-5718-1980-0572855-7 },
 author = {Jorge Nocedal},
 journal = {Mathematics of Computation},
 number = {151},
 pages = {773--782},
 publisher = {American Mathematical Society},
 title = {Updating Quasi-Newton Matrices with Limited Storage},
 volume = {35},
 year = {1980}
}

@inproceedings{Wei_Levoy_fast_texture_synthesis_2000,
author = {L. Y. Wei and M. Levoy},
title = {Fast texture synthesis using tree-structured vector quantization},
booktitle = {SIGGRAPH '00},
year = {2000},
pages = {479--488},
doi = {10.1145/344779.345009},
publisher = {ACM Press/Addison-Wesley Publishing Co.},
}

@article{Sendik_deep_correlations_texture_synthesis_SIGGRAPH2017,
author = {Sendik, Omry and Cohen-Or, Daniel},
title = {Deep Correlations for Texture Synthesis},
year = {2017},
issue_date = {October 2017},
publisher = {Association for Computing Machinery},
address = {New York, NY, USA},
volume = {36},
number = {5},
issn = {0730-0301},
doi = {10.1145/3015461},
journal = {ACM Trans. Graph.},
month = {jul},
articleno = {161},
numpages = {15},
keywords = {neural networks, Texture synthesis, autocorrelation}
}

@inproceedings{Lu_Zhu_Wu_Deepframe_AAAI2016,
    title={Learning FRAME Models Using CNN Filters},
    author={Lu, Yang and Zhu, Song-chun and Wu, Ying Nian},
    booktitle={Thirtieth AAAI Conference on Artificial Intelligence},
    year={2016}
}

@article{DeBortoli_et_al_maximum_entropy_methods_texture_synthesis_SIMODS2021,
author = {De Bortoli, Valentin and Desolneux, Agn\`{e}s and Durmus, Alain and Galerne, Bruno and Leclaire, Arthur},
title = {Maximum Entropy Methods for Texture Synthesis: Theory and Practice},
journal = {SIAM Journal on Mathematics of Data Science},
volume = {3},
number = {1},
pages = {52-82},
year = {2021},
doi = {10.1137/19M1307731},
}

@inproceedings{Hertzmann_etal_image_analogies_SIGGRAPH2001,
author = {Hertzmann, Aaron and Jacobs, Charles E. and Oliver, Nuria and Curless, Brian and Salesin, David H.},
title = {Image Analogies},
year = {2001},
isbn = {158113374X},
publisher = {Association for Computing Machinery},
address = {New York, NY, USA},
doi = {10.1145/383259.383295},
booktitle = {Proceedings of the 28th Annual Conference on Computer Graphics and Interactive Techniques},
pages = {327–340},
numpages = {14},
keywords = {texture transfer, texture-by-numbers, Markov random fields, texture synthesis, non-photorealistic rendering, autoregression, example-based rendering},
series = {SIGGRAPH '01}
}

@article{Aubry_etal_fast_local_laplacian_filters_TOG2014,
author = {Aubry, Mathieu and Paris, Sylvain and Hasinoff, Samuel W. and Kautz, Jan and Durand, Fr\'{e}do},
title = {Fast Local Laplacian Filters: Theory and Applications},
year = {2014},
issue_date = {August 2014},
publisher = {Association for Computing Machinery},
address = {New York, NY, USA},
volume = {33},
number = {5},
issn = {0730-0301},
doi = {10.1145/2629645},
journal = {ACM Trans. Graph.},
month = {sep},
articleno = {167},
numpages = {14},
keywords = {computational photography, photographic style transfer, Laplacian pyramid, bilateral filter, image processing, Photo editing}
}

@misc{Risser_etal_stable_and_controllable_neural_texture_synthesis_and_style_transfer_Arxiv2017,
  doi = {10.48550/ARXIV.1701.08893},
  author = {Risser, Eric and Wilmot, Pierre and Barnes, Connelly},
  title = {Stable and Controllable Neural Texture Synthesis and Style Transfer Using Histogram Losses},
  publisher = {arXiv},
  year = {2017},
  copyright = {arXiv.org perpetual, non-exclusive license}
}

@article{an2020real,
  title={Real-time universal style transfer on high-resolution images via zero-channel pruning},
  author={An, Jie and Li, Tao and Huang, Haozhi and Shen, Li and Wang, Xuan and Tang, Yongyi and Ma, Jinwen and Liu, Wei and Luo, Jiebo},
  journal={arXiv preprint arXiv:2006.09029},
  year={2020}
}

@InProceedings{Szegedy_2015_CVPR,
author = {Szegedy, Christian and Liu, Wei and Jia, Yangqing and Sermanet, Pierre and Reed, Scott and Anguelov, Dragomir and Erhan, Dumitru and Vanhoucke, Vincent and Rabinovich, Andrew},
title = {Going Deeper With Convolutions},
booktitle = {Proceedings of the IEEE Conference on Computer Vision and Pattern Recognition (CVPR)},
month = {June},
year = {2015}
}

@ARTICLE{ssim,
  author={Zhou Wang and Bovik, A.C. and Sheikh, H.R. and Simoncelli, E.P.},
  journal={IEEE Transactions on Image Processing},
  title={Image quality assessment: from error visibility to structural similarity},
  year={2004},
  volume={13},
  number={4},
  pages={600-612},
  doi={10.1109/TIP.2003.819861}}

@InProceedings{Zhang_2018_CVPR,
author = {Zhang, Richard and Isola, Phillip and Efros, Alexei A. and Shechtman, Eli and Wang, Oliver},
title = {The Unreasonable Effectiveness of Deep Features as a Perceptual Metric},
booktitle = {Proceedings of the IEEE Conference on Computer Vision and Pattern Recognition (CVPR)},
month = {June},
year = {2018}
}

@article{Texler_etal_arbitrary_style_transfer_using_neurally_guided_patch_synthesis_CG2020,
title = {Arbitrary style transfer using neurally-guided patch-based synthesis},
journal = {Computers \& Graphics},
volume = {87},
pages = {62-71},
year = {2020},
issn = {0097-8493},
doi = {https://doi.org/10.1016/j.cag.2020.01.002},
author = {Ondřej Texler and David Futschik and Jakub Fišer and Michal Lukáč and Jingwan Lu and Eli Shechtman and Daniel Sýkora},
keywords = {Style transfer, Neural networks, Example-based, Patch-based synthesis},
}

@InProceedings{Zhang_etal_arf_artistic_radiance_fields_ECCV2022,
author="Zhang, Kai
and Kolkin, Nick
and Bi, Sai
and Luan, Fujun
and Xu, Zexiang
and Shechtman, Eli
and Snavely, Noah",
title="{ARF}: Artistic Radiance Fields",
booktitle="ECCV 2022",
year="2022"
}

@article{Wang_etal_MicroAST_AAAI2023, 
title={MicroAST: Towards Super-fast Ultra-Resolution Arbitrary Style Transfer}, 
volume={37}, 
DOI={10.1609/aaai.v37i3.25374}, 
number={3}, 
journal={Proceedings of the AAAI Conference on Artificial Intelligence}, 
author={Wang, Zhizhong and Zhao, Lei and Zuo, Zhiwen and Li, Ailin and Chen, Haibo and Xing, Wei and Lu, Dongming}, 
year={2023}, 
month={Jun.}, 
pages={2742-2750}
}

@inproceedings{ruder2016artistic,
  title={Artistic style transfer for videos},
  author={Ruder, Manuel and Dosovitskiy, Alexey and Brox, Thomas},
  booktitle={Pattern Recognition: 38th German Conference, GCPR 2016, Hannover, Germany, September 12-15, 2016, Proceedings 38},
  pages={26--36},
  year={2016},
  organization={Springer}
}

@inproceedings{Li_etal_demystifying_neural_style_transfer_IJCAI2017,
  author    = {Yanghao Li and Naiyan Wang and Jiaying Liu and Xiaodi Hou},
  title     = {Demystifying Neural Style Transfer},
  booktitle = {Proceedings of the Twenty-Sixth International Joint Conference on
               Artificial Intelligence, {IJCAI-17}},
  pages     = {2230--2236},
  year      = {2017},
  doi       = {10.24963/ijcai.2017/310},
}

@ARTICLE{Jing_etal_Neural_style_transfer_a_review_TVCG2020,
  author={Jing, Yongcheng and Yang, Yezhou and Feng, Zunlei and Ye, Jingwen and Yu, Yizhou and Song, Mingli},
  journal={IEEE Transactions on Visualization and Computer Graphics}, 
  title={Neural Style Transfer: A Review}, 
  year={2020},
  volume={26},
  number={11},
  pages={3365-3385},
  doi={10.1109/TVCG.2019.2921336},
  }

@article{fruhstuck2019tilegan,
  title={Tilegan: synthesis of large-scale non-homogeneous textures},
  author={Fr{\"u}hst{\"u}ck, Anna and Alhashim, Ibraheem and Wonka, Peter},
  journal={ACM Transactions on graphics (TOG)},
  volume={38},
  number={4},
  pages={1--11},
  year={2019},
  publisher={ACM New York, NY, USA}
}

@article{lin2021infinitygan,
  title={InfinityGAN: Towards infinite-pixel image synthesis},
  author={Lin, Chieh Hubert and Lee, Hsin-Ying and Cheng, Yen-Chi and Tulyakov, Sergey and Yang, Ming-Hsuan},
  journal={arXiv preprint arXiv:2104.03963},
  year={2021}
}

@inproceedings{karras2020analyzing,
  title={Analyzing and improving the image quality of stylegan},
  author={Karras, Tero and Laine, Samuli and Aittala, Miika and Hellsten, Janne and Lehtinen, Jaakko and Aila, Timo},
  booktitle={Proceedings of the IEEE/CVF conference on computer vision and pattern recognition},
  pages={8110--8119},
  year={2020}
}

\end{document}